\newenvironment{proof}{\par\noindent{\bf Proof\ }}{\hfill\BlackBox\\[2mm]}
\newtheorem{theorem}{Theorem}
\newtheorem{lemma}[theorem]{Lemma}
\newtheorem{corollary}[theorem]{Corollary}
\newtheorem{assumption}{Assumption}
\newtheorem{proposition}[theorem]{Proposition}
\newtheorem{remark}{Remark}
\newcommand{\RN}[1]{%
	\textup{\lowercase\expandafter{\it \romannumeral#1}}%
}
\begin{document}
%
\title{A Convergence Analysis for A Class of \\Practical Variance-Reduction Stochastic Gradient MCMC}
\author{Changyou Chen$^1$, Wenlin Wang$^2$, Yizhe Zhang$^2$, Qinliang Su$^2$, Lawrence Carin$^2$\\
$^1$University at Buffalo; $^2$Duke University
}
\maketitle
\begin{abstract}
Stochastic gradient Markov Chain Monte Carlo (SG-MCMC) has been developed as a flexible family of scalable Bayesian sampling algorithms. However, there has been little theoretical analysis of the impact of minibatch size to the algorithm's convergence rate. In this paper, we prove that under a limited computational budget/time, a larger minibatch size leads to a faster decrease of the mean squared error bound (thus the fastest one corresponds to using full gradients), which motivates the necessity of variance reduction in SG-MCMC. Consequently, by borrowing ideas from stochastic optimization, we propose a practical variance-reduction technique for SG-MCMC, that is efficient in both computation and storage. We develop theory to prove that our algorithm induces a faster convergence rate than standard SG-MCMC. A number of large-scale experiments, ranging from Bayesian learning of logistic regression to deep neural networks, validate the theory and demonstrate the superiority of the proposed variance-reduction SG-MCMC framework.
\end{abstract}

\noindent 

\section{Introduction}
With the increasing size of datasets of interest to machine learning, stochastic gradient Markov Chain Monte Carlo (SG-MCMC) has been established as an effective tool for large-scale Bayesian learning, with applications in topic modeling \cite{GanCHCC:icml15,LiuZS:NIPS16}, matrix factorization \cite{ChenFG:ICML14,DingFBCSN:NIPS14,SBCG:ICML16}, differential privacy \cite{WangFS:icml15}, Bayesian optimization \cite{SpringenbergKFH:NIPS16} and deep neural networks \cite{PSGLD:AAAI16}. Typically, in each iteration of an SG-MCMC algorithm, a minibatch of data is used to generate the next sample, yielding computational efficiency comparable to stochastic optimization. While a large number of SG-MCMC algorithms have been proposed, their optimal convergence rates generally appear to share the same form, and are typically slower than stochastic gradient descent (SGD) \cite{ChenDC:NIPS15}. The impact of stochastic gradient noise comes from a higher-order term (see Lemma~\ref{lem:biasmse} below), which was omitted in the analysis of \cite{ChenDC:NIPS15}. In other words, current theoretical analysis for SG-MCMC does not consider the impact of minibatch size (corresponding to stochastic gradient noise), making the underlying convergence theory w.r.t.\! minibatch size unclear. Recent work by \cite{DubeyRPSX:nips16} on applying variance reduction in stochastic gradient Langevin dynamics (SGLD) claims to improve the convergence rate of standard SGLD \cite{WellingT:ICML11,TehTV:arxiv14,VollmerZT:arxiv15}. 

The theoretical analysis in \cite{DubeyRPSX:nips16} omits certain aspects of variance reduction in SGLD, that we seek to address here: $\RN{1}$) how does the minibatch size (or equivalently the stochastic gradient noise) affect the convergence rate of an SG-MCMC algorithm? and $\RN{2}$) how can one effectively reduce the stochastic gradient noise in SG-MCMC to improve its convergence rate, from both an algorithmic and a theoretical perspective? For $(\RN{1})$, we provide theoretical results on the convergence rates of SG-MCMC w.r.t.\! minibatch size. For $(\RN{2})$, we propose a practical variance-reduction technique for SG-MCMC, as well as theory to analyze improvements of the corresponding convergence rates. The resulting SG-MCMC algorithm is referred to as variance-reduction SG-MCMC (vrSG-MCMC).

For a clearer description, we first define notation. In a Bayesian model, our goal is typically to evaluate the {\em posterior average} of a test function $\phi(\xb)$,
defined as
$\bar{\phi} \triangleq \int_{\mathcal{X}} \phi(\xb) \rho(\xb) \mathrm{d}\xb$,
where $\rho(\xb)$ is the target posterior distribution with $\xb$ the possibly augmented
model parameters (see Section~\ref{sec:prelim}). Let $\{\xb_{l}\}_{l=1}^L$ be the samples generated from an SG-MCMC algorithm. We use the {\em sample average}, 
$\hat{\phi}_L \triangleq \frac{1}{L} \sum_{l = 1}^L \phi(\xb_{l})$, to approximate $\bar{\phi}$. The corresponding {\em bias} and mean square error ({\em MSE}) are defined as $|\mathbb{E}\hat{\phi}_L - \bar{\phi}|$ and $\mathbb{E}\left(\hat{\phi}_L - \bar{\phi}\right)^2$, respectively. In vrSG-MCMC, unbiased estimations of full gradients are used, leading to the same bias bound as standard SG-MCMC \cite{ChenDC:NIPS15}. As a result, we focus here on analyzing the MSE bound for vrSG-MCMC.

Specifically, we first analyze how minibatch size affects the MSE convergence rate of standard SG-MCMC, summarized in two cases: $\RN{1}$) for a limited computation budget, the optimal MSE bound is achieved when using full gradients in the algorithm; $\RN{2}$) for a large enough computational budget, {\it i.e.}, in a long-run setting, stochastic gradients with minibatches of size one are preferable. This indicates that stochastic gradient noise hurts SG-MCMC at the beginning of the algorithm. While it is computationally infeasible to use full gradients in practice, a remedy to overcome this issue is to use relatively small minibatches with variance reduction techniques to reduce stochastic gradient noise. Consequently, we propose a practical variance-reduction scheme, making SG-MCMC computationally efficient in a big-data setting. Finally, we develop theory to analyze the benefit of the proposed variance-reduction technique and empirically show improvements of vrSG-MCMC over standard SG-MCMC algorithms.

\section{Preliminaries}\label{sec:prelim}


SG-MCMC is a family of scalable Bayesian sampling algorithms, developed recently to generate approximate samples from a posterior distribution $p(\thetab | \Db)$. 
Here $\thetab \in \mathbb{R}^r$ represents a model parameter vector and  $\Db = \{\db_1, \cdots, \db_N\}$ represents the data available to learn the model.
In general, SG-MCMC algorithms are discretized numerical approximations of continuous-time It\^{o} diffusions \cite{ChenDC:NIPS15,MaCF:NIPS15}, which are equipped with stationary distributions coincident with the target posterior distributions. An It\^{o} diffusion is written as
\begin{align}
\mathrm{d}\xb_t &= F(\xb_t)\mathrm{d}t + g(\xb_t)\mathrm{d}\mathcal{\wb}_t~, \label{eq:itodiffusion}
\end{align}
where $\xb \in \mathbb{R}^d$ is the state variable, $t$ is the time index, and $\mathcal{\wb}_t \in \mathbb{R}^d$ is $d$-dimensional Brownian motion. Typically, $\xb \supseteq \thetab$ is an augmentation of the model parameters, so  $r \le d$. Functions $F: \mathbb{R}^d \to \mathbb{R}^d$ and $g: \mathbb{R}^d \rightarrow \mathbb{R}^{d\times d}$ are assumed to satisfy the Lipschitz continuity condition \cite{Ghosh:book11}.

According to \cite{MaCF:NIPS15}, all SG-MCMC algorithms can be formulated by defining appropriate functions $F$ and $g$ in \eqref{eq:itodiffusion}. For example, the stochastic gradient Langevin dynamic (SGLD) model corresponds to $\xb = \thetab$, and $F(\xb_t) = -\nabla_{\thetab} U(\thetab), \hspace{0.1cm} g(\xb_t) = \sqrt{2}\Ib_r$, where $U(\thetab) \triangleq -\log p(\thetab) - \sum_{i=1}^N \log p(\db_i | \thetab)$ denotes the unnormalized negative log-posterior. Similar formula can be defined for other SG-MCMC algorithms, such as stochastic gradient Hamiltonian Monte Carlo (SGHMC) \cite{ChenFG:ICML14} and stochastic gradient thermostats (SGNHT) \cite{DingFBCSN:NIPS14}.

An SG-MCMC algorithm is usually developed by numerically solving the corresponding It\^{o} diffusion and replacing the full gradient $\nabla_\thetab U(\thetab)$ with an unbiased estimate from a minibatch of data $\nabla_\thetab \tilde{U}(\thetab)$ in each iteration. For example, in SGLD, this yields an update equation of $\thetab_l = \thetab_{l-1} - \nabla_\thetab \tilde{U}(\thetab_{l-1}) h_l + \sqrt{2h_l}\zeta_{l}$ for the $l$-th iteration, where $h_l$ is the stepsize, $\zeta_l \sim N(\mathbf{0}, \Ib_r)$. This brings two sources of error into the chain: numerical error (from discretization of the differential equation) and stochastic noise error from use of minibatches. In particular, \cite{ChenDC:NIPS15} proved 
the following bias and MSE bounds for general SG-MCMC algorithms:

\begin{lemma}[\cite{ChenDC:NIPS15}]\label{lem:biasmse}
	Under Assumption~\ref{ass:assumption1} in Appendix~\ref{app:ass}, the bias and MSE 
	of SG-MCMC with a $K$th-order integrator\footnote{The order characterizes the accuracy of a numerical integrator, {\it e.g.}, the Euler method is a 1st-order integrator.} at time $t = hL$ are bounded as:
	{\small\begin{align*}
		\left|\mathbb{E}\hat{\phi}_L - \bar{\phi}\right| &= O\left(\frac{\sum_l \left\|\mathbb{E}\Delta V_l\right\|}{L} + \frac{1}{Lh} + h^K\right) \\
		\mathbb{E}\left(\hat{\phi}_L - \bar{\phi}\right)^2 &= O \left(\frac{\frac{1}{L}\sum_l\mathbb{E}\left\|\Delta V_l\right\|^2}{L} + \frac{1}{Lh} + h^{2K}\right)
		\end{align*}}
\end{lemma}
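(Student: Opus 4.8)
The plan is to use the standard Poisson-equation device for ergodic averages of diffusions, following \cite{VollmerZT:arxiv15}, adapted to a numerical integrator driven by stochastic gradients. Write $\mathcal{L}$ for the generator of the It\^{o} diffusion \eqref{eq:itodiffusion}, $\mathcal{L}f \triangleq F\cdot\nabla f + \tfrac{1}{2}(g g^{\top})\!:\!\nabla^2 f$, and let $\psi$ solve the Poisson equation $\mathcal{L}\psi = \phi - \bar\phi$. Assumption~\ref{ass:assumption1} is precisely the hypothesis under which such a $\psi$ exists and is well behaved --- $\psi$, its derivatives to the needed order, and the iterated functions $\mathcal{L}^{j}\psi$ are bounded with controlled growth along the chain --- and this regularity of $\psi$ is the analytic input used throughout. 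Let $\tilde{\mathcal{L}}_l$ denote the ``noisy'' generator simulated at iteration $l$, namely $\mathcal{L}$ with $\nabla_{\thetab}U$ replaced by its minibatch estimate, and set $\Delta V_l \triangleq \tilde{\mathcal{L}}_l - \mathcal{L}$, the first-order operator carrying the stochastic-gradient error at step $l$.

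First I would expand $\psi$ along one step of the integrator. By $K$th-order accuracy the one-step conditional expectation matches the true semigroup up to $O(h^{K+1})$, so in particular
\begin{align*}
\mathbb{E}\!\left[\psi(\xb_l)\mid\xb_{l-1}\right] = \psi(\xb_{l-1}) + h\,\tilde{\mathcal{L}}_l\psi(\xb_{l-1}) + O(h^{2}),
\end{align*}
with a uniform remainder by the regularity above. Using $\tilde{\mathcal{L}}_l\psi = (\phi - \bar\phi) + \Delta V_l\psi$, dividing by $h$, writing $\psi(\xb_l) = \mathbb{E}[\psi(\xb_l)\mid\xb_{l-1}] + h\,m_l$ with $\{m_l\}$ a martingale-difference sequence, and then averaging and telescoping over $l=1,\dots,L$ gives the master identity
\begin{align*}
\hat\phi_L - \bar\phi = \frac{\psi(\xb_L) - \psi(\xb_0)}{Lh} - \frac{1}{L}\sum_{l=1}^{L}\Delta V_l\psi(\xb_{l-1}) - \frac{1}{L}\sum_{l=1}^{L}m_l + E_{\mathrm{disc}},
\end{align*}
where $E_{\mathrm{disc}}$ is the accumulated discretization error. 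Naively $E_{\mathrm{disc}} = O(h)$; one sharpens it to $O(h^{K})$ by iterating the Poisson device on the lower-order discretization terms --- each a multiple of an ergodic average $\tfrac1L\sum_l\mathcal{L}^{j}\psi(\xb_{l-1})$ with zero $\rho$-mean, hence itself $O(1/(Lh))+O(h)$ --- peeling off powers of $h$ up to the integrator's $O(h^{K+1})$ floor.

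For the bias I would take expectations of the master identity: boundedness of $\psi$ makes the boundary term $O(1/(Lh))$, the martingale average has mean zero, the noise term is $O\bigl(\tfrac1L\sum_l\|\mathbb{E}\Delta V_l\|\bigr)$ via a uniform bound on $\nabla\psi$ (and vanishes for unbiased minibatch gradients), and $\mathbb{E}E_{\mathrm{disc}} = O(h^{K})$, which together give the bias bound. For the MSE I would square the master identity and use Cauchy--Schwarz across the four pieces: the martingale average $\tfrac1L\sum_l m_l$ contributes $O(1/(Lh))$ (the boundary-term square $O(1/(L^2h^2))$ being dominated by it) and $E_{\mathrm{disc}}^2 = O(h^{2K})$, while for the noise sum I would decompose $\Delta V_l\psi(\xb_{l-1})$ into its conditional mean (zero, or lower order, under unbiasedness) and a martingale-difference part, whose uncorrelatedness across $l$ gives $\mathbb{E}\bigl(\tfrac1L\sum_l\Delta V_l\psi\bigr)^2 = O\bigl(\tfrac1{L^2}\sum_l\mathbb{E}\|\Delta V_l\|^2\bigr)$, the leading term of the claimed MSE bound. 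Collecting the pieces yields both displayed estimates.

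The main obstacle --- and where Assumption~\ref{ass:assumption1} does the real work --- is the analytic control of the Poisson solution: existence of $\psi$, uniform bounds on $\psi$ and enough of its derivatives, the analogous bounds on the iterates $\psi_2,\psi_3,\dots$ used in the recursion for $E_{\mathrm{disc}}$, and enough moment/ergodicity control on $\{\xb_l\}$ that the boundary terms, the Taylor remainders, and the martingale remainders are genuinely lower order. A secondary subtlety is keeping the martingale decomposition of the noise sum clean enough that the cross terms vanish --- which needs the minibatches drawn conditionally independently across iterations and the gradient estimates unbiased --- so that one obtains the sharp $1/L^2$ scaling in the MSE rather than only $1/L$. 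Once this regularity is in hand, the rest (tracking which powers of $h$ survive after dividing by $h$ and summing $L$ terms, and matching the integrator order to the $h^{K}$/$h^{2K}$ rates) is routine.
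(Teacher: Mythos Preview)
The paper does not actually prove Lemma~\ref{lem:biasmse}: it is quoted verbatim as a result of \cite{ChenDC:NIPS15} and used as a black box throughout (the appendix only proves the paper's own Theorem~\ref{theo:mse_batch}, Lemma~\ref{lem:exp_alpha}, Corollary~\ref{cor:optimalT}, Lemma~\ref{lem:alpha_beta}, and Theorem~\ref{theo:main}). So there is no ``paper's own proof'' to compare against here.

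That said, your sketch is the right one and is essentially the argument in \cite{ChenDC:NIPS15} (and \cite{VollmerZT:arxiv15,MattinglyST:JNA10} before it): solve the Poisson equation $\mathcal{L}\psi=\phi-\bar\phi$, expand $\psi$ along one numerical step, telescope to get a boundary term $O(1/(Lh))$, a stochastic-gradient term governed by $\Delta V_l$, a martingale remainder, and a discretization remainder; then take expectations for the bias and square with martingale orthogonality for the MSE. Your recursive peeling of the discretization error to reach $O(h^K)$ (respectively $O(h^{2K})$) is also exactly how the $K$th-order integrator enters in \cite{ChenDC:NIPS15}. One small sign/convention nit: in this paper $\Delta V_l$ is defined as $(\mathcal{L}-\tilde{\mathcal{L}}_l)\phi$, not as the operator $\tilde{\mathcal{L}}_l-\mathcal{L}$, so your line $\tilde{\mathcal{L}}_l\psi=(\phi-\bar\phi)+\Delta V_l\psi$ should read $\tilde{\mathcal{L}}_l\psi=(\phi-\bar\phi)-\Delta V_l\psi$ under the paper's convention; this only flips a sign in the master identity and does not affect the bounds.
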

Here $\Delta V_l \triangleq (\mathcal{L} - \tilde{\mathcal{L}}_l)\phi$, 
where $\mathcal{L}$ is the infinitesimal generator of the It\^{o} 
diffusion \eqref{eq:itodiffusion} defined as $\mathcal{L}f(\xb_t) = \rbr{F(\xb_t) \cdot \nabla_{\xb} + \frac{1}{2}\left(g(\xb_t) g(\xb_t)^T\right)\!:\! \nabla_{\xb}\!\nabla^T_{\xb}} f(\xb_t)$, for any compactly supported twice differentiable function $f: \mathbb{R}^d \rightarrow \mathbb{R}$. $\ab\cdot \bb \triangleq \ab^T\bb$ for two vectors $\ab$ and $\bb$, $\Ab\!:\!\Bb\triangleq \mbox{tr}\{\Ab^T\Bb\}$ for two matrices $\Ab$ and $\Bb$. $\|\cdot\|$ is defined as the standard {\em operator norm} acting on the space of bounded functions, {\it e.g.}, $\|f\| \triangleq \sup_{\xb}f(\xb)$ for a function $f$. $\tilde{\mathcal{L}}_l$ is the same as $\mathcal{L}$ except for the substitution of the stochastic gradient $\nabla \tilde{U}_{l}(\thetab)$ for the full gradient due to the usage of a stochastic gradient in the $l$-th iteration. By substituting the definition of $\Delta V_l$ and $\mathcal{L}$, typically we have $\Delta V_l = (\nabla_{\thetab}U_l(\thetab) - \nabla_{\thetab}\tilde{U}_l(\thetab))\!\cdot\!\nabla\phi$.

By using an unbiased estimate of the true gradient, the term $\mathbb{E}\Delta V_l$ in the bias bound in Lemma~\ref{lem:biasmse} vanishes, indicating that stochastic gradients (or equivalently minibatch size) only affect the MSE bound. Consequently, we focus on improving the MSE bound with the proposed variance-reduction SG-MCMC framework.

\section{Practical Variance-Reduction SG-MCMC}
We first motivate the necessity of variance reduction in SG-MCMC, by analyzing how minibatch size affects the MSE bound. A practical variance reduction scheme is then proposed, which is efficient from both computational and storage perspectives. Comparison with existing variance-reduction SG-MCMC approaches is also highlighted. Previous research has revealed that the convergence of diffusion-based MCMC scales at an order of $O(d^{1/3})$ w.r.t.\! dimension $d$ \cite{DurmusRVZ:arxiv16}. For the interest of SG-MCMC, we following standard analysis \cite{VollmerZT:arxiv15} and do not consider the impact of $d$ in our analysis.

\subsection{The necessity of variance reduction: a theoretical perspective}\label{sec:necessary_vr}
It is clear from Lemma~\ref{lem:biasmse} that the variance of noisy stochastic gradients plays an important role in the MSE bound of an SG-MCMC algorithm. What is unclear is how exactly minibatch size affects the convergence rate. Intuitively, minibatch size appears to play the following roles in SG-MCMC: $\RN{1}$) smaller minibatch sizes introduce larger variance into stochastic gradients; $\RN{2}$) smaller minibatch sizes allow an algorithm to run faster (thus more samples can be obtained in a given amount of computation time). To balance the two effects, in addition to using the standard assumptions for SG-MCMC (which basically requires the coefficients of It\^{o} diffusions to be smooth and bounded, and is deferred to Assumption~\ref{ass:assumption1} in the Appendix), we assume that the algorithms with different minibatch sizes all run for a fixed computational time/budget $T$ in the analysis, as stated in Assumption~\ref{ass:budget}.

\begin{assumption}\label{ass:budget}
	For a fair comparison, all SG-MCMC algorithms with different minibatch sizes are assumed to run for a fixed amount of computation time/budget $T$. Further, we assume that $T$ linearly depends on the minibatch size $n$ and the sample size $L$, {\it i.e.}, $T \propto nL$.
\end{assumption}

For simplicity, we rewrite the gradient of the log-likelihood for data $\db_i$ in the $l$-th iteration as: $\alphab_{li} = \nabla_{\thetab}\log p(\db_i | \thetab_{l})$. We first derive the following lemma about the property of $\{\alphab_{li}\}$, which is useful in the subsequent developments, {\it e.g.}, to guarantee a positive bound in Theorem~\ref{theo:mse_batch} and an improved bound for the proposed vrSG-MCMC (Theorem~\ref{theo:main}).

\begin{lemma}\label{lem:exp_alpha}
	Under Assumption~\ref{ass:assumption1}, given $\thetab_{l}$ in the $l$-th iteration,
	$\Gamma_l \triangleq \frac{1}{N^2}\sum_{i=1}^N\sum_{j=1}^N \mathbb{E}\left[\alphab_{li}^T\alphab_{lj}\right] - \frac{\sum_{i\neq j}\mathbb{E}\alphab_{li}^T\alphab_{lj}}{N(N-1)} \geq 0$, 
	where the expectation is taken over the randomness of an SG-MCMC algorithm\footnote{The same meaning goes for other expectations in the paper if not explicitly specified.}.
\end{lemma}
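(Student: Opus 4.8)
The plan is to collapse $\Gamma_l$ into a single ``sum of squares minus square of the sum'' quantity and then appeal to Cauchy--Schwarz. Fix the iteration $l$ and, for brevity, write $s_{ij} \triangleq \mathbb{E}[\alphab_{li}^T\alphab_{lj}]$, which is symmetric in $i,j$; set $A \triangleq \sum_{i=1}^N s_{ii} = \sum_{i=1}^N \mathbb{E}\|\alphab_{li}\|^2$ and $B \triangleq \sum_{i=1}^N\sum_{j=1}^N s_{ij} = \mathbb{E}\big\|\sum_{i=1}^N \alphab_{li}\big\|^2$, so that $\sum_{i\neq j} s_{ij} = B - A$. Substituting into the definition of $\Gamma_l$ and combining the two fractions $\tfrac{1}{N^2}$ and $-\tfrac{1}{N(N-1)}$ that multiply $B$ gives
\[
\Gamma_l \;=\; \frac{B}{N^2} - \frac{B-A}{N(N-1)} \;=\; \frac{NA - B}{N^2(N-1)}~.
\]
First I would verify this elementary identity and note that Assumption~\ref{ass:assumption1} — smoothness and boundedness of the drift and diffusion coefficients, hence of the per-datum gradients $\alphab_{li}$ — is exactly what guarantees that $s_{ij}$, $A$, and $B$ are all finite, so the rearrangement is legitimate.

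It then remains to show $NA - B \geq 0$, i.e.\ $\mathbb{E}\big\|\sum_{i=1}^N\alphab_{li}\big\|^2 \leq N\sum_{i=1}^N\mathbb{E}\|\alphab_{li}\|^2$. This holds pointwise, before taking expectations: for any vectors $v_1,\dots,v_N$ one has $\big\|\sum_{i=1}^N v_i\big\|^2 \leq N\sum_{i=1}^N\|v_i\|^2$ by Cauchy--Schwarz (equivalently, convexity of $\|\cdot\|^2$), or, even more transparently, $N\sum_{i=1}^N\|v_i\|^2 - \big\|\sum_{i=1}^N v_i\big\|^2 = \sum_{i<j}\|v_i-v_j\|^2 \geq 0$. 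Applying this with $v_i = \alphab_{li}$ and taking $\mathbb{E}$ yields $NA - B = \mathbb{E}\sum_{i<j}\|\alphab_{li}-\alphab_{lj}\|^2 \geq 0$. Since $N\geq 2$, the prefactor $\tfrac{1}{N^2(N-1)}$ is positive, hence $\Gamma_l \geq 0$, which is the claim; equality holds precisely when all $\alphab_{li}$ coincide almost surely.

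I do not anticipate any genuine obstacle: the only care needed is the bookkeeping that turns the difference of the two averages into the positive multiple of $NA-B$, and the recognition that $NA-B$ is a sum of squared pairwise differences. It is worth remarking in the write-up that $\Gamma_l$ is (up to the positive constant $\tfrac{1}{N-1}$) a variance-type quantity built from the gradients $\{\alphab_{li}\}$, which is the conceptual reason its non-negativity is forced and why this lemma later supplies the positive bound in Theorem~\ref{theo:mse_batch}.
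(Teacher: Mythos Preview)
Your argument is correct. The algebra reducing $\Gamma_l$ to $\tfrac{NA-B}{N^2(N-1)}$ checks out, and the inequality $NA\ge B$ is exactly the pointwise identity $N\sum_i\|v_i\|^2-\big\|\sum_i v_i\big\|^2=\sum_{i<j}\|v_i-v_j\|^2\ge 0$ pushed through the expectation.

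The paper, however, does not argue this way. It first carries out the minibatch computation needed for Theorem~\ref{theo:mse_batch}: introducing indicator variables $z_i$ for inclusion in the minibatch and summing them out, one obtains
\[
\mathbb{E}\|\Delta V_l\|^2 \;=\; \frac{(N-n)N^2}{n}\,\Gamma_l,
\]
and Lemma~\ref{lem:exp_alpha} then falls out in one line from $\mathbb{E}\|\Delta V_l\|^2\ge 0$ together with the positivity of the prefactor. So the paper's proof is a corollary of the Theorem~\ref{theo:mse_batch} calculation, whereas yours is self-contained and does not touch the minibatch machinery at all. Your route is arguably cleaner as a standalone lemma and makes the ``empirical variance of the per-datum gradients'' interpretation explicit via the $\sum_{i<j}\|\alphab_{li}-\alphab_{lj}\|^2$ identity; the paper's route has the advantage that no extra computation is needed once the $\mathbb{E}\|\Delta V_l\|^2$ formula is in hand, and it ties the non-negativity directly to the object that matters downstream. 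Both are ultimately the same observation that $\Gamma_l$ is a variance, just reached from opposite ends.
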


We next generalize Lemma~\ref{lem:biasmse} by incorporating the minibatch size $n$ into the MSE bound. The basic idea in our derivation is to associate with each data $\db_i$ a binary random variable, $z_i$, to indicate whether data $\db_i$ is included in the current minibatch or not. These $\{z_i\}$ depend on each other such that $\sum_{i=1}^Nz_i = n$ in order to guarantee minibatches of size $n$. Consequently, the stochastic gradient in the $l$-th iteration can be rewritten as: $\nabla_{\thetab}\tilde{U}_l(\thetab) = -\nabla_{\thetab}\log p(\thetab_l) - \frac{N}{n}\sum_{i=1}^{N}\nabla_{\thetab}\log p(\db_i|\thetab_l)z_i$. 
Substituting the above gradient formula into the proof of standard SG-MCMC \cite{ChenDC:NIPS15} and further summing out $\{z_i\}$ results in an alternative MSE bound for SG-MCMC, stated in Theorem~\ref{theo:mse_batch}. In the analysis, we assume to use a 1st-order numerical integrator for simplicity, {\it e.g.} the Euler method, though the results generalize to $K$th-order integrators easily.

\begin{theorem}\label{theo:mse_batch}
	Under Assumption~\ref{ass:assumption1}, let the minibatch size of an SG-MCMC be $n$, $\Gamma_M \triangleq \max_l \Gamma_l$. The finite-time MSE is bounded, for a constant $C$ independent of $\{h, L, n\}$, as:
	{\small\begin{align*}
		\mathbb{E}&\left(\hat{\phi}_L - \bar{\phi}\right)^2 
		\leq C\left(\frac{2(N-n)N^2\Gamma_M}{nL} + \frac{1}{Lh} + h^2\right)~.
		\end{align*}}
\end{theorem}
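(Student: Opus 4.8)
The plan is to start from the MSE bound of Lemma~\ref{lem:biasmse}, namely $\mathbb{E}(\hat{\phi}_L - \bar{\phi})^2 = O\left(\frac{1}{L^2}\sum_l \mathbb{E}\|\Delta V_l\|^2 + \frac{1}{Lh} + h^2\right)$ for a 1st-order integrator, and to evaluate the only term that depends on the minibatch size, $\frac{1}{L^2}\sum_l \mathbb{E}\|\Delta V_l\|^2$, explicitly in terms of $n$. Using the stated rewriting $\nabla_{\thetab}\tilde{U}_l(\thetab) = -\nabla_{\thetab}\log p(\thetab_l) - \frac{N}{n}\sum_{i=1}^N \alphab_{li} z_i$ together with the full gradient $\nabla_{\thetab}U_l(\thetab) = -\nabla_{\thetab}\log p(\thetab_l) - \sum_{i=1}^N \alphab_{li}$, the difference becomes $\Delta V_l = \left(\sum_{i=1}^N \alphab_{li} - \frac{N}{n}\sum_{i=1}^N \alphab_{li} z_i\right)\cdot \nabla\phi$. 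Since $\nabla\phi$ is bounded under Assumption~\ref{ass:assumption1}, up to the constant $C$ we reduce to bounding $\mathbb{E}\left\|\sum_{i=1}^N \alphab_{li} - \frac{N}{n}\sum_{i=1}^N \alphab_{li} z_i\right\|^2$, where the expectation is over both the chain randomness and the random minibatch $\{z_i\}$ with $\sum_i z_i = n$.

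The core computation is to take the expectation over $\{z_i\}$ first, conditioning on $\thetab_l$. This is a sampling-without-replacement second-moment calculation: for indices drawn uniformly among $\binom{N}{n}$ subsets of size $n$, one has $\mathbb{E}[z_i] = n/N$, $\mathbb{E}[z_i^2] = n/N$, and $\mathbb{E}[z_i z_j] = \frac{n(n-1)}{N(N-1)}$ for $i\neq j$. Expanding $\left\|\sum_i \alphab_{li} - \frac{N}{n}\sum_i \alphab_{li} z_i\right\|^2 = \sum_{i,j}\alphab_{li}^T\alphab_{lj}\left(1 - \frac{N}{n} z_i\right)\left(1 - \frac{N}{n} z_j\right)$ and substituting these moments, the $\mathbb{E}[\cdot\,|\,\thetab_l]$ should collapse, after straightforward algebra, to a quantity proportional to $\frac{N-n}{n}$ times exactly the combination $\Gamma_l = \frac{1}{N^2}\sum_{i,j}\mathbb{E}[\alphab_{li}^T\alphab_{lj}] - \frac{1}{N(N-1)}\sum_{i\neq j}\mathbb{E}[\alphab_{li}^T\alphab_{lj}]$ appearing in Lemma~\ref{lem:exp_alpha}; tracking the $N$-powers carefully yields the stated $(N-n)N^2$ numerator. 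Then taking the remaining expectation over the chain, bounding $\Gamma_l \le \Gamma_M$, and summing $\frac{1}{L^2}\sum_{l=1}^L$ gives the $\frac{2(N-n)N^2\Gamma_M}{nL}$ term. Lemma~\ref{lem:exp_alpha} is invoked precisely to guarantee this term is nonnegative, so that the right-hand side is a valid (nonnegative) bound; absorbing all remaining numerical constants into $C$ and keeping the $\frac{1}{Lh} + h^2$ terms from Lemma~\ref{lem:biasmse} unchanged completes the proof.

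The main obstacle is the bookkeeping in the without-replacement second-moment expansion: one must carefully separate the diagonal ($i=j$) and off-diagonal ($i\neq j$) contributions, handle the cross term $-\frac{N}{n}\sum_i(z_i + z_j)\alphab_{li}^T\alphab_{lj}$, and verify that the coefficients of $\sum_{i,j}\alphab_{li}^T\alphab_{lj}$ and of $\sum_{i\neq j}\alphab_{li}^T\alphab_{lj}$ combine into exactly the $\Gamma_l$ pattern rather than some other linear combination; a small error in the $n$-dependent prefactors propagates through. A secondary point to state cleanly is that the argument of Lemma~\ref{lem:biasmse} goes through verbatim with $\Delta V_l$ now a random variable also over $\{z_i\}$, since the generator-based decomposition in \cite{ChenDC:NIPS15} only uses that $\tilde{\mathcal{L}}_l$ is an unbiased perturbation of $\mathcal{L}$, which remains true after averaging over minibatches.
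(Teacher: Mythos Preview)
Your proposal is correct and follows essentially the same route as the paper: write $\Delta V_l$ via the indicator variables $z_i$, compute the sampling-without-replacement moments $\mathbb{E}z_i=n/N$ and $\mathbb{E}z_iz_j=\frac{n(n-1)}{N(N-1)}$ for $i\neq j$, expand $\mathbb{E}\|\Delta V_l\|^2$ into diagonal and off-diagonal parts, and simplify to $\frac{(N-n)N^2}{n}\Gamma_l$ before plugging into Lemma~\ref{lem:biasmse}. The paper carries out exactly this bookkeeping (computing $\mathbb{E}\bigl(1-\tfrac{N}{n}z_i\bigr)\bigl(1-\tfrac{N}{n}z_j\bigr)$ in the two cases $i=j$ and $i\neq j$), and the factor of $2$ in the statement is cosmetic since everything is inside $C$.
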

Theorem~\ref{theo:mse_batch} represents the bound in terms of minibatch size $n$ and sample size $L$. Note in our finite-time setting, $L$ and $N$ are considered to be constants. Consequently, $\Gamma_M$ is also a bounded constant in our analysis. To bring in the computational budget $T$, based on Assumption~\ref{ass:budget}, {\it e.g.}, $T \propto nL$, the optimal MSE bound w.r.t.\! stepsize $h$ in Theorem~\ref{theo:mse_batch} can be written as: $\mathbb{E}\left(\hat{\phi}_L - \bar{\phi}\right)^2 = O\left(\frac{(N-n)N^2 \Gamma_M}{T} + \frac{n^{2/3}}{T^{2/3}}\right)$. After further optimizing the bound w.r.t.\! $n$ by setting the derivative of the above MSE bound to zero, the optimal minibatch size can be written as $n = O\left(\frac{8T}{27N^6\Gamma_M^3}\right)$. To guarantee this bound for $n$ to be finite and integers, it is required that the computational budget $T$ to scale at the order of $O(N^6)$ when varying $N$. When considering both $T$ and $N$ as impact factors, the optimal $n$ becomes more interesting, and is concluded in Corollary~\ref{cor:optimalT}\footnote{Note we only have that $T = C_1nL$ for some unknown constant $C_1$, {\it i.e.}, the specific value of $T$ is unknown.}. 
\begin{corollary}\label{cor:optimalT}
	Under Assumption~\ref{ass:budget} and \ref{ass:assumption1}, we have three cases of optimal minibatch sizes, each corresponding to different levels of computational budget.
	\begin{itemize}
		\item[1)] When the computational budget is small, {\it e.g.}, $T < O\left(\frac{27}{8}\Gamma_M^3 N^6\right)$, the optimal MSE bound is decreasing w.r.t.\! $n$ in range $[1, N]$. The minimum MSE bound is achieved at $n = N$. 
		\item[2)] When the computational budget is large, {\it e.g.}, $T > O\left(\frac{27}{8}\Gamma_M^3 N^7\right)$, the optimal MSE bound is increasing w.r.t.\! $n$ in range $[1, N]$. The minimum MSE bound is achieved at $n = 1$.
		\item[3)] When the computational budget is in between the above two cases, the optimal MSE bound first increases then decreases w.r.t.\! $n$ in range $[1, N]$. The optimal MSE bound is obtained either at $n = 1$ or at $n = N$, depending on $(N, T, \Gamma_M)$.
	\end{itemize}
\end{corollary}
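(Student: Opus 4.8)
The plan is to reduce the corollary to an elementary one-variable optimization. Starting from the MSE bound of Theorem~\ref{theo:mse_batch}, I would first minimize over the stepsize $h$: the $h$-dependent terms $\frac{1}{Lh}+h^2$ are minimized at a stepsize $h\propto L^{-1/3}$, yielding a contribution of order $O(L^{-2/3})$. Substituting the budget relation of Assumption~\ref{ass:budget}, $T=C_1 nL$ (so $L=T/(C_1 n)$), turns the optimized bound into a function of the single variable $n$ (with $N,T,\Gamma_M$ as parameters),
\[
f(n)=\frac{c_1 N^2\Gamma_M\,(N-n)}{T}+\frac{c_2\, n^{2/3}}{T^{2/3}},
\]
for positive constants $c_1,c_2$ independent of $n,N,T$; this is exactly the $O(\cdot)$ expression displayed just before the corollary. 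Because the actual MSE is only upper bounded by $f$, the clauses ``the optimal MSE bound is increasing/decreasing in $n$'' are to be read as statements about the monotonicity of $f$ on $[1,N]$.

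Next I would analyze $f$. Differentiating, $f'(n)=-\frac{c_1 N^2\Gamma_M}{T}+\frac{2c_2}{3\,T^{2/3}n^{1/3}}$ and $f''(n)=-\frac{2c_2}{9\,T^{2/3}n^{4/3}}<0$, so $f$ is strictly concave on $(0,\infty)$; hence it has a unique critical point $n^*$, which is its global maximum, and $f$ is increasing on $(0,n^*]$ and decreasing on $[n^*,\infty)$. Solving $f'(n^*)=0$ gives $n^*=\left(\frac{2c_2}{3c_1}\right)^{3}\frac{T}{N^6\Gamma_M^3}$, of order $T/(N^6\Gamma_M^3)$ (matching the formula $n=O(8T/(27N^6\Gamma_M^3))$ in the text, which takes $c_1=c_2=1$). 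In particular, on any subinterval of $(0,\infty)$ the minimum of $f$ is attained at an endpoint.

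The three cases then follow by locating $n^*$ relative to $[1,N]$. (1) If $n^*\le 1$, equivalently $T\le \frac{27c_1^3}{8c_2^3}N^6\Gamma_M^3=O(N^6\Gamma_M^3)$, then $f'\le 0$ throughout $[1,N]$, so $f$ is decreasing and its minimum is at $n=N$. (2) If $n^*\ge N$, equivalently $T\ge \frac{27c_1^3}{8c_2^3}N^7\Gamma_M^3=O(N^7\Gamma_M^3)$, then $f'\ge 0$ throughout $[1,N]$, so $f$ is increasing and its minimum is at $n=1$. (3) If $1<n^*<N$ (the intermediate-budget regime), $f$ increases on $[1,n^*]$ and then decreases on $[n^*,N]$, so its minimum on $[1,N]$ is at an endpoint; comparing $f(1)=\frac{c_1 N^2\Gamma_M(N-1)}{T}+\frac{c_2}{T^{2/3}}$ with $f(N)=\frac{c_2 N^{2/3}}{T^{2/3}}$ shows the minimizer is $n=1$ or $n=N$ according to the sign of $\frac{c_1 N^2\Gamma_M(N-1)}{T}-\frac{c_2(N^{2/3}-1)}{T^{2/3}}$, i.e.\ it depends on $(N,T,\Gamma_M)$. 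I would finish by noting that $n$ is constrained to be a positive integer in $[1,N]$, but since the extrema of a concave function on an interval lie at the endpoints this restriction changes nothing, and that $\Gamma_M\ge 0$ by Lemma~\ref{lem:exp_alpha} keeps $f$ well defined, with the degenerate case $\Gamma_M=0$ giving $n^*=+\infty$, which correctly lands in case~(2).

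The only real subtlety is tracking the unspecified constants ($c_1,c_2$, and $C_1$ in $T=C_1nL$) through the argument: they shift the thresholds $\tfrac{27}{8}N^6\Gamma_M^3$ and $\tfrac{27}{8}N^7\Gamma_M^3$ only by constant factors, which is why the corollary states them inside $O(\cdot)$; the concavity of $f$ and the monotonicity structure it forces --- which is what actually drives all three cases --- is insensitive to these constants. Everything else is routine calculus.
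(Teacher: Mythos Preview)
Your proposal is correct and follows essentially the same route as the paper: optimize the Theorem~\ref{theo:mse_batch} bound over $h$, substitute $T\propto nL$, and then study the sign of the derivative of the resulting one-variable function of $n$ to locate its extrema relative to $[1,N]$. The paper's proof is terser (it simply sets $f'\lessgtr 0$ and reads off the thresholds $\tfrac{27}{8}\Gamma_M^3 N^6$ and $\tfrac{27}{8}\Gamma_M^3 N^7$), whereas you make the concavity explicit via $f''<0$ and track the constants more carefully, but the argument is the same.
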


In many machine learning applications, the computational budget is limited, leading the algorithm to the first case of Corollary~\ref{cor:optimalT}, {\it i.e.}, {\small$T < O\left(\frac{27}{8}\Gamma_M^3 N^6\right)$}. According to Corollary~\ref{cor:optimalT}, processing full data ({\it i.e.}, no minibatch) is required to achieve the optimal MSE bound, which is computationally infeasible when $N$ is large (which motivated use of minibatches in the first place). A practical way to overcome this is to use small minibatches and adopt variance-reduction techniques to reduce the stochastic gradient noise.

\subsection{A practical variance reduction algorithm}

For practical use, we require that a variance-reduction method should achieve both computational and storage efficiency. While variance reduction has been studied extensively in stochastic optimization, it is applied much less often in SG-MCMC. In this section we propose a vrSG-MCMC algorithm, a simple extension of the algorithm in \cite{DubeyRPSX:nips16}, but is more computationally practical in large-scale applications. A convergence theory is also developed in Section~\ref{sec:theory}.

The proposed vrSG-MCMC is illustrated in Algorithm~\ref{alg:pvrsgmcmc}. Similar to stochastic optimization \cite{SchmidtRB:MP16}, the idea of variance reduction is to balance the gradient noise with a less-noisy {\em old gradient}, {\it i.e.}, a stochastic gradient is calculated based on a previous sample, as well as using a larger minibatch than that of the {\em current stochastic gradient}, resulting in a less noisy estimation. In each iteration of our algorithm, an unbiased stochastic gradient is obtained by combining the above two versions of gradients in an appropriate way (see $g_{l+1}$ in Algorithm~\ref{alg:pvrsgmcmc}). Such a construction of stochastic gradients essentially inherits a low variance with theoretical guarantees (detailed in Section~\ref{sec:theory}). In Algorithm~\ref{alg:pvrsgmcmc}, the whole parameter $\xb$ is decomposed into the model parameter $\thetab$ and the remaining algorithm-specific parameter $\taub$, {\it e.g.}, the momentum parameter. The expression ``$\thetab \leftarrow \xb$'' means assigning the corresponding model parameter from $\xb$ to $\thetab$. The {old gradient} is denoted as $\tilde{g}$, calculated with a minibatch of size $n_1$. The {\em current stochastic gradient} is calculated on a minibatch of size $n_2 < n_1$. We use $\xb_{l+1} = \mbox{NextS}\left(\xb_{l}, g_{l+1}, h_l \right)$ to denote a function which generates the next sample $\xb_{l+1}$ with an SG-MCMC algorithm, based on the current sample $\xb_{l}$, input stochastic gradient $g_{l+1}$, and step size $h_l$.

\begin{algorithm}[H]
	\caption{Practical Variance-Reduction SG-MCMC.}\label{alg:pvrsgmcmc}
	\begin{algorithmic}
		\STATE {\bf Input:} $\bar{\xb} = \xb_{0} = (\thetab_0, \taub_0) \in \mathbb{R}^d$, minibatch sizes $(n_1, n_2)$ such that $n_1 > n_2$, update interval $m$, total iterations $L$, stepsize $\{h_l\}_{l=1}^L$
		\STATE {\bf Output:} approximate samples $\{\xb_{l}\}_{l=1}^L$
		\FOR {$l = 0$ to $L-1$}
		\IF {$(l \mbox{ mod } m) = 0$}
		\STATE Sample w/t replacement $\{\pi_i\}_{i=1}^{n_1} \subseteq \{1, \cdots, N\}$;
		\STATE $\bar{\xb} = \xb_l$; ~~~$\tilde{\thetab}_l \leftarrow \bar{\xb}$;
		\STATE $\tilde{g} = \frac{N}{n_1}\sum_{i\in\pi}\nabla_{\thetab}\log p(\db_i|\tilde{\thetab}_l)$;
		\ENDIF
		\STATE $\thetab_l \leftarrow \xb_l$; ~~~$\tilde{\thetab}_l \leftarrow \bar{\xb}$;
		\STATE Sample w/t replacement $\{\tilde{\pi}_i\}_{i=1}^{n_2} \subseteq \{1, \cdots, N\}$;
		\STATE $g_{l+1} = \tilde{g} + \nabla_{\thetab}\log p(\thetab_{l}) + \frac{N}{n_2} \sum_{i\in\tilde{\pi}}\left(\nabla_{\thetab}\log p(\db_i|\thetab_l) - \nabla_{\thetab}\log p(\db_i|\tilde{\thetab}_l) \right)$;
		\STATE $\xb_{l+1} = \mbox{NextS}\left(\xb_{l}, g_{l+1}, h_{l+1} \right)$;
		\ENDFOR
	\end{algorithmic}
\end{algorithm}

One should note that existing variance-reduction algorithms, {\it e.g.}\! \cite{JohnsonZ:NIPS13}, use a similar concept to construct low-variance gradients. However, most algorithms use the whole training data to compute $\tilde{g}$ in Algorithm~\ref{alg:pvrsgmcmc}, which is computationally infeasible in large-scale settings. Moreover, we note that like in stochastic optimization \cite{ReddiHSPS:ICML16,AllenZhuH:ICML16}, instead of using a single parameter sample to compute $\tilde{g}$, similar methods can be adopted to compute $\tilde{g}$ based on an average of {\em old} parameter samples. The theoretical analysis can be readily adopted for such cases, which is omitted here for simplicity. More references are discussed in Section~\ref{sec:related}.

\subsection{Comparison with existing variance-reduction SG-MCMC algorithms}\label{sec:comparison}

The most related variance-reduction SG-MCMC algorithm we are aware of is a recent work on variance-reduction SGLD (SVRG-LD) \cite{DubeyRPSX:nips16}. SVRG-LD shares a similar flavor to our scheme from the algorithmic perspective, except that when calculating the old gradient $\tilde{g}$, the whole training data set is used in SVRG-LD. As mentioned above, this brings a computational challenge for large-scale learning. Although the problem is mitigated by using a moving average estimation of the stochastic gradient, this scheme does not match their theory. A more distinctive advantage of vrSG-MCMC over SVRG-LD \cite{DubeyRPSX:nips16} is in terms of theoretical analysis. Concerning SVRG-LD, $\RN{1}$) the authors did not show theoretically in which case variance reduction is useful in SGLD, and $\RN{2}$) it is not clear in their theory whether SVRG-LD is able to speed up the convergence rate compared to standard SGLD.
Specifically, the MSE of SVRG-LD was shown to be bounded by $O\left(\frac{N^2\min\{2\sigma^2, m^2(D^2h^2\sigma^2+hd)\}}{nL} + \frac{1}{Lh} + h^2\right)$, compared to $O\left(\frac{N^2\sigma^2}{nL} + \frac{1}{Lh} + h^2\right)$ for SGLD, where $(d, D, \sigma)$ are constants. By inspecting the above bounds, it is not clear whether SVRG-LD improves SGLD because the two bounds are not directly comparable\footnote{The first term in the $\min$ of the SVRG-LD bound is strictly larger than the first term of the SGLD bound (if the term $2\sigma^2$ is used in the ``min''), making the bounds not easily compared.}. More detailed explanations are provided in Appendix~\ref{app:discuss}.

\subsection{Convergence rate}\label{sec:theory}

We derive convergence bounds for Algorithm~\ref{alg:pvrsgmcmc} and analyze the improvement of vrSG-MCMC over the corresponding standard SG-MCMC. Using a similar approach as in Section~\ref{sec:necessary_vr}, we first introduce additional binary random variables, $\{b_i\}_{i=1}^N$, to indicate which data points are included in calculating the {\em old gradient} $\tilde{g}$ in Algorithm~\ref{alg:pvrsgmcmc}. This results in the expression for the stochastic gradient used in the $l$-th iteration: $\nabla_{\thetab}\tilde{U}(\thetab_{l}) = \frac{N}{n_2}\sum_{i=1}^N\left(\nabla_{\thetab}\log p(\db_{i}|\thetab_{l}) - \nabla_{\thetab}\log p(\db_{i}|\tilde{\thetab}_{l})\right)z_i + \frac{N}{n_1}\sum_{i=1}^N\sum_{i=1}^N\nabla_{\thetab}\log p(\db_{i}|\tilde{\thetab}_{l}) b_i$. It is easy to verify that the above stochastic gradient is an unbiased estimation of the true gradient in the $l$-th iteration (see Appendix~\ref{app:theory}).

In order to see how Algorithm~\ref{alg:pvrsgmcmc} reduces the variance of stochastic gradients, from Lemma~\ref{lem:biasmse}, it suffices to study $\Delta V_l$, as the minibatch size only impacts this term. For notational simplicity, similar to the $\alphab_{li}$ defined in Section~\ref{sec:necessary_vr}, we denote $\betab_{li} \triangleq \nabla_{\thetab}\log p(\db_i | \tilde{\thetab}_{l})$, which is similar to $\alphab_{li}$ but evaluated on the old parameter $\tilde{\thetab}_l$. Intuitively, since the {\em old gradient} $\tilde{g}$ is calculated from $\betab$ to balance the stochastic gradient noise (calculated from $\alphab$), $\alphab$ and $\betab$ are expected to be close to each other. Lemma~\ref{lem:alpha_beta} formulates the intuition, a key result in proving our main theorem, where we only consider the update interval $m$ and stepsize $h$ as factors. In the lemma below, following \cite{ChenDLZC:NIPS16} (Assumption~1), we further assume the gradient function $\nabla_{\thetab}U(\thetab)$ to be Lipschitz.

\begin{lemma}\label{lem:alpha_beta}
	Under Assumption~\ref{ass:assumption1} and assume $\nabla_{\thetab}U(\thetab)$ to be Lipschitz (Assumption~1 in \cite{ChenDLZC:NIPS16}), $\alphab_{li}$ and $\betab_{li}$ are close to each other in expectation, {\em i.e.}, $\mathbb{E}\alphab_{li} = \mathbb{E}\betab_{li} + O(mh)$.
\end{lemma}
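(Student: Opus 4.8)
The plan is to bound the difference $\mathbb{E}\alphab_{li} - \mathbb{E}\betab_{li}$ by tracking how far the current parameter $\thetab_l$ has drifted from the snapshot parameter $\tilde{\thetab}_l$, and then using the Lipschitz property of $\nabla_{\thetab}\log p(\db_i|\cdot)$ (a consequence of the assumed Lipschitz continuity of $\nabla_{\thetab}U$ together with the standing smoothness/boundedness in Assumption~\ref{ass:assumption1}) to convert a parameter-distance bound into a gradient-distance bound. Concretely, since $\alphab_{li} = \nabla_{\thetab}\log p(\db_i|\thetab_l)$ and $\betab_{li} = \nabla_{\thetab}\log p(\db_i|\tilde{\thetab}_l)$, Lipschitzness gives $\|\alphab_{li} - \betab_{li}\| \le L_0 \|\thetab_l - \tilde{\thetab}_l\|$ for a Lipschitz constant $L_0$; taking expectations and using Jensen, it suffices to show $\mathbb{E}\|\thetab_l - \tilde{\thetab}_l\| = O(mh)$.

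First I would observe that, by construction of Algorithm~\ref{alg:pvrsgmcmc}, the snapshot $\tilde{\thetab}_l$ equals $\thetab_{l'}$ where $l'$ is the most recent multiple of $m$ at or before $l$, so $l - l' \le m$. Thus $\thetab_l - \tilde{\thetab}_l = \sum_{k=l'}^{l-1}(\thetab_{k+1} - \thetab_k)$ is a sum of at most $m$ consecutive one-step increments. Each increment, from the update $\xb_{k+1} = \mathrm{NextS}(\xb_k, g_{k+1}, h_{k+1})$, has the generic SG-MCMC form $\thetab_{k+1} - \thetab_k = -g_{k+1} h_{k+1} + (\text{noise term of order } \sqrt{h_{k+1}})$. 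The second step is then to bound the expected norm of each increment: the drift part is $O(h)$ because the stochastic gradient $g_{k+1}$ is an unbiased estimate of the true gradient, which is bounded under Assumption~\ref{ass:assumption1}, so $\mathbb{E}\|g_{k+1}\|$ is bounded by a constant; the Brownian/injected-noise part has mean zero and standard deviation $O(\sqrt{h})$, so it contributes $O(\sqrt{h})$ to $\mathbb{E}\|\cdot\|$ but, crucially, averages to zero in $\mathbb{E}\alphab_{li} - \mathbb{E}\betab_{li}$ once we pass back through the expectation — i.e., the martingale-type noise increments telescope to a mean-zero sum, leaving only the $O(mh)$ drift contribution. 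Summing at most $m$ such drift terms each of size $O(h)$ yields the claimed $O(mh)$.

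The cleanest way to organize this rigorously is to work directly with expectations: write $\mathbb{E}\alphab_{li} - \mathbb{E}\betab_{li} = \mathbb{E}[\nabla_{\thetab}\log p(\db_i|\thetab_l) - \nabla_{\thetab}\log p(\db_i|\tilde{\thetab}_l)]$, expand $\nabla_{\thetab}\log p(\db_i|\cdot)$ around $\tilde{\thetab}_l$ to first order with a Lipschitz remainder, and note that $\mathbb{E}[\thetab_l - \tilde{\thetab}_l] = \sum_{k=l'}^{l-1} \mathbb{E}[-g_{k+1}h_{k+1} + \text{noise}]$; the injected Gaussian noise has zero conditional mean so drops out, and what remains is $-\sum_k h_{k+1}\mathbb{E}[g_{k+1}]$, whose norm is $O(mh)$ since $\mathbb{E}\|g_{k+1}\| = \mathbb{E}\|\nabla_{\thetab}U(\thetab_k)\|$ is uniformly bounded (Assumption~\ref{ass:assumption1}) and there are at most $m$ summands with stepsize $O(h)$. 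The higher-order Lipschitz remainder, being proportional to $\mathbb{E}\|\thetab_l - \tilde{\thetab}_l\|^2$, is also $O((mh)^2) = O(mh)$ in the relevant regime and can be absorbed.

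The main obstacle is the second moment / higher-order control: bounding $\mathbb{E}\|\thetab_l - \tilde{\thetab}_l\|$ (rather than $\|\mathbb{E}[\thetab_l - \tilde{\thetab}_l]\|$) requires handling the $O(\sqrt{h})$ noise increments, which individually are not $O(h)$; one must argue that when these are fed through the Lipschitz gradient map and then expectation is taken, the cross terms vanish by independence/martingale structure and the surviving contribution is genuinely $O(mh)$ and not merely $O(m\sqrt{h})$. This is where the argument must be careful to use the conditional-mean-zero property of the Brownian increments and the boundedness assumptions, rather than a crude triangle-inequality bound on norms; a slightly more delicate route (as in \cite{ChenDLZC:NIPS16}) is to bound $\mathbb{E}\|\thetab_l - \tilde{\thetab}_l\|^2 = O(mh)$ directly by a recursive/Gr\"onwall-type argument and then note this already suffices, since combined with the drift bound it gives the stated $O(mh)$ for the means.
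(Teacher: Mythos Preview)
Your proposal is correct and follows the same route as the paper: the paper's proof consists entirely of invoking Lemma~8 from \cite{ChenDLZC:NIPS16}, which states that for a parameter $\tilde{\thetab}_l$ that is $\tau$ steps older than $\thetab_l$, one has $\|\mathbb{E}[\nabla_{\thetab}\log p(\db|\thetab_l) - \nabla_{\thetab}\log p(\db|\tilde{\thetab}_l)]\| = O(\tau h)$, and then sets $\tau = m$. Your argument---telescoping the $m$ one-step increments, using the conditional-mean-zero property of the injected noise so that only the $O(h)$ drift per step survives, and combining with Lipschitzness of the gradient map---is precisely a sketch of how that cited lemma is proved, and you correctly flag the key subtlety (that a crude $\mathbb{E}\|\cdot\|$ bound would only give $O(\sqrt{mh})$, so one must exploit the martingale structure of the noise).
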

In the Appendix, we further simplify $\mathbb{E}\|\Delta V_l\|^2$ in the MSE bound by decomposing it into several terms. Finally, we arrive at our main theorem for the proposed vrSG-MCMC framework.
\begin{theorem}\label{theo:main}
	Under the setting of Lemma~\ref{lem:alpha_beta}, let $A_M \triangleq \max_l A_l$, and $A_l = \left(\frac{N}{n_2} - 1\right)\sum_{ij}\mathbb{E}\alphab_{li}^T\alphab_{lj} - 2\frac{N(N-n_2)}{n_2(N-1)}\sum_{i<j}\mathbb{E}\alphab_{li}^T\alphab_{lj}$. The MSE of vrSG-MCMC with a $K$th-order integrator is bounded as:
	{\begin{align*}
		\mathbb{E}\left(\hat{\phi}_L - \bar{\phi}\right)^2 = O \left(\frac{A_M}{L} + \frac{1}{Lh} + h^{2K} + \frac{mh}{L} - \frac{\lambda_M}{L}\right)~,
		\end{align*}}
	where $\lambda_M = \min_l \lambda_l$, and $\lambda_l \triangleq \left(\frac{N}{n_1} - \frac{N}{n_2}\right)\sum_{ij}\mathbb{E}\betab_{li}^T\betab_{li} - 2\left(\frac{N(N-n_2)}{n_2(N-1)} - \frac{N(N-n_1)}{n_1(N-1)}\right)\\\sum_{i<j}\mathbb{E}\betab_{li}^T\betab_{lj}$.
	Furthermore, we have $\lambda_l > 0$ for $\forall l$, so that $\lambda_M > 0$.
\end{theorem}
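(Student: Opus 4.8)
The plan is to reduce the MSE bound to a study of $\mathbb{E}\|\Delta V_l\|^2$ via Lemma~\ref{lem:biasmse}, then expand that quantity by summing out the indicator variables $\{z_i\}$ and $\{b_i\}$. First I would write the stochastic gradient $\nabla_{\thetab}\tilde{U}(\thetab_l)$ in terms of $\alphab_{li}$ and $\betab_{li}$ using the $z_i,b_i$ representation given just before the theorem, and compute $\Delta V_l = (\nabla_{\thetab}U_l - \nabla_{\thetab}\tilde{U}_l)\cdot\nabla\phi$, so that $\mathbb{E}\|\Delta V_l\|^2$ becomes $O(1)$ times the expected squared norm of the gradient discrepancy. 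The indicators satisfy $\sum_i z_i = n_2$, $\sum_i b_i = n_1$ (sampling without replacement in each block), so $\mathbb{E} z_i = n_2/N$, $\mathbb{E} z_i z_j = n_2(n_2-1)/(N(N-1))$ for $i\neq j$, and analogously for $b_i$; moreover, since the $z$-minibatch and the $b$-minibatch are drawn independently of each other (the $b_i$ are fixed for $m$ iterations while the $z_i$ are redrawn each iteration), cross terms $\mathbb{E} z_i b_j$ factor as $\mathbb{E} z_i\,\mathbb{E} b_j$. Carrying out the second-moment expansion of the difference between the true gradient and the unbiased estimate yields, after collecting terms, a leading contribution of the form $\frac{1}{L}(A_l + \text{(cross $\alphab$--$\betab$ terms)})$ plus the usual $\frac{1}{Lh} + h^{2K}$ from Lemma~\ref{lem:biasmse}.

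The next step is to handle the $\alphab$--$\betab$ cross terms using Lemma~\ref{lem:alpha_beta}, i.e.\ $\mathbb{E}\alphab_{li} = \mathbb{E}\betab_{li} + O(mh)$. Wherever the expansion produces an inner product $\mathbb{E}\alphab_{li}^T\betab_{lj}$, I would substitute $\betab_{lj} = \alphab_{lj} + O(mh)$ (in expectation) to replace it by $\mathbb{E}\alphab_{li}^T\alphab_{lj}$ up to an $O(mh)$ correction; summing these corrections over the $O(N^2)$ index pairs, together with the $1/L$ prefactor and the boundedness of the gradients (from Assumption~\ref{ass:assumption1}), gives the $\frac{mh}{L}$ term in the stated bound. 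The terms that remain purely quadratic in $\alphab$ assemble into $A_l/L$ (with $A_M = \max_l A_l$ absorbing the max over iterations), and the terms that are purely quadratic in $\betab$ — which arise from the variance of the old-gradient estimate $\tilde g$ minus what it would have been using minibatch size $n_2$ — assemble precisely into $-\lambda_l/L$. The sign here is the crucial bookkeeping: replacing a size-$n_2$ estimate of the full gradient of $\sum_i\betab_{li}$ by a size-$n_1$ estimate reduces variance because $n_1 > n_2$, which is exactly why $\lambda_l$ enters with a minus sign; taking $\lambda_M = \min_l\lambda_l$ keeps the bound valid (we subtract the smallest guaranteed reduction).

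It then remains to prove $\lambda_l > 0$ for every $l$. Writing $S_1 \triangleq \sum_i \mathbb{E}\|\betab_{li}\|^2$ and $S_2 \triangleq \sum_{i<j}\mathbb{E}\betab_{li}^T\betab_{lj}$, we have $\lambda_l = N\big(\tfrac1{n_1}-\tfrac1{n_2}\big)S_1 - 2N\big(\tfrac{N-n_2}{n_2(N-1)} - \tfrac{N-n_1}{n_1(N-1)}\big)S_2$. Since $n_1 > n_2$, the coefficient of $S_1$ is negative and the coefficient of $S_2$ is positive (one checks $\tfrac{N-n_2}{n_2} > \tfrac{N-n_1}{n_1}$), so I would rewrite $\lambda_l$ in the form (constant)$\times[\,(\text{something})\cdot(\text{mean of }\|\betab_{li}\|^2) - (\text{something})\cdot(\text{mean of }\betab_{li}^T\betab_{lj})\,]$ and invoke exactly the inequality structure of Lemma~\ref{lem:exp_alpha}: namely that $\frac{1}{N^2}\sum_{ij}\mathbb{E}\betab_{li}^T\betab_{lj} \ge \frac{1}{N(N-1)}\sum_{i\neq j}\mathbb{E}\betab_{li}^T\betab_{lj}$, which is equivalent to $\frac1N S_1 \ge \frac{1}{N-1}\cdot\frac{2S_2}{N}$ after expanding $\sum_{ij} = S_1 + 2S_2$. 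Combining the algebraic identities for the coefficients with this inequality should force $\lambda_l \ge 0$, and strict positivity follows whenever the gradients are not all degenerate (which is guaranteed by the same lemma's strictness or by a nondegeneracy clause in Assumption~\ref{ass:assumption1}). The main obstacle I anticipate is the second-moment bookkeeping in the first step: correctly tracking which cross terms survive after summing out two families of dependent indicators with different block structures, and making sure the $\betab$-only terms collect with the right combinatorial coefficients so that they match $\lambda_l$ exactly rather than merely up to a constant — this is where a sign error or a missing $N/(N-1)$ factor would break the claim $\lambda_l>0$.
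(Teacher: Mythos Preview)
Your overall plan matches the paper's almost step for step: reduce to $\mathbb{E}\|\Delta V_l\|^2$ via Lemma~\ref{lem:biasmse}, write $\Delta V_l=\sum_i\alphab_{li}(1-\tfrac{N}{n_2}z_i)+\sum_i\betab_{li}(\tfrac{N}{n_2}z_i-\tfrac{N}{n_1}b_i)$, expand into the three blocks $A_l$ (pure $\alphab$), $B_l$ (pure $\betab$), $C_l$ (cross), sum out the indicators using exactly the moment identities you list, then use Lemma~\ref{lem:alpha_beta} on the cross block and Lemma~\ref{lem:exp_alpha} for the sign of $\lambda_l$. That is precisely the paper's route.

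There is one concrete point where your bookkeeping goes the wrong way, and it is exactly the sign/grouping pitfall you warned yourself about. You propose to replace $\betab_{lj}$ by $\alphab_{lj}+O(mh)$ inside $C_l$, turning the cross terms into $\alphab$--$\alphab$ terms. But then the converted $C_l$ merges with $A_l$, not with $B_l$; a short computation gives $C_l\to -2A_l+O(mh)$, so the ``purely $\alphab$'' contribution becomes $A_l+C_l=-A_l$, not $A_l$, and the ``purely $\betab$'' contribution is $B_l$ alone, which is \emph{not} $-\lambda_l$. To land on the decomposition stated in the theorem (with $A_l$ expressed via $\alphab$ and $\lambda_l$ via $\betab$), you must substitute in the opposite direction: use Lemma~\ref{lem:alpha_beta} to replace $\alphab_{li}$ by $\betab_{li}+O(mh)$ inside $C_l$, so that $C_l$ becomes a $\betab$--$\betab$ expression and combines with $B_l$. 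The paper does exactly this and obtains
\[
B_l+C_l=\Bigl(\tfrac{N}{n_1}-\tfrac{N}{n_2}\Bigr)\sum_{ij}\mathbb{E}\betab_{li}^{\!T}\betab_{lj}
-2\Bigl(\tfrac{N(N-n_2)}{n_2(N-1)}-\tfrac{N(N-n_1)}{n_1(N-1)}\Bigr)\sum_{i<j}\mathbb{E}\betab_{li}^{\!T}\betab_{lj}+O(mh),
\]
which is the quantity identified with $-\lambda_l$ (up to the paper's sign convention), while $A_l$ stays intact as the $\alphab$--$\alphab$ block. The positivity argument then factors this as $\tfrac{N^3(n_2-n_1)}{n_1n_2}\,\Gamma_l^{(\betab)}$ with $\Gamma_l^{(\betab)}\ge 0$ by Lemma~\ref{lem:exp_alpha}, exactly along the lines you sketched. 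So your plan is correct once you reverse the direction of the Lemma~\ref{lem:alpha_beta} substitution in the cross block.
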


Note that for a fixed $m$, $\frac{mh}{L}$ in the above bound is a high-order term relative to $\frac{1}{Lh}$. As a result, the MSE is bounded by $O \left(\frac{A_M}{L} + \frac{1}{Lh} + h^{2K} - \frac{\lambda_M}{L}\right)$. Because the MSE of standard SG-MCMC is bounded by $O \left(\frac{A_M}{L} + \frac{1}{Lh} + h^{2K}\right)$ (see Appendix~\ref{app:theory}) and $\lambda_M > 0$ from Theorem~\ref{theo:main}, we conclude that vrSG-MCMC induces a lower MSE bound compared to the corresponding SG-MCMC algorithm, with an improvement of $O\left(\frac{\lambda_M}{L}\right)$.

It is worth noting that in Algorithm~\ref{alg:pvrsgmcmc}, the minibatch for calculating the old gradient $\tilde{g}$ is required to be larger than that for calculating the current stochastic gradient, {\it i.e.}, $n_1 > n_2$. Otherwise, $\lambda_l$ in Theorem~\ref{theo:main} would become negative, leading to an increased MSE bound compared to standard SG-MCMC. This matches the intuition that old gradients need to be more accurate (thus with larger minibatches) than current stochastic gradients in order to reduce the stochastic gradient noise.

\begin{remark}
	In the special case of \cite{DubeyRPSX:nips16} where $n_1 = N$ for SGLD, Theorem~\ref{theo:main} gives a MSE bound of $O \left(\frac{A_M}{L} + \frac{1}{Lh} + h^{2} + \frac{mh}{L} - \frac{\max_l\lambda_l}{L}\right)$, with $\lambda_l = \left(\frac{N}{n_2} - 1\right)\sum_{ij}\mathbb{E}\betab_{li}^T\betab_{li} - \frac{2N(N-n_2)}{n_2(N-1)}\sum_{i<j}\mathbb{E}\betab_{li}^T\betab_{lj}$. According to Lemma~\ref{lem:exp_alpha}, $\lambda_l$ is also positive, thus leading to a reduced MSE bound. However, the bound is not necessarily better than that of vrSG-MCMC, where a minibatch is used instead of the whole data set to calculate $\tilde{g}$, leading to a significant decrease of computational time.
\end{remark}

\begin{remark}
	Following Corollary~\ref{cor:optimalT}, Theorem~\ref{theo:main} can also be formulated in terms of the computational budget $T$. Specifically, according to Algorithm~\ref{alg:pvrsgmcmc}, the computational budget $T$ would be proportional to $\frac{n_1}{m} + n_2$. Substituting this into the MSE bound of Theorem~\ref{theo:main} gives a reformulated bound of $O\left(\frac{\left(A_M + mh - \lambda_M\right)\left(\frac{n_1}{m} + n_2\right)}{T} + \frac{1}{Lh} + h^2\right)$. The optimal MSE w.r.t.\! $n_1$ and $n_2$ would be complicated since both $A_M$ and $\lambda_M$ depend on $n_1$ and $n_2$. We omit the details here for simplicity. Nevertheless, our experiments indicate that our algorithm always improves standard SG-MCMC algorithms for the same computational time.
\end{remark}

\section{Related Work}\label{sec:related}

Variance reduction was first introduced in stochastic optimization, which quickly became a popular research topic and has been actively developed in recent years. 
\cite{SchmidtRB:arxiv13,SchmidtRB:MP16} introduced perhaps the first variance reduction algorithm, called stochastic average gradient (SAG), where historical gradients are stored and continuously updated in each iteration. Later, stochastic variance reduction gradient (SVRG) was developed to reduce the storage bottleneck of SAG, at the cost of an increased computational time \cite{JohnsonZ:NIPS13,ZhangMJ:NIPS13}. \cite{DefazioBL:NIPS14} combined ideas of SAG and SVRG and proposed the SAGA algorithm, which improves SAG by using a better and unbiased stochastic-gradient estimation.

Variance reduction algorithms were first designed for convex optimization problems, followed by a number of recent works extending the techniques for non-convex optimization \cite{ReddiHSPS:ICML16,ReddiSPS:NIPS16,AllenZhuH:ICML16,AllenZhuRQY:ICML16}, as well as for distributed learning \cite{ReddiHSPS:NIPS15}. All these algorithms are mostly based on SVRG and are similar in algorithmic form, but differ in the techniques for proving the rigorous theoretical results.

For scalable Bayesian sampling with SG-MCMC, however, this topic has been studied little until a recent work on variance reduction for SGLD \cite{DubeyRPSX:nips16}. In this work, the authors adapted the SAG and SVRG ideas to SGLD. Although they provided corresponding convergence results, some fundamental problems, such as how minibatch size affects the convergence rate, were not fully studied. Furthermore, their algorithms suffer from an either high computational or storage cost in a big-data setting, because the whole data set needs to be accessed frequently.

To reduce the computational cost of SVRG-based algorithms,
the idea of using a minibatch of data to calculate the {\em old gradient} (corresponding to the $\tilde{g}$ in Algorithm~\ref{alg:pvrsgmcmc}) has also been studied in stochastic optimization.
Representative works include, but are not limited to \cite{HarikandehAVSKS:NIPS15,FrostigGKS:COLT15,ShahAKS:arxiv16,LeiJ:NIPS16,LianWL:AISTATS17}. The proposed approach adopts similar ideas, with the following main differences: $\RN{1}$) Our algorithm represents the first work for large-scalable Bayesian sampling with a practical (computationally cheap) variance reduction technique; $\RN{2}$) the techniques used here for analysis are different and appear to be simpler than those used for stochastic optimization; $\RN{3}$) our theory addresses fundamental questions for variance reduction in SG-MCMC, such as those raised in the Introduction.

\section{Experiments}

\subsection{A synthetic experiment}
We first test the conclusion of the long-run setting in Corollary~\ref{cor:optimalT}, which indicates that vrSG-MCMC with minibatches of size 1 achieve the optimal MSE bound. To make the algorithm go into the long-run setting regime as sufficient as possible, we test vrSG-MCMC on a simple Gaussian model, which runs very fast so that a little actual walk-clock time is regarded as a large computational budget. The model is defined as: $x_i \sim \mathcal{N}(\theta, 1), \theta \sim \mathcal{N}(0, 1)$. We generate $N = 1000$ data samples $\{x_i\}$, and calculate the the MSE for minibatch sizes of
$n = 1, 10, 100$. The test function is $\phi(\theta) = \theta^2$. The results are
ploted in Figure~\ref{fig:gau}. We can see from the figure that $n=1$ achieves the
lowese MSE, consistent with the theory (Corollary~\ref{cor:optimalT}).
\begin{figure}
	\centering
	\includegraphics[width=0.6\linewidth]{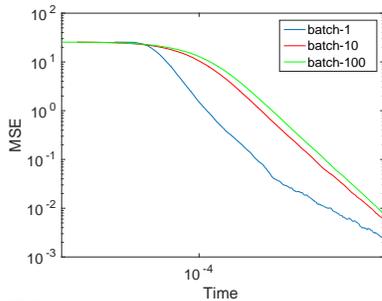}
	\vspace{-0.5cm}
	\caption{MSE vs. wall-clock time for different minibatch sizes.} \label{fig:gau}
	\vspace{-0.5cm}
\end{figure}

\subsection{Applications on deep neural networks}
We apply the proposed vrSG-MCMC framework to Bayesian learning of deep neural networks, including the multilayer perceptron (MLP), convolutional neural network (CNN), and recurrent neural network (RNN). The latter two have not been empirically evaluated in previous work. Experiments with Bayesian logistic regression are given in Appendix~\ref{app:exp}. In the experiments, we are interested in modeling weight uncertainty of neural networks, which is an important topic and has been well studied \cite{HernaandezLobatoA:icml15,BlundellCKW:icml15,PSGLD:AAAI16,LouizosW:ICML16}. We achieve this goal by applying priors to the weights (in our case, we use simple isotropic Gaussian priors) and performing posterior sampling with vrSG-MCMC or SG-MCMC. We implement vrSG-MCMC based on SGLD, and compare it to the standard SGLD and SVRG-LD \cite{DubeyRPSX:nips16} in our experiments\footnote{The SAGA-LD algorithm in \cite{DubeyRPSX:nips16} is not compared here because it is too storage-expensive thus is not fair.}. For this reason, comparisons to other optimization-based methods such as the maximum likelihood are not considered. For simplicity, we set the update interval for the {\em old gradient} $\tilde{g}$ in Algorithm~\ref{alg:pvrsgmcmc} to $m=10$. For all the experiments, the minibatch sizes for vrSG-MCMC are set to $n_1 = 100$ and $n_2 = 10$. To be fair, this corresponds to a minibatch size of $n = 10$ in SGLD and SVRG-LD. Sensitivity of model performance w.r.t.\! minibatch size $n_1$ is tested in Section~\ref{sec:sensity}. For a fair comparison, following convention \cite{AllenZhuH:ICML16,DubeyRPSX:nips16}, we plot the number of data passes versus error in the figures\footnote{Since true posterior averages are infeasible, we plot sample averages in terms of accuracy/loss.}. Results on the number of data passes versus loss are given in the Appendix. In addition, we use fixed stepsizes in our algorithm for all except for the ResNet model specified below. Following relevant literature \cite{JohnsonZ:NIPS13,DubeyRPSX:nips16}, we tune the stepsizes and plot the best results for all the algorithms to ensure fairness. Note in our Bayesian setup, it is enough to run an algorithm for once since the uncertainty is encoded in the samples.

\subsection{Multilayer perceptron}\label{sec:exp_mlp}

We follow conventional settings \cite{ReddiHSPS:ICML16,AllenZhuH:ICML16} and use a single-layer MLP with 100 hidden units, using the {\em sigmoid} activation function as the nonlinear transformation. We test the MLP on the MNIST and CIFAR-10 datasets. The stepsizes for both vrSG-MCMC and SGLD are set to 0.25 and 0.01 in the two datasets, respectively. Figure~\ref{fig:fnn} plots the number of passes through the data versus test error/loss. Results on the training datasets, including training results for the CNN and RNN-based deep learning models described below, are provided in Appendix~\ref{app:exp}. It is clear that vrSG-MCMC leads to a much faster convergence speed than SGLD, resulting in much lower test errors and loss at the end, especially on the CIFAR-10 dataset. SVRG-LD, though it leads to potential lower errors/loss, converges slower than vrSG-MCMC, due to the high computational cost in calculating the {\em old gradient} $\tilde{g}$. As a result, we do not compare vrSG-MCMC with SVRG-LD in the remaining experiments.

\begin{figure}[ht]
	\begin{center}
		\begin{minipage}{0.49\linewidth}
			\includegraphics[width=\columnwidth]{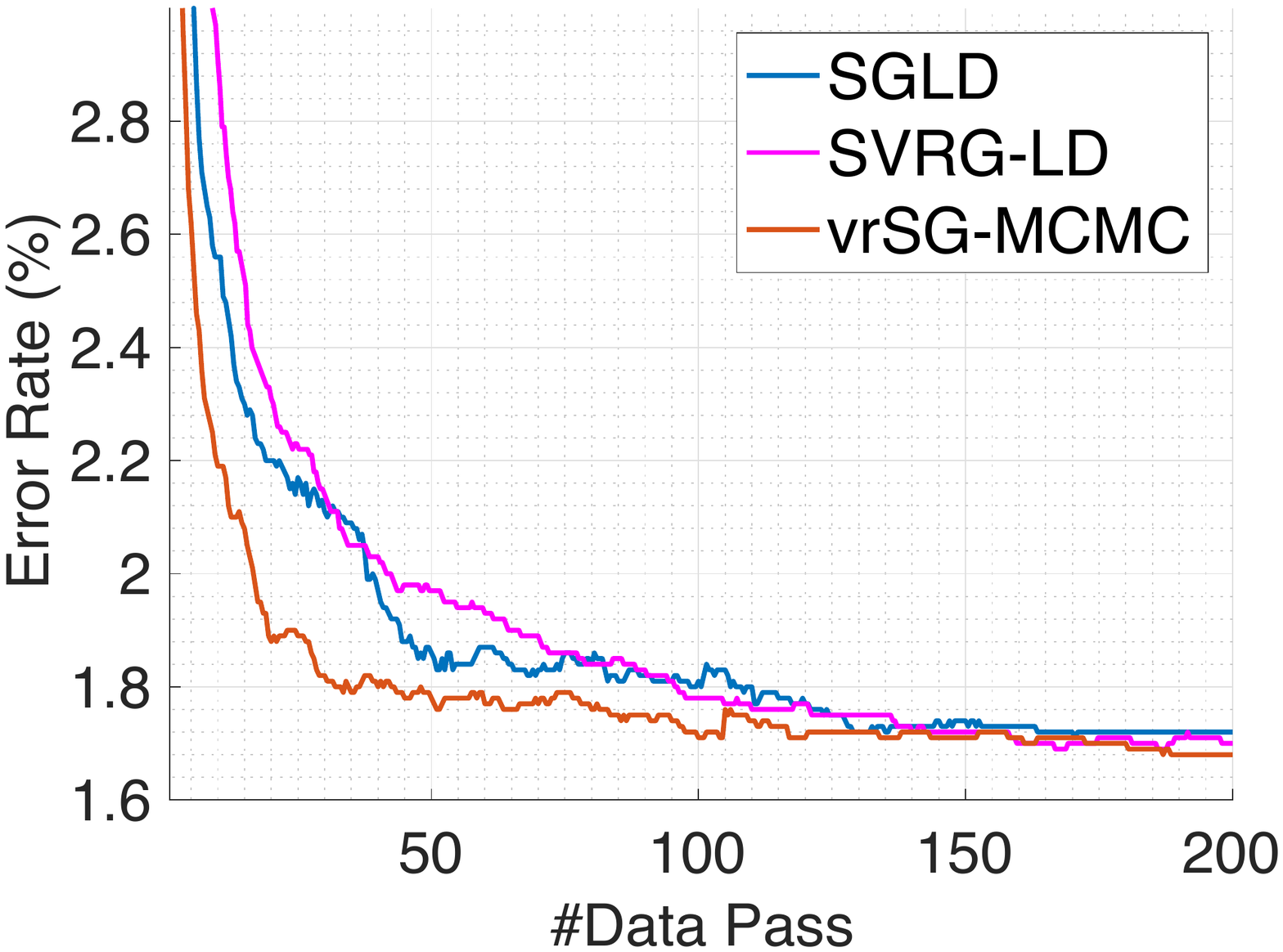}
		\end{minipage}
		\begin{minipage}{0.49\linewidth}
			\includegraphics[width=\columnwidth]{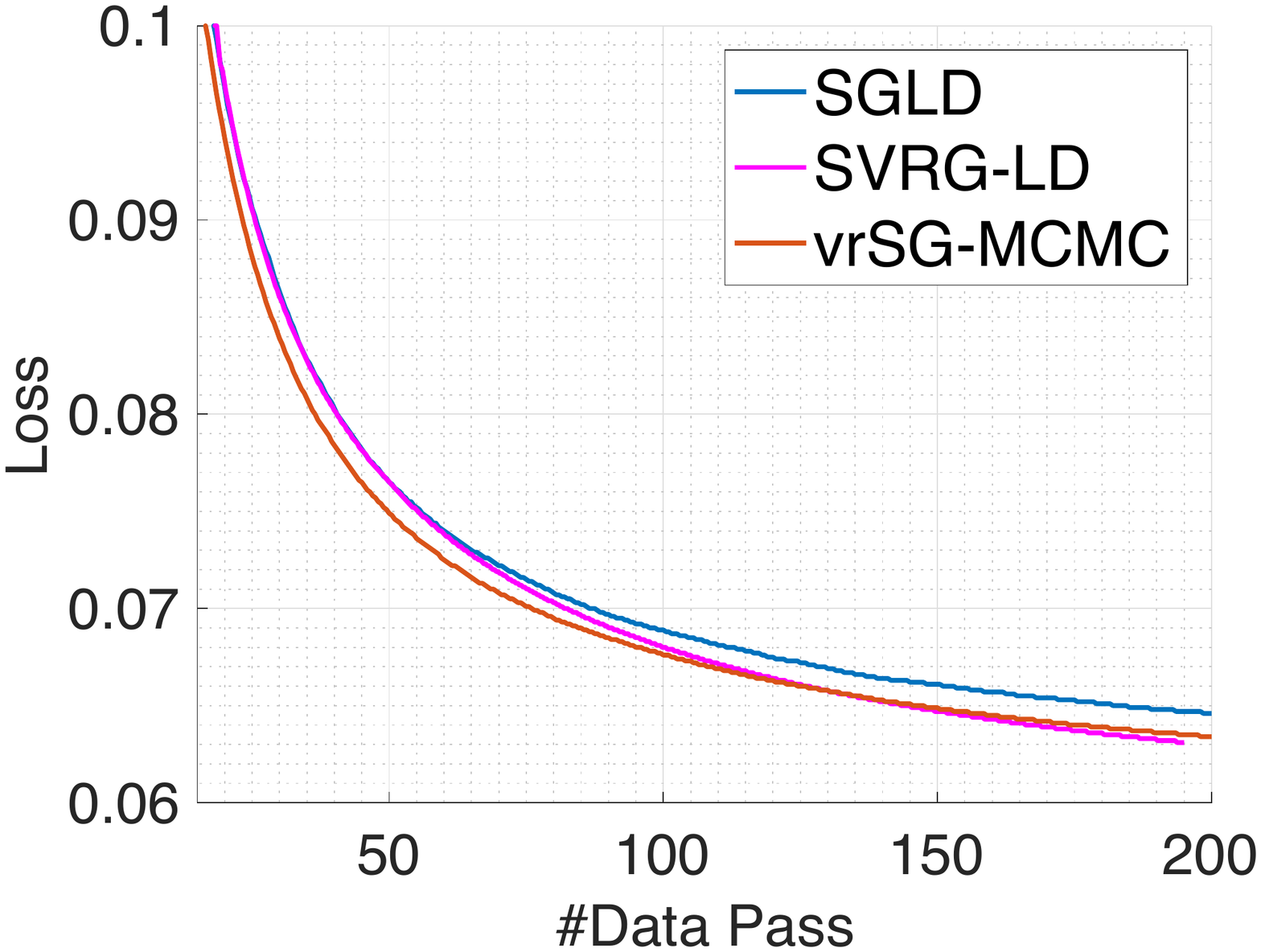}
		\end{minipage}
		\begin{minipage}{0.49\linewidth}
			\includegraphics[width=\columnwidth]{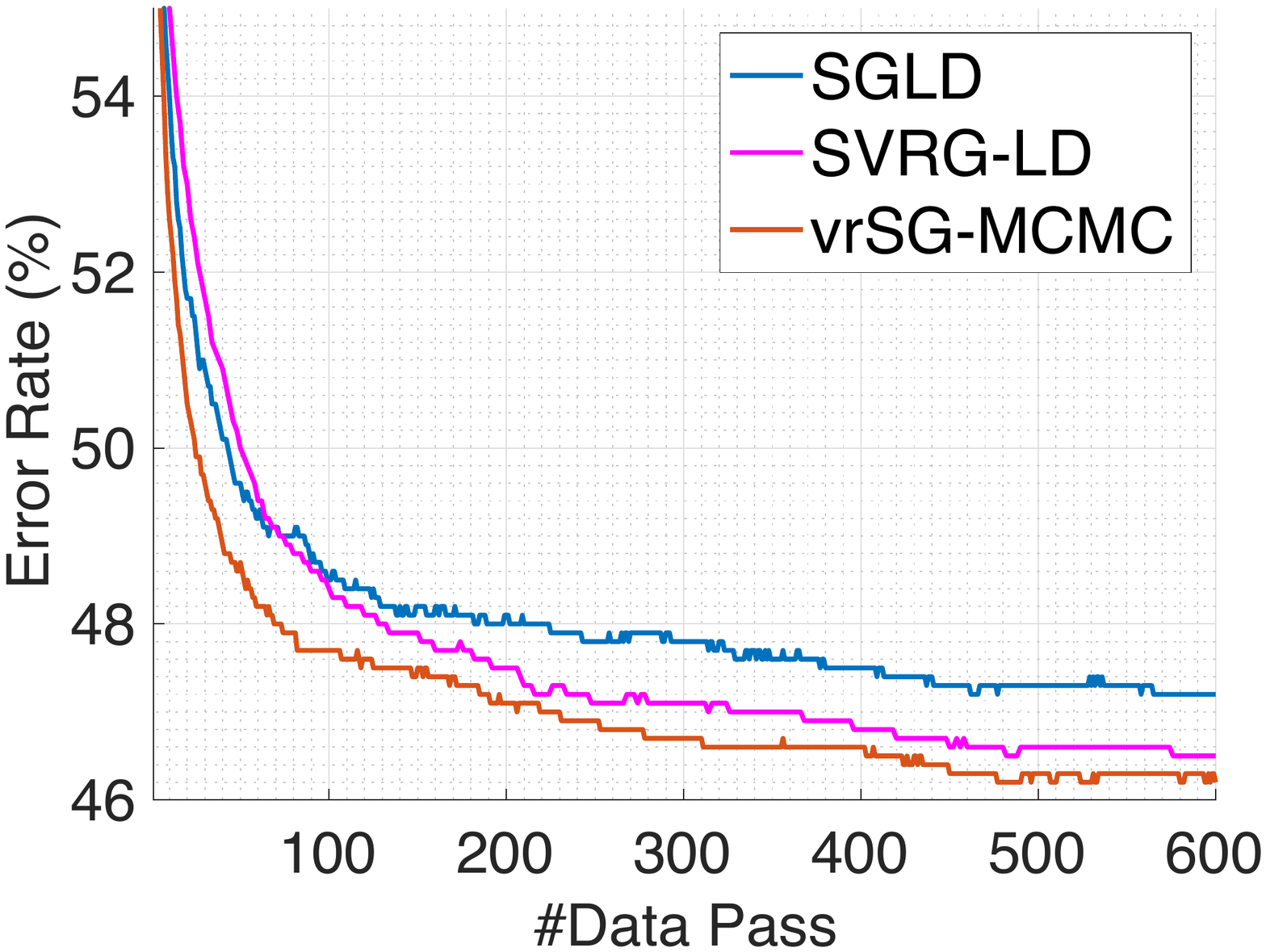}
		\end{minipage}
		\begin{minipage}{0.49\linewidth}
			\includegraphics[width=\columnwidth]{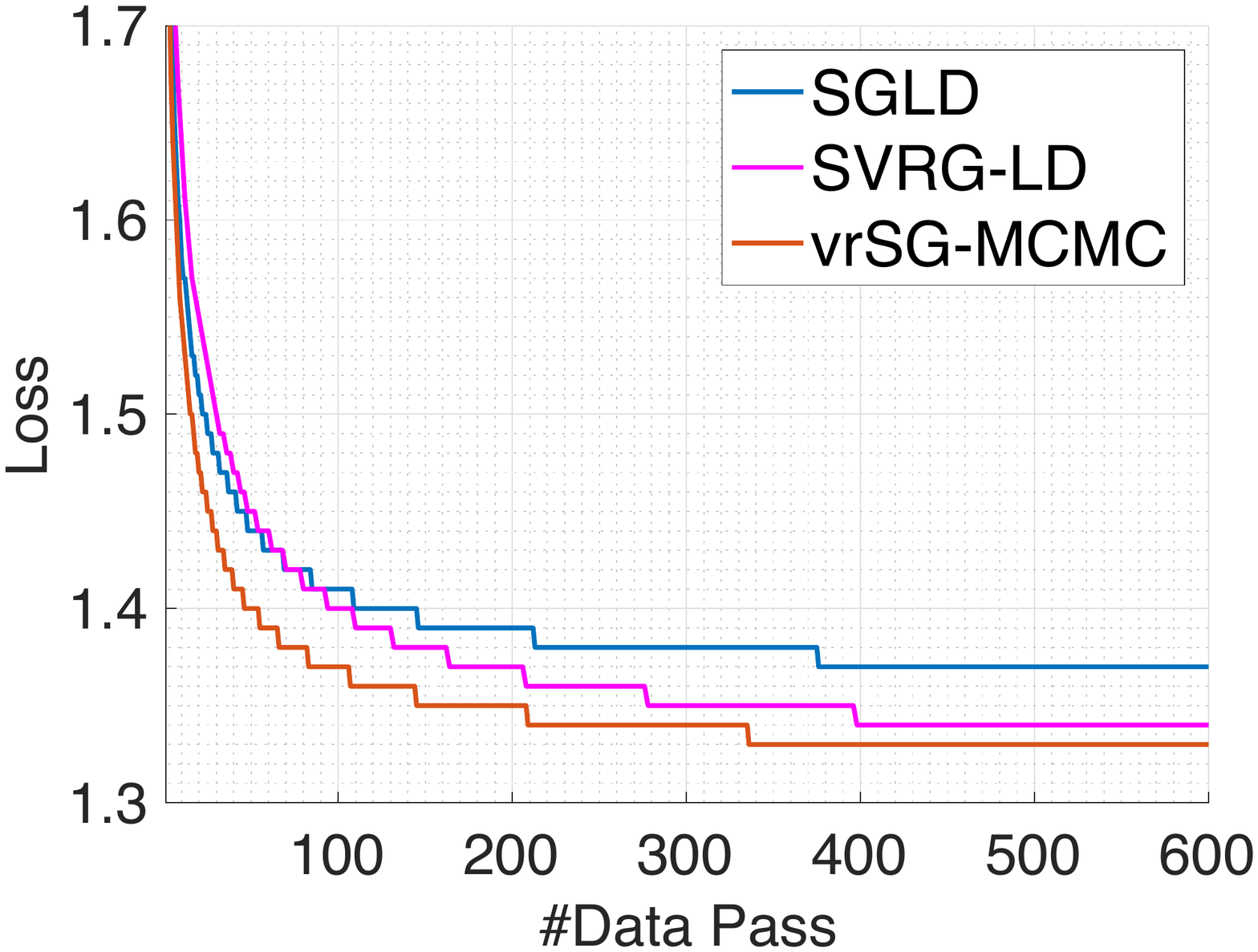}
		\end{minipage}
		\caption{Number of passes through data vs. testing error (left) / loss (right) on MNIST (top) and CIFAR-10 (bottom) datasets.}
		\label{fig:fnn}
	\end{center}
	\vskip -0.2in
\end{figure} 

\subsection{Convolutional neural networks}
We use the CIFAR-10 dataset, and test two CNN architectures for image classification. The first architecture is a deep convolutional neural networks with 4 convolutional layers, denoted as C32-C32-C64-C32, where max-pooling is applied on the output of the first three convolutional layers, and a Dropout layer is applied on the output of the last convolutional layer. The second architecture is a 20-layers deep residual network (ResNet) with the same setup as in \cite{he2016deep}. Specifically, we use a step-size-decrease scheme as $h_l = \frac{1}{10+\text{1.8e-3}\times l}$ for both vrSG-MCMC and SGLD, where $l$ is the number of iterations so far.

Figure~\ref{fig:cnn} plots the number of passes through the data versus test error/loss on both models. Similar to the results on MLP, vrSG-MCMC converges much faster than SGLD, leading to lower test errors and loss. Interestingly, the gap seems larger in the more complicated ResNet architecture; furthermore, the learning curves look much less noisy (smoother) for vrSG-MCMC because of the reduced variance in stochastic gradients.

\begin{figure}[ht]
	\begin{center}
		\begin{minipage}{0.49\linewidth}
			\includegraphics[width=\columnwidth]{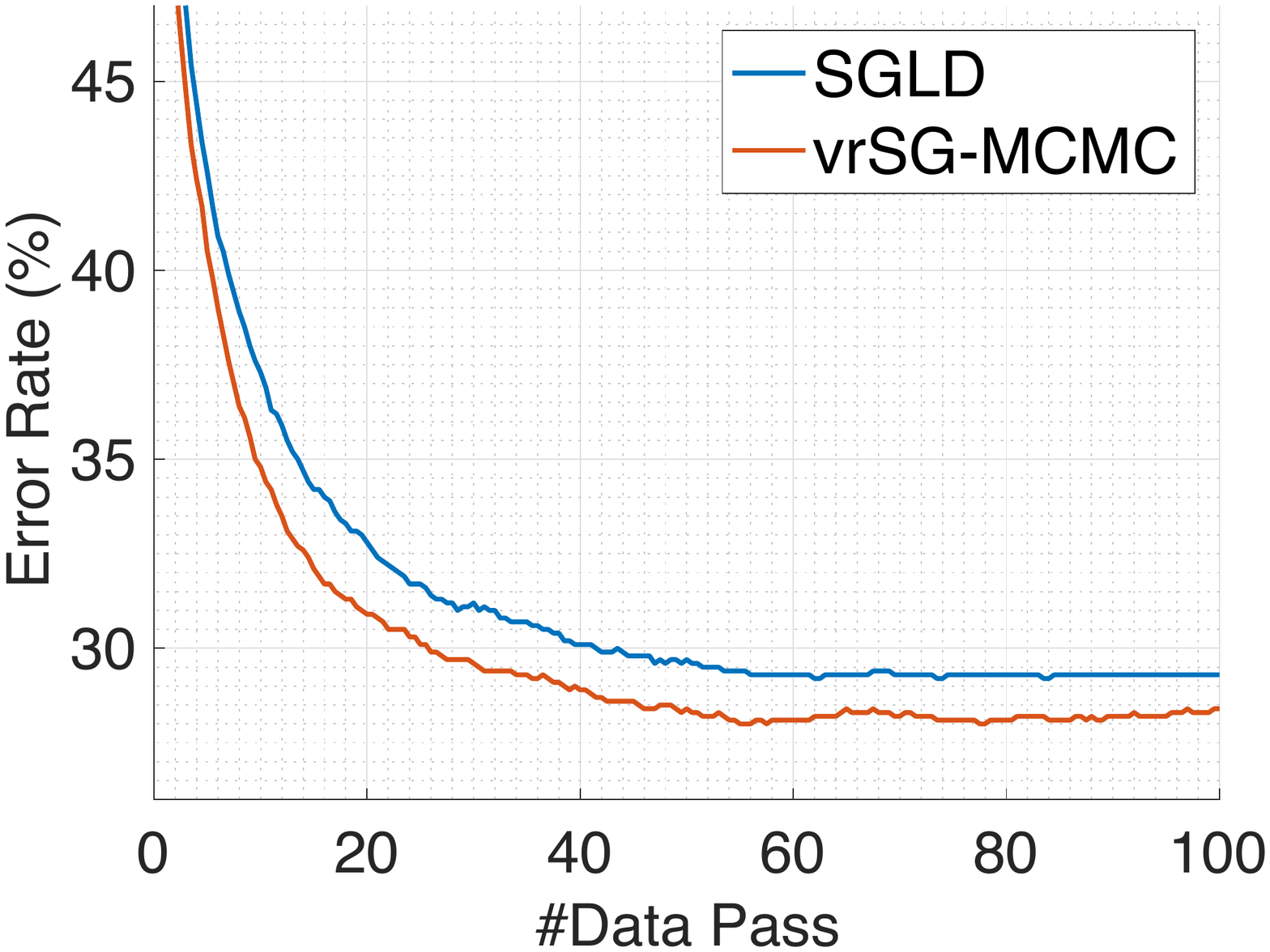}
		\end{minipage}
		\begin{minipage}{0.49\linewidth}
			\includegraphics[width=\columnwidth]{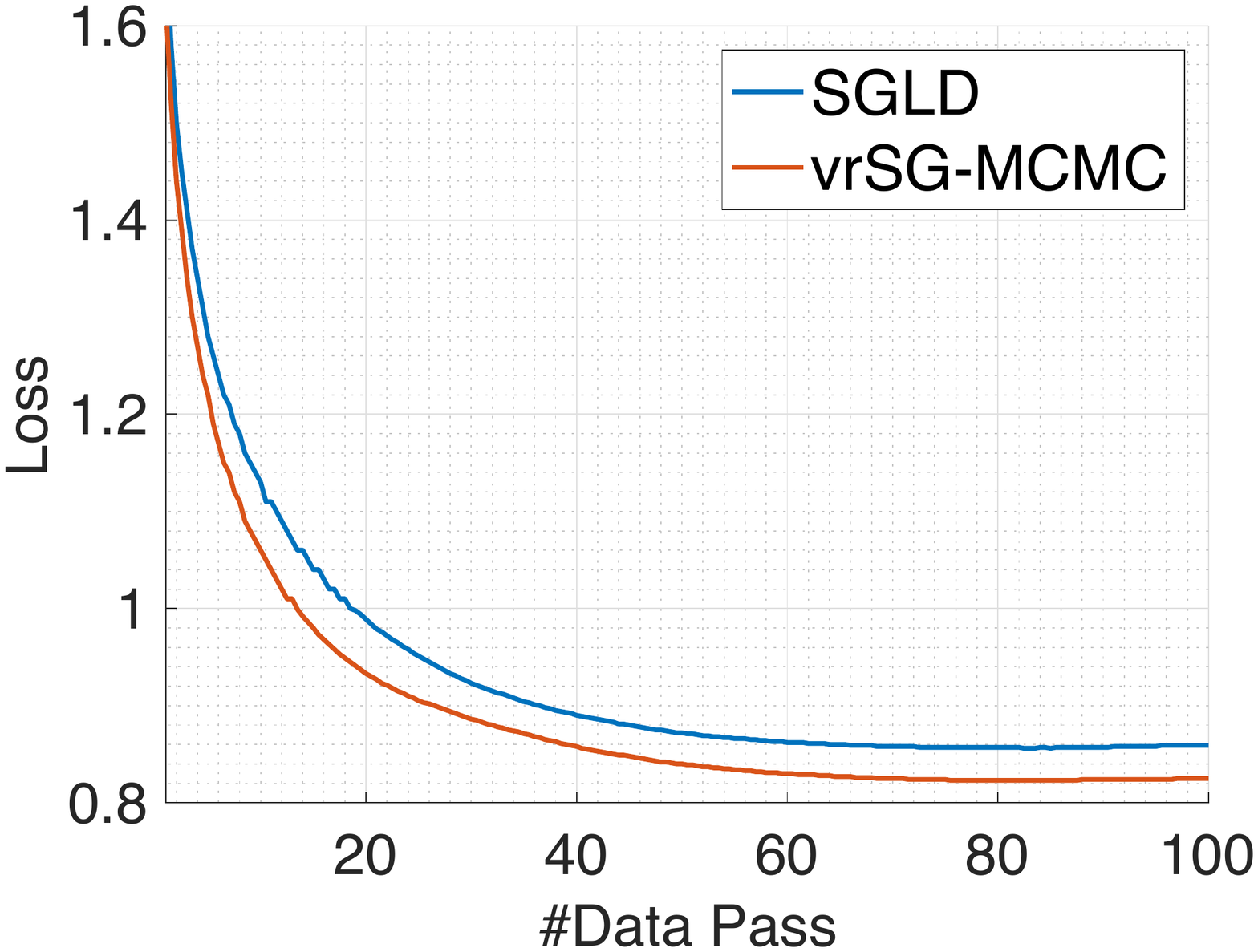}
		\end{minipage}
		\begin{minipage}{0.49\linewidth}
			\includegraphics[width=\columnwidth]{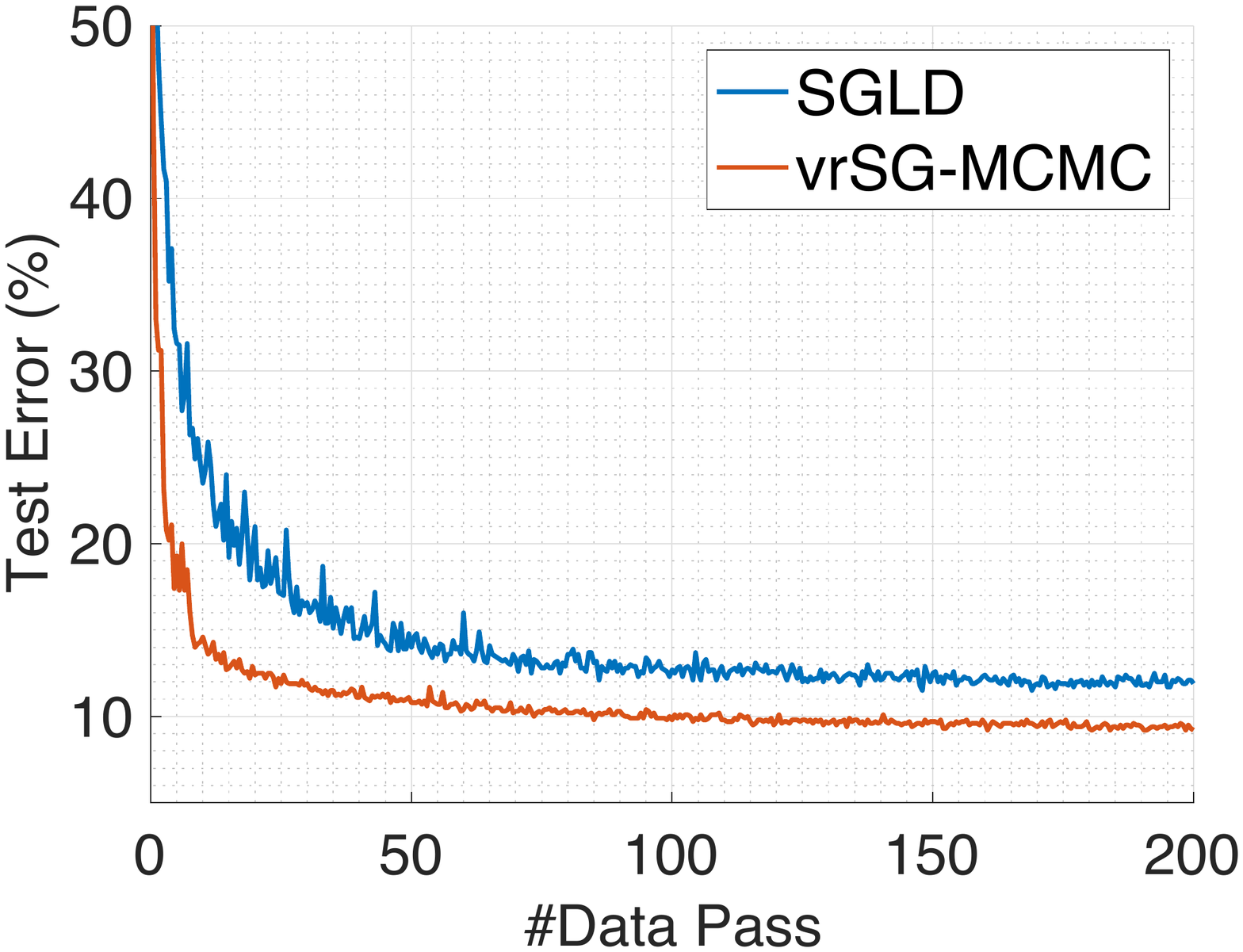}
		\end{minipage}
		\begin{minipage}{0.49\linewidth}
			\includegraphics[width=\columnwidth]{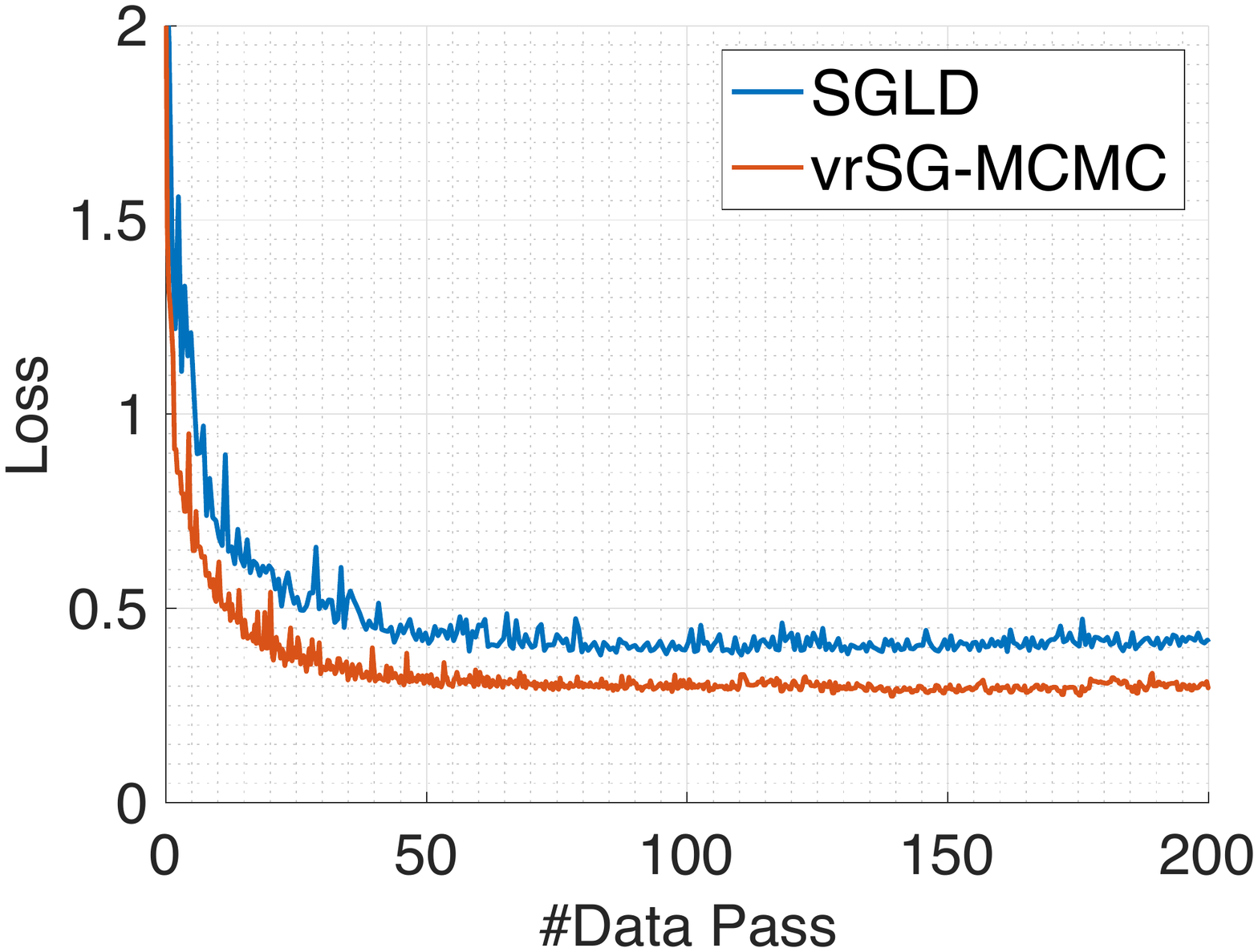}
		\end{minipage}
		\caption{Number of passes through data vs. testing error (left) / loss (right) with CNN-4 (top) and ResNet (bottom) on CIFAR-10.}
		\label{fig:cnn}
	\end{center}
	\vskip -0.2in
\end{figure}

\subsection{Recurrent neural networks}

The recurrent neural network with LSTM units \cite{Hochreiter:1997:LSM} is a powerful architecture used for modeling sequence-to-sequence data. 
We consider the task of language modeling on two datasets, \emph{i.e.}, the Penn Treebank (PTB) dataset and WikiText-2 dataset \cite{merity2016pointer}. PTB is the smaller dataset among the two, containing a vocabulary of size 10,000. We use the default setup of 887,521 tokens for training, 70,390 for validation and 78,669 for testing. WikiTest-2 is a large dataset with 2,088,628 tokens from 600 Wiki articles for training, 217,649 tokens from 60 Wiki articles for validation, and 245,569 tokens from an additional 60 Wiki articles for testing. The total vocabulary size is 33,278.

We adopt the hierarchical LSTM achitecture \cite{zaremba2014recurrent}. The hierarchy depth is set to 2, with each LSTM containing 200 hidden unites. The step size is set to 0.5 for both datasets. For more stable training, standard gradient clipping is adopted, where gradients are clipped if the norm of the parameter vector exceeds 5. Figure~\ref{fig:rnn} plots the number of passes through the data versus test perplexity on both datasets. The results are consistent with the previous experiments on MLPs and CNNs, where vrSG-MCMC achieves faster convergence than SGLD; its learning curves in terms of testing error/loss are also much smoother.

\begin{figure}[ht]
	\begin{center}
		\begin{minipage}{0.49\linewidth}
			\includegraphics[width=\columnwidth]{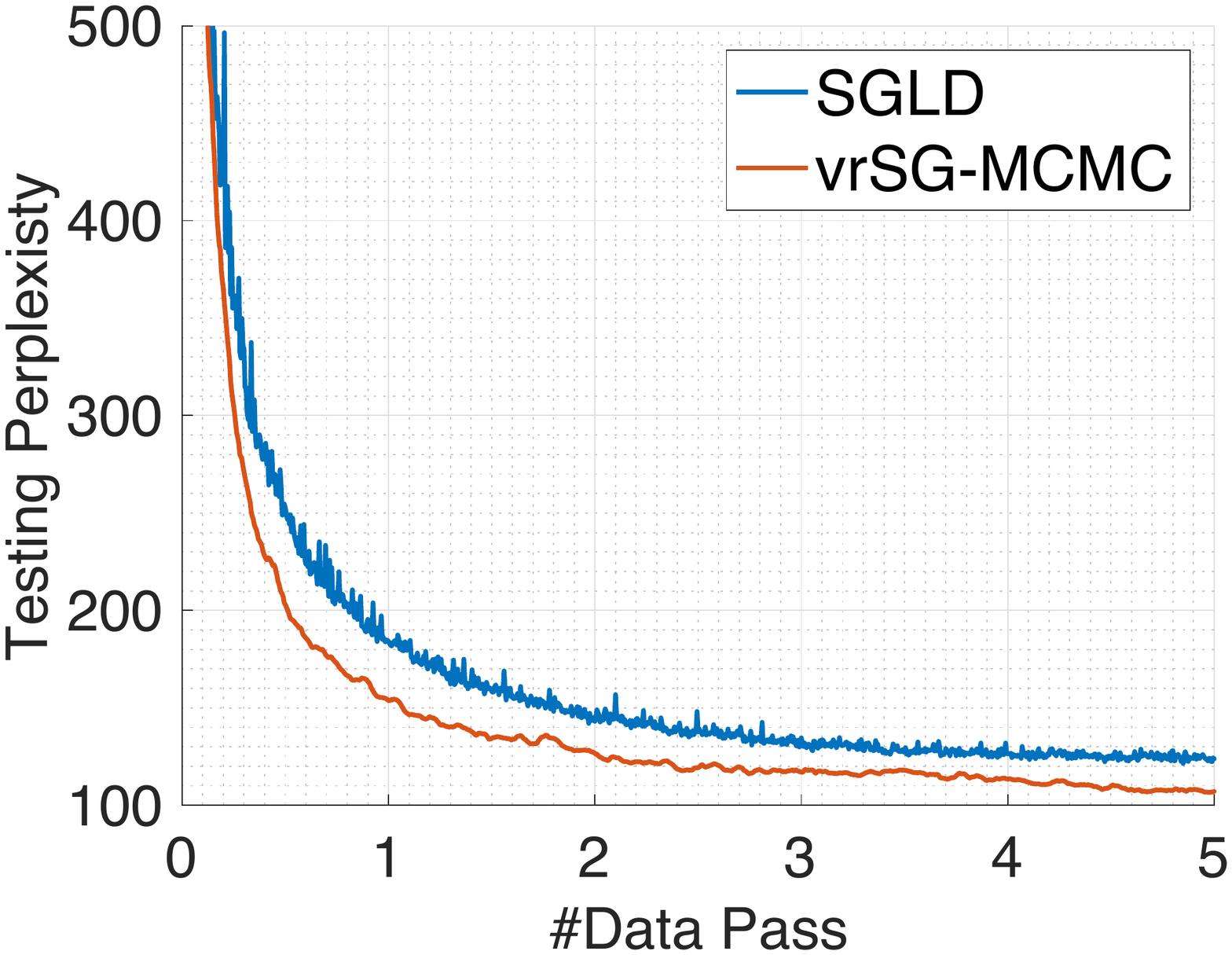}
		\end{minipage}
		\begin{minipage}{0.49\linewidth}
			\includegraphics[width=\columnwidth]{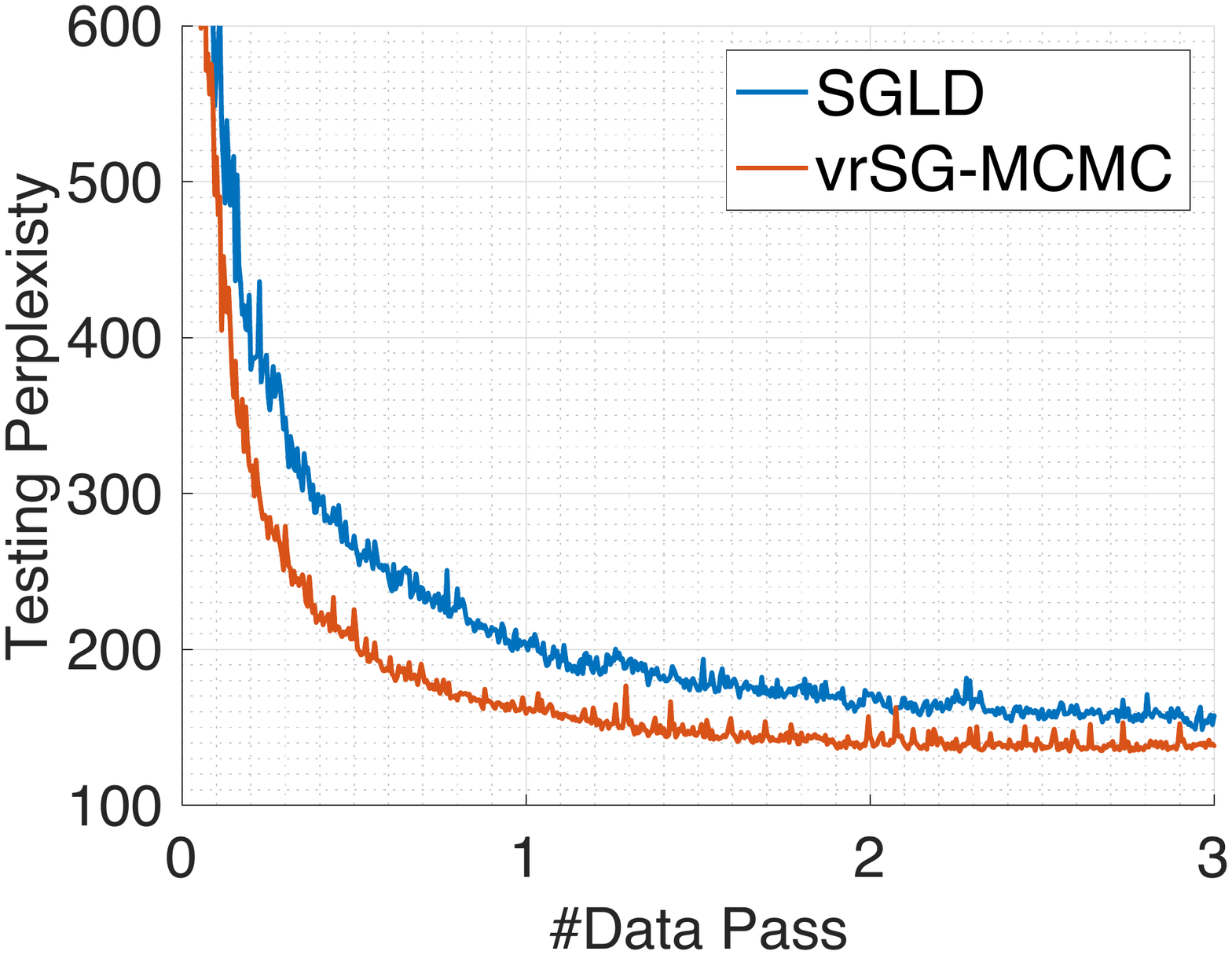}
		\end{minipage}
		\caption{Number of passes through data vs. testing perplexity on the PTB dataset (top) and WikiTest-2 dataset (bottom).}
		\label{fig:rnn}
	\end{center}
	\vskip -0.2in
\end{figure}

\subsection{Parameter sensitivity}\label{sec:sensity}

Note that one of the main differences between vrSG-MCMC and the recently proposed SVRG-LD \cite{DubeyRPSX:nips16} is that the former uses minibatches of size $n_1$ to calculate the {\em old gradient} $\tilde{g}$ in Algorithm~\ref{alg:pvrsgmcmc}, leading to a much more computationally efficient algorithm, with theoretical guarantees. This section tests the sensitivity of model performance to the parameter $n_1$. 

\begin{figure}[ht]
	\vskip -0.1in
	\begin{center}
		\begin{minipage}{0.75\linewidth}
			\includegraphics[width=\columnwidth]{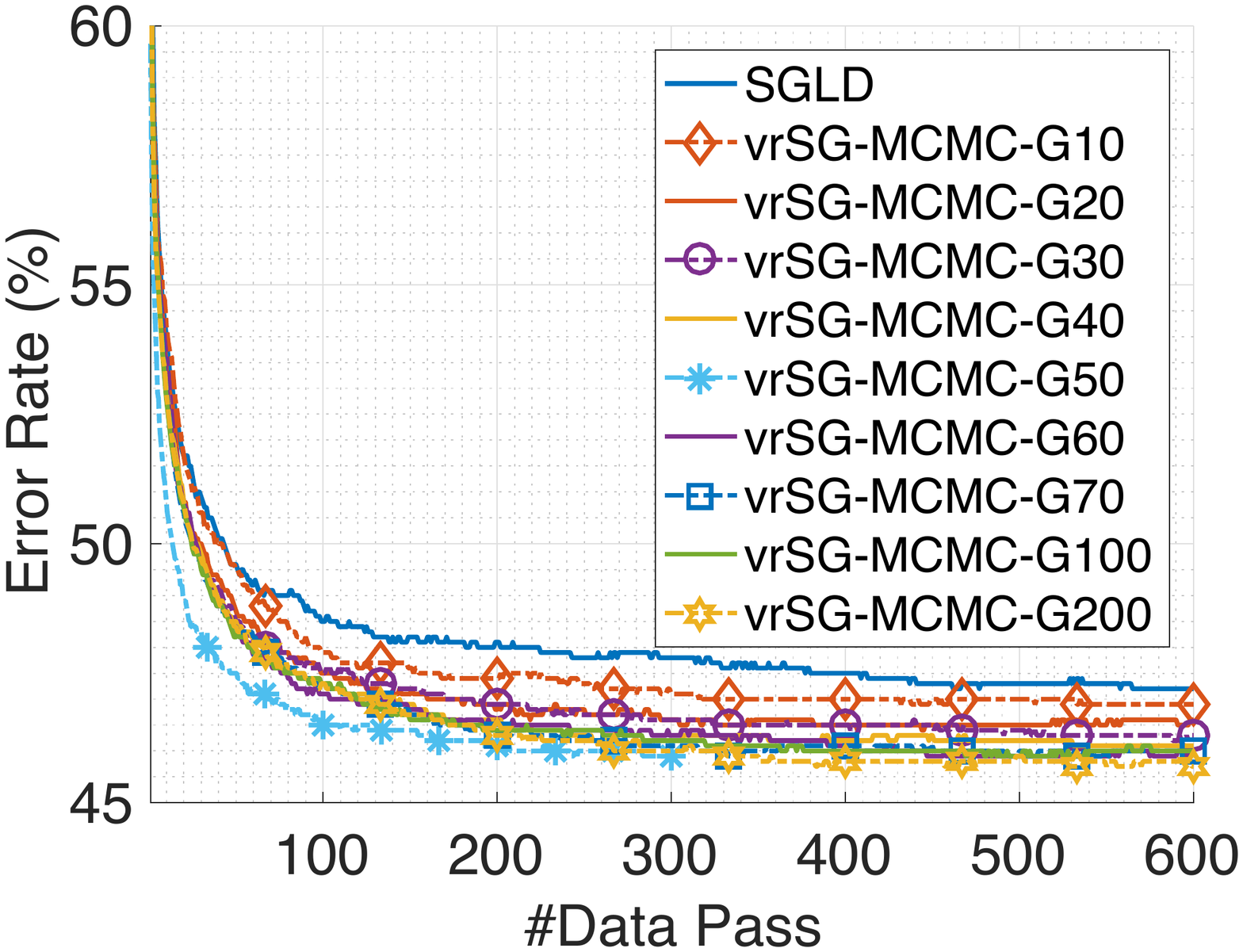}
		\end{minipage}
		\begin{minipage}{0.75\linewidth}
			\includegraphics[width=\columnwidth]{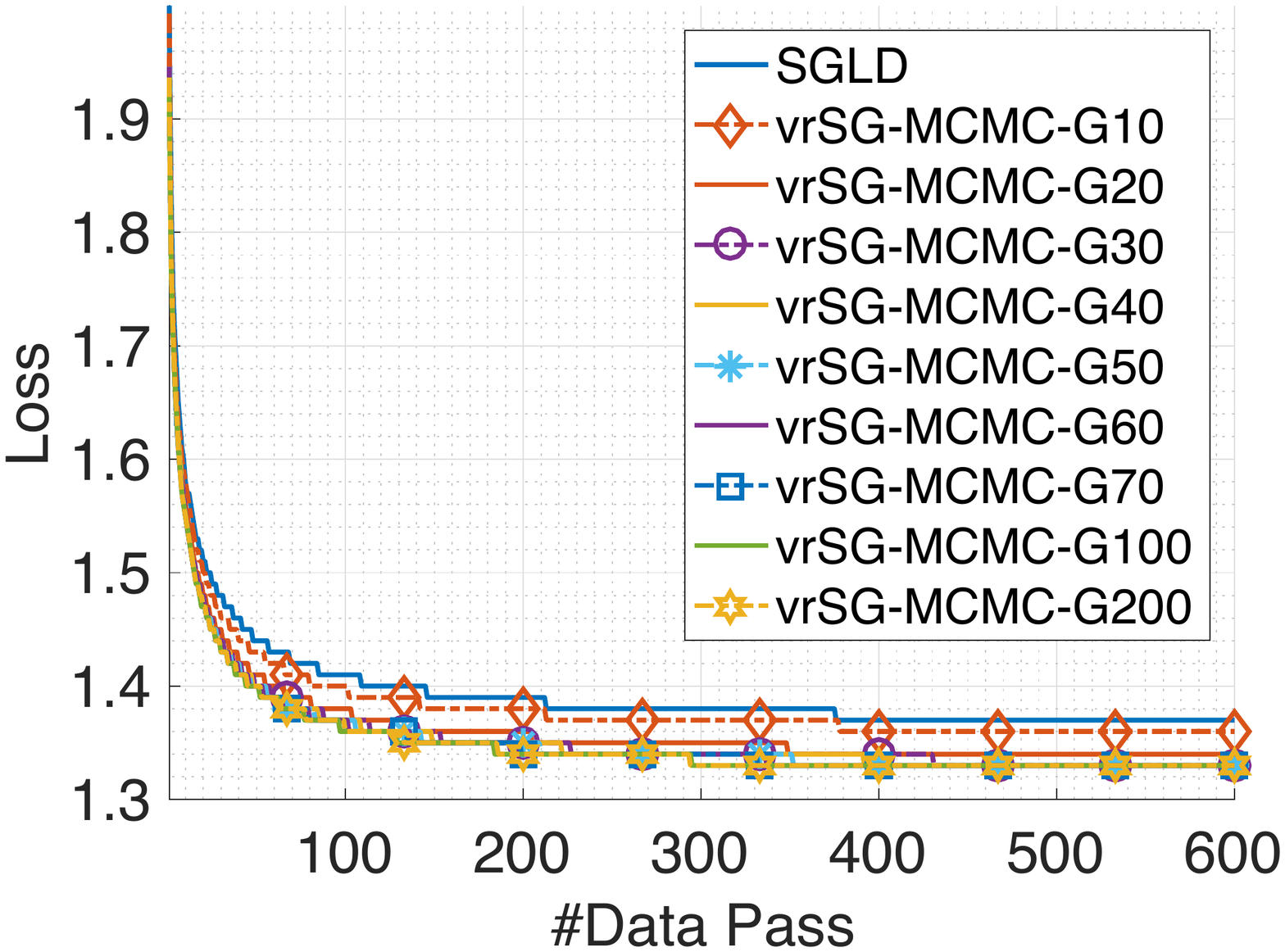}
		\end{minipage}\vskip -0.1in
		\caption{Number of passes through data vs. testing errors (top) / loss (bottom) on the CIFAR-10 dataset, with varying $n_1$ values.}
		\label{fig:n1}
	\end{center}
	\vskip -0.2in
\end{figure}

For simplicity, we run on the same MLP model described in Section~\ref{sec:exp_mlp} on the CIFAR-10 dataset, where the same parameter settings are used, but varying $n_1$ in $\{100, 200, 300, 400, 500, 600, 700, 1000, 2000\}$.
Figure~\ref{fig:n1} plots the number of passes through data versus test errors/loss, where we use ``vrSG-MCMC-Gn'' to denote vrSG-MCMC with $n_1 = n$. Interestingly, vrSG-MCMC outperforms the baseline SGLD on all $n_1$ values. Notably, when $n_1$ is large enough ($n_1 = 200$ in our case), their corresponding test errors and loss are very close. This agrees with the intuition that computing the {\em old gradient} using the whole training data is not necessarily a good choice in order to balance the stochastic gradient noise and computational time.

\section{Conclusion}
We investigate the impact of minibatches in SG-MCMC and propose a practical variance-reduction SG-MCMC algorithm to reduce the stochastic gradient noise in SG-MCMC. Compared to existing variance reduction techniques for SG-MCMC, the proposed method is efficient from both computational and storage perspectives. Theory is developed to guarantee faster convergence rates of vrSG-MCMC compared to standard SG-MCMC algorithms. Extensive experiments on Bayesian learning of deep neural networks verify the theory, obtaining significant speedup compared to the corresponding SG-MCMC algorithms.

\newpage
{\small
	\bibliographystyle{aaai}
	\bibliography{references}
}

\newpage
\appendix

\twocolumn[
\begin{center}
	\bf{\Large Supplementary Materials for:\\ A Theory for A Class of Practical Variance-Reduction Stochastic Gradient MCMC}
\end{center}\vspace{1cm}
]

\section{Basic Setup for Stochastic Gradient MCMC}\label{app:ass}

Given data $\Db = \{\db_1, \cdots, \db_N\}$, a generative model
$$p(\Db | \thetab) = \prod_{i=1}^N p(\db_i | \thetab)~,$$
with model parameter $\thetab \in \mathbb{R}^r$, and prior $p(\thetab)$, we want to compute the posterior distribution:
$$\rho(\thetab) \triangleq p(\thetab | \Db) \propto p(\Db | \thetab) p(\thetab) \triangleq e^{-U(\thetab)}~,$$
where
\begin{align}\label{eq:grad}
\nabla_{\thetab} U(\thetab) = -\nabla_{\thetab} \log p(\thetab) - \sum_{i=1}^N \nabla_{\thetab} \log p(\db_i | \thetab)~.
\end{align}

Consider the SDE:
\begin{align}\label{eq:ld}
\mathrm{d}\xb_t &= F(\xb_t)\mathrm{d}t + g(\xb_t)\mathrm{d}\mathcal{\wb}_t~,
\end{align}
where $\xb \in \mathbb{R}^d$ is the state variable, typically $\xb \supseteq \thetab$ is an augmentation of the model parameter, thus  $r \le d$; $t$ is the time index, $\mathcal{\wb}_t \in \mathbb{R}^d$ is 
$d$-dimensional Brownian motion; functions $F: \mathbb{R}^r \to \mathbb{R}^d$ and $g: \mathbb{R}^d \rightarrow \mathbb{R}^{d\times d}$ are assumed to satisfy the usual Lipschitz continuity condition \cite{Ghosh:book11}. In Langevin dynamics, we have $\xb = \thetab$ and
\begin{align*}
F(\thetab_t) &= -\nabla_{\thetab} U(\thetab_t) \\
g(\thetab_t) &= \sqrt{2}~.
\end{align*}

For the SDE in \eqref{eq:ld}, the generator $\mathcal{L}$ is defined as:
\begin{align}
\mathcal{L} \psi \triangleq \frac{1}{2}\nabla\psi \cdot F + \frac{1}{2}g(\thetab) g(\thetab)^{*}:D^2 \psi~,
\end{align}
where $\psi$ is a measurable function, $D^k \psi$ means the $k$-derivative of $\psi$, $*$ means transpose. $\ab\cdot\bb \triangleq \ab^T\bb$ for two vectors $\ab$ and $\bb$, $\Ab:\Bb \triangleq \mbox{trace}(\Ab^T\Bb)$ for two matrices $\Ab$ and $\Bb$. Under certain
assumptions, we have that there exists a function $\phi$ on $\mathbb{R}^d$ such that the following Poisson
equation is satisfied \cite{MattinglyST:JNA10}:
\begin{align}\label{eq:poissoneq}
\mathcal{L} \psi = \phi - \bar{\phi}~,
\end{align}
where $\bar{\phi} \triangleq \int \phi(\thetab) \rho(\mathrm{d} \thetab)$ denotes the model average, with $\rho$ being the equilibrium distribution for
the SDE \eqref{eq:ld}.

In stochastic gradient Langevin dynamics (SGLD), we update
the parameter $\thetab$ at step $l$, denoted as $\thetab_l$\footnote{Strictly speaking, $\thetab$ should be indexed by ``time'' instead of ``step'', {\it i.e.}, $\thetab_{\sum_{l = 1}^\prime=1^l h_{l^\prime}}$ instead of $\thetab_l$. We adopt the later for notation simplicity in the following. This applies for the general case of $\xb$.}, using the following descreatized method:
\begin{align*}
\thetab_{l+1} = \thetab_{l} -\nabla_{\thetab}\tilde{U}_l(\thetab_l) h_{l+1} + \sqrt{2h_{l+1}} \zetab_{l+1}~,
\end{align*}
where $h_{l+1}$ is the step size, $\zetab_{l}$ a Gaussian random variable with mean 0 and variance 1,
$\nabla_{\thetab}\tilde{U}_l$ is an unbiased estimate of $\nabla_{\thetab}U$ in \eqref{eq:grad} with 
a random minibatch of size n, {\it e.g.},
\begin{align}
\nabla_{\thetab} \tilde{U}_l(\thetab_l) = \nabla_{\thetab} \log p(\thetab_l) + \frac{N}{n}\sum_{i=1}^n \nabla_{\thetab} \log p(\xb_{\pi_i} | \thetab_l)~,
\end{align}
where $\{\pi_1, \cdots, \pi_n\}$ is a subset of a random permutation of $\{1, \cdots, N\}$.

In our analysis, we are interested in the {\em mean square error} (MSE) at iteration $L$, defined as
\begin{align*}
\mbox{MSE}_L \triangleq \mathbb{E}\left(\hat{\phi}_L - \bar{\phi}\right)^2~,
\end{align*}
where $\hat{\phi}_L \triangleq \frac{1}{L} \sum_{l=1}^L \phi(\thetab_l)$ denotes the sample average, $\bar{\phi}$ is the true posterior average
defined in  \eqref{eq:poissoneq}.

In this paper, for the function $f: \mathbb{R}^m \rightarrow \mathbb{R}$ in an $\mathcal{L}^p$ space, {\it i.e.}, a space of functions for which the $p$-th power of the absolute value is Lebesgue integrable, we consider the standard norm $\|f\|_p$ defined as ($\|f\|_{\infty}$ is simplified as $\|f\|$):
\begin{align*}
\|f\|_p \triangleq \left(\int_{\mathbb{R}^m} |f(\xb)|^p\mathrm{d}\xb\right)^{1/p} < \infty~.
\end{align*}
In order to guarantee well-behaved SDEs and the corresponding numerical integrators, following existing literatures such as \cite{VollmerZT:arxiv15,ChenDC:NIPS15}, we impose the following assumptions.

\begin{assumption}\label{ass:assumption1}
	The SDE \eqref{eq:ld} is ergodic. Furthermore, the solution of \eqref{eq:poissoneq} exists, and the solution functional $\psi$ of the Poisson equation \eqref{eq:poissoneq} satisfies the following properties:
	\begin{itemize}
		\item $\psi$ and its up to 3th-order derivatives $\mathcal{D}^k \psi$, are bounded by a function $\mathcal{V}$, {\it i.e.}, $\|\mathcal{D}^k \psi\| \leq C_k\mathcal{V}^{p_k}$ for $k=(0, 1, 2, 3, 4)$, $C_k, p_k > 0$.
		\item The expectation of $\mathcal{V}$ on $\{\xb_{l}\}$ is bounded: $\sup_l \mathbb{E}\mathcal{V}^p(\xb_{l}) < \infty$.
		\item $\mathcal{V}$ is smooth such that $\sup_{s \in (0, 1)} \mathcal{V}^p\left(s\xb + \left(1-s\right)\yb\right) \leq C\left(\mathcal{V}^p\left(\xb\right) + \mathcal{V}^p\left(\yb\right)\right)$, $\forall \xb \in \mathbb{R}^m, \yb \in \mathbb{R}^m, p \leq \max\{2p_k\}$ for some $C > 0$.
	\end{itemize}
\end{assumption}

\section{Proofs of Extended Results for Standard SG-MCMC}

First, according to the definition of $\Delta V_l$, we note that $\Delta V_l\psi = (\nabla_{\thetab}U_l(\thetab) - \nabla_{\thetab}\tilde{U}_l(\thetab))\!\cdot\!\nabla \psi$ for the solution functional $\psi$ of the Poisson equation~\ref{eq:poissoneq}. Since $\|\Delta V_l \psi\| \leq \|\nabla_{\thetab}U_l(\thetab) - \nabla_{\thetab}\tilde{U}_l(\thetab)\|\|\nabla \psi\|$, and $\|\nabla\psi\|$ is assumed to be bounded for a test function $\psi$, we omit the operator $\nabla$ in our following analysis (which only contributes to a constant), manifesting a slight abuse of notation for conciseness.

The proofs of Lemma~\ref{lem:exp_alpha} and Theorem~\ref{theo:mse_batch} are closely related. We will first prove Theorem~\ref{theo:mse_batch}, the proof for Lemma~\ref{lem:exp_alpha} is then directly followed.

\begin{proof}[Proof of Theorem~\ref{theo:mse_batch}]
	
	Let $\alphab_{li} = \nabla_{\thetab}\log p(\db_i | \thetab_{l})$, and
	\begin{align*}
	z_i = \left\{\begin{array}{ll}
	1 & \text{ if data $i$ is selected} \\
	0 & \text{ otherwise}
	\end{array}\right.~,
	\end{align*}
	then we have
	\begin{align*}
	\Delta V_l &= \sum_{i=1}^N \mathbb{E}\alphab_{li} \left(1 - \frac{N}{n}z_i\right) \\
	\rightarrow \mathbb{E}\left|\Delta V_l\right|^2 &= \sum_{i=1}^N \sum_{j=1}^N \mathbb{E} \alphab_{li} \mathbb{E} \alphab_{lj} \left(1 - \frac{N}{n}z_i\right)\left(1 - \frac{N}{n}z_j\right)~.
	\end{align*}
	
	Since
	\begin{align*}
	\mathbb{E} z_i =& \frac{1}{N} + \frac{N-1}{N} \frac{1}{N-1} + \cdots \\
	+& \frac{N-1}{N} \frac{N-2}{N-1}\cdots \frac{N-m+1}{N-m+2} \frac{1}{N-m+1} \\
	=& \frac{n}{N}~,
	\end{align*}
	we have $\mathbb{E}\Delta V_l = 0$, {\it i.e.}, $\nabla \tilde{U}_l(\theta)$ is an unbiased estimate of $\nabla U(\theta)$.
	
	In addition, we have
	\begin{align*}
	&\mathbb{E}\left(1 - \frac{N}{n}z_i\right)\left(1 - \frac{N}{n}z_j\right) \\
	&= \mathbb{E}\left[1 - \frac{N}{n}z_i - \frac{N}{n}z_j + \frac{N^2}{n^2}z_iz_j\right] \\
	&= 1 - 2\frac{N}{n}\frac{n}{N} + \frac{N^2}{n^2}\mathbb{E}z_iz_j \\
	&= \frac{N^2}{n^2}\mathbb{E}z_iz_j - 1~.
	\end{align*}
	
	When $i = j$,
	\begin{align*}
	&\mathbb{E}\left(1 - \frac{N}{n}z_i\right)\left(1 - \frac{N}{n}z_j\right) = \frac{N^2}{n^2}\mathbb{E}z_i^2 - 1 \\
	=&\frac{N^2}{n^2}\mathbb{E}z_i - 1 = \frac{N}{n} - 1~.
	\end{align*}
	
	When $i \neq j$, because
	\begin{align*}
	\mathbb{E}z_iz_j &= p(i \mbox{ selected}) p(j\mbox{ selected} | i\mbox{ selected}) \\
	&= \frac{n}{N} \frac{n-1}{N-1}~.
	\end{align*}
	We have
	\begin{align*}
	&\mathbb{E}\left(1 - \frac{N}{n}z_i\right)\left(1 - \frac{N}{n}z_j\right) = \frac{N^2}{n^2}\mathbb{E}z_i z_j - 1 \\
	=&\frac{N}{n}\frac{n-1}{N-1} - 1~.
	\end{align*}
	
	As a result,
	\begin{align}\label{eq:deltavl}
	&\mathbb{E}\left|\Delta V_l\right|^2 \nonumber\\
	=& \left(\sum_{i=1}^N \mathbb{E}\alphab_{li}^2\right)\left(\frac{N}{n} - 1\right) + 2 \sum_{i < j}\mathbb{E}\alphab_{li}\alphab_{lj} \left(\frac{N}{n} \frac{n-1}{N-1} - 1\right) \nonumber\\
	=& \left(\frac{N}{n} - 1\right) \sum_{i,j}^N \mathbb{E}\alphab_{li}\alphab_{lj} + 2\sum_{i < j}\mathbb{E}\alphab_{li} \alphab_{lj} \left(\frac{N}{n} \frac{n-1}{N-1} - \frac{N}{n}\right) \nonumber\\
	=& \left(\frac{N}{n} - 1\right) \sum_{i,j}^N \mathbb{E}\alphab_{li}\alphab_{lj} - 2\sum_{i < j}\mathbb{E}\alphab_{li}\alphab_{lj} \frac{N}{n} \frac{N-n}{N-1} \nonumber\\
	=&\frac{(N - n)N^2}{n}\left(\frac{1}{N^2}\sum_{i,j}\mathbb{E}\alphab_{li}\alphab_{lj} - \frac{2}{N(N-1)}\sum_{i\leq j}\mathbb{E}\alphab_{li}\alphab_{lj}\right) \nonumber\\
	\triangleq& \frac{(N - n)N^2}{n} \Gamma_l~.
	\end{align}
	
	Because we assume using a 1st-order numerical integrator, according to Lemma~\ref{lem:biasmse}, and combining \eqref{eq:deltavl} from above, 
	we have the bound for the MSE $\mathbb{E}\left(\hat{\phi}_L - \bar{\phi}\right)^2$ as:
	\begin{align*}
	\mathbb{E}\left(\hat{\phi}_L - \bar{\phi}\right)^2 
	\leq C\left(\frac{(N-n)N^2\Gamma_M}{nL} + \frac{1}{Lh} + h^2\right)~.
	\end{align*}
	
\end{proof}

\begin{proof}[Proof of Lemma~\ref{lem:exp_alpha}]
	The lemma follows directly from \eqref{eq:deltavl} and the fact that $$\mathbb{E}\left|\Delta V_l\right|^2 \geq 0~.$$
\end{proof}

\begin{proof}[Proof of the optimal MSE bound of Theorem~\ref{theo:mse_batch}]
	
	From the assumption, we have
	\begin{align}\label{eq:runtime}
	T \propto nL~.
	\end{align}
	The MSE bounded is obtained by directly substituting \eqref{eq:runtime} into the MSE bound in Lemma~\ref{lem:exp_alpha}, resulting in
	\begin{align*}
	\mbox{MSE: }\mathbb{E}\left(\hat{\phi}_L - \bar{\phi}\right)^2 
	&\leq C \left(\frac{(N-n)N^2 \Gamma_M}{T} + \frac{n}{T h} + h^{2}\right)
	\end{align*}
	After optimizing the above bound over $h$, we have
	\begin{align}\label{eq:msebound_t}
	\mathbb{E}\left(\hat{\phi}_L - \bar{\phi}\right)^2 \leq C \left(\frac{(N-n)N^2 \Gamma_M}{T} + \frac{n^{2/3}}{T^{2/3}}\right)~.
	\end{align}
	
\end{proof}

\begin{proof}[Proof of Corollary~\ref{cor:optimalT}]
	
	To examine the property of the MSE bound \eqref{eq:msebound_t} w.r.t.\! $n$, we first note that the derivative can be written as:
	\begin{align*}
	f \triangleq \frac{\partial}{\partial n}\mathbb{E}\left(\hat{\phi}_L - \bar{\phi}\right)^2 = O\left(\frac{2}{3T^{2/3}n^{1/3}} - \frac{\Gamma_M N^2}{T}\right)~.
	\end{align*}
	As a result, we have the following three cases:
	\begin{itemize}
		\item[1)] When $f < 0$, {\it i.e.}, the bound is decreasing when $n$ increasing, we have $T < \frac{27}{8}\Gamma_M^3N^6n$. Because $n$ is in the range of $[1, N]$, and we require $f < 0$ for all $n$'s, the minimum value of $\frac{27}{8}\Gamma_M^3N^6n$ is obtained when taking $n = 1$. Consequently, we have that when $T < \frac{27}{8}\Gamma_M^3N^6$, the optimal MSE bound \eqref{eq:msebound_t} is decreasing w.r.t.\! $n$. The minimum MSE bound is thus achieved at $n = N$. This case corresponds to the limited-computation-budget case.
		\item[2)] When $f > 0$, {\it i.e.}, the bound is increasing when $n$ increasing, we have $T > \frac{27}{8}\Gamma_M^3N^6n$. Because $n$ is in the range of $[1, N]$, and we require $f > 0$ for all $n$'s, the maximum value of $\frac{27}{8}\Gamma_M^3N^6n$ is obtained when taking $n = N$. Consequently, we have that when $T > \frac{27}{8}\Gamma_M^3N^7$, the optimal MSE bound \eqref{eq:msebound_t} is increasing w.r.t.\! $n$. The minimum MSE bound is thus achieved at $n = 1$. This case corresponds to the long-run case (computational budget is large enough).
		\item[3)] When the computational budget is in between the above two cases, the optimal MSE bound \eqref{eq:msebound_t} first increases then decreases w.r.t.\! $n$ in range $[1, N]$. The optimal MSE bound is thus obtained either at $n = 1$ or at $n = N$, depending on $(N, T, \Gamma_M)$.
	\end{itemize}
	
\end{proof}

\section{Proofs of theorems for vrSG-MCMC}\label{app:theory}


\begin{proof}[Proof of Lemma~\ref{lem:alpha_beta}]
	From the definitions, we know that $\alphab_{li}$ is the same as $\betab_{li}$ except evaluating on different model parameters, denoted as $\thetab_{l}$ and $\tilde{\thetab}_l$, respectively. Note that $\tilde{\thetab}_l$ is an {\em outdated} version of $\thetab_l$, with difference at most $m$. The proof of Lemma~\ref{lem:alpha_beta} is then an application of a lemma from \cite{ChenDLZC:NIPS16}, which is stated in Lemma below.
	\begin{lemma}[Lemma~8 in \cite{ChenDLZC:NIPS16}]\label{lem:old}
		Let $\thetab_{l}$ and $\tilde{\thetab_{l}}$ be two parameters where $\tilde{\thetab_{l}}$ is $\tau$-step older than $\thetab_{l}$, then we have
		\begin{align*}
		\left\|\mathbb{E}\left(\nabla_{\thetab}\log p(\db | \thetab_{l}) - \nabla_{\thetab}\log p(\db | \tilde{\thetab}_{l})\right)\right\| = O(\tau h)~.
		\end{align*}
	\end{lemma}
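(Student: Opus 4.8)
The plan is to prove the one-step version of the bound and then telescope. View $\tilde{\thetab}_l$ as $\thetab_{l-\tau}$, so that both parameters lie on the same SGLD chain, $\tau$ steps apart, and write $\mathbf{G}(\thetab)\triangleq\nabla_{\thetab}\log p(\db\mid\thetab)$ for the fixed datum $\db$. The quantity of interest is $\|\mathbb{E}[\mathbf{G}(\thetab_l)-\mathbf{G}(\thetab_{l-\tau})]\|$, and because the expectation sits \emph{outside} the norm I can telescope:
\[
\mathbb{E}\mathbf{G}(\thetab_l)-\mathbb{E}\mathbf{G}(\thetab_{l-\tau})=\sum_{k=l-\tau}^{l-1}\bigl(\mathbb{E}\mathbf{G}(\thetab_{k+1})-\mathbb{E}\mathbf{G}(\thetab_{k})\bigr).
\]
It therefore suffices to show the single-step weak increment obeys $\|\mathbb{E}\mathbf{G}(\thetab_{k+1})-\mathbb{E}\mathbf{G}(\thetab_{k})\|=O(h)$ uniformly in $k$; summing $\tau$ such terms then yields $O(\tau h)$.

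For the single-step estimate I condition on $\thetab_k$ and substitute the SGLD increment $\Delta_k\triangleq\thetab_{k+1}-\thetab_k=-\nabla_{\thetab}\tilde{U}_k(\thetab_k)\,h+\sqrt{2h}\,\zetab_{k+1}$, then Taylor-expand $\mathbf{G}$ to second order about $\thetab_k$:
\[
\mathbf{G}(\thetab_k+\Delta_k)-\mathbf{G}(\thetab_k)=D\mathbf{G}(\thetab_k)\,\Delta_k+\tfrac12 D^2\mathbf{G}(\thetab_k)[\Delta_k,\Delta_k]+R_k,
\]
and take the conditional expectation. Using unbiasedness $\mathbb{E}[\nabla_{\thetab}\tilde{U}_k\mid\thetab_k]=\nabla_{\thetab}U(\thetab_k)$, $\mathbb{E}\zetab_{k+1}=\mathbf{0}$ and $\mathbb{E}[\zetab_{k+1}\zetab_{k+1}^{T}]=\Ib$, the first-order term contributes $-h\,D\mathbf{G}(\thetab_k)\nabla_{\thetab}U(\thetab_k)$ while the \emph{linear-in-noise} part vanishes, and the second-order term contributes $h$ times the contraction of $D^2\mathbf{G}(\thetab_k)$ against $\Ib$ plus $O(h^2)$, since $\mathbb{E}[\Delta_k\Delta_k^{T}\mid\thetab_k]=2h\,\Ib+O(h^2)$. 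Equivalently $\mathbb{E}[\mathbf{G}(\thetab_{k+1})\mid\thetab_k]=\mathbf{G}(\thetab_k)+h\,\mathcal{L}\mathbf{G}(\thetab_k)+O(h^{3/2})$, with $\mathcal{L}$ the generator of \eqref{eq:ld}. Under Assumption~\ref{ass:assumption1} together with the Lipschitz (hence bounded-derivative) hypothesis on $\nabla_{\thetab}U$, the relevant derivatives of $\log p(\db\mid\cdot)$ are dominated by a fixed power of $\mathcal{V}$, so taking the full expectation and invoking $\sup_k\mathbb{E}\mathcal{V}^p(\thetab_k)<\infty$ bounds each term by $Ch$ with $C$ independent of $k$.

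The essential point, and the main obstacle, is that the Brownian increments do \emph{not} accumulate to the declared order under a naive argument: applying Jensen followed by Lipschitzness of $\mathbf{G}$ gives only $\|\mathbb{E}[\mathbf{G}(\thetab_l)-\mathbf{G}(\thetab_{l-\tau})]\|\le L\,\mathbb{E}\|\thetab_l-\thetab_{l-\tau}\|=O(\sqrt{\tau h})$, because the accumulated noise $\sum\sqrt{2h}\,\zetab$ has size $\Theta(\sqrt{\tau h})$. The improvement to $O(\tau h)$ is possible precisely because the expectation is kept outside the norm: the odd-in-noise contribution cancels, so noise enters the one-step weak increment only through its variance, at order $h$ rather than $\sqrt{h}$. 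The remaining technical work is to control the Taylor remainder $\mathbb{E}[R_k]$ uniformly in $k$, which is handled by $\mathcal{V}$-domination of the third derivative of $\mathbf{G}$ against the third absolute moment of $\Delta_k$ (itself $O(h^{3/2})$), ensuring the per-step error is genuinely $O(h)$ and the telescoped sum is $O(\tau h)$.
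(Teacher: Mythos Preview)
The paper does not actually prove this lemma: it is quoted verbatim as Lemma~8 of \cite{ChenDLZC:NIPS16} and used as a black box inside the proof of Lemma~\ref{lem:alpha_beta}. Your proposal therefore supplies something the present paper omits entirely, and your argument is the standard (and correct) one for this kind of weak one-step error: telescope over single steps, Taylor expand the test map $\mathbf{G}$ about $\thetab_k$, and use that the Gaussian increment enters the \emph{expectation} only through even moments, so the per-step drift is $h\,\mathcal{L}\mathbf{G}(\thetab_k)+O(h^{3/2})$ rather than $O(\sqrt{h})$. Your explicit contrast with the naive Jensen/Lipschitz route, which would give only $O(\sqrt{\tau h})$, is exactly the point.

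One minor caution on the bookkeeping: Assumption~\ref{ass:assumption1} in this paper bounds derivatives of the Poisson solution $\psi$, not of the per-datum score $\mathbf{G}(\cdot)=\nabla_{\thetab}\log p(\db\mid\cdot)$, and Lipschitzness of $\nabla_{\thetab}U$ controls only the first derivative of the \emph{aggregate} score. To make your ``$\mathcal{V}$-domination of $D\mathbf{G}$, $D^2\mathbf{G}$, $D^3\mathbf{G}$'' step rigorous you need the corresponding smoothness on the individual log-likelihoods, which is precisely what Assumption~1 of \cite{ChenDLZC:NIPS16} provides; you should cite that assumption directly rather than the present paper's Assumption~\ref{ass:assumption1}. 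With that adjustment your argument is complete and matches the intended proof in the cited reference.
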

	Based on the definitions in Algorithm~\ref{alg:pvrsgmcmc}, we can consider $\betab_{li}$ as an outdated version of $\alphab_{li}$, with time difference $m$. As a result, Lemma~\ref{lem:alpha_beta} follows by replacing $\tau$ with $m$ in Lemma~\ref{lem:old}.
\end{proof}

The following is a formal proof of the unbiasness of $\nabla_{\thetab}\tilde{U}(\thetab_{l})$, stated in the ``Convergence rate'' section in the main text.

\begin{proof}[Proof of the unbiasness of $\nabla_{\thetab}\tilde{U}(\thetab_{l}) $]
	
	First note that in variance reduction, the following stochastic gradient is used:
	\begin{align}
	\nabla_{\thetab}\tilde{U}(\thetab_{l}) =& \frac{N}{n_2}\sum_{i=1}^N\left(\nabla_{\thetab}\log p(\xb_{i}|\thetab_{l}) - \nabla_{\thetab}\log p(\xb_{i}|\tilde{\thetab}_{l})\right)z_i \nonumber\\
	+& \frac{N}{n_1}\sum_{i=1}^N\sum_{i=1}^N\nabla_{\thetab}\log p(\xb_{i}|\tilde{\thetab}_{l}) b_i~.
	\end{align}
	
	\begin{align}\label{eq:deltaVl}
	\Delta V_l = \sum_{i=1}^N\alphab_{li}\left(1 - \frac{N}{n_2}z_i\right) + \sum_{i=1}^N\betab_{li}\left(\frac{N}{n_2}z_i - \frac{N}{n_1}b_i\right)~.
	\end{align}
	
	Because $\mathbb{E}z_i = \frac{n_2}{N}$, $\mathbb{E}b_i = \frac{n_1}{N}$,
	it is easy to verify that $\mathbb{E}\Delta V_l = 0$. As a result, the unbiasness holds.
	
\end{proof}

\section{Proof of Theorem~\ref{theo:main}}

Before proving Theorem~\ref{theo:main}, let us first simplify $\mathbb{E}\|\Delta V_l\|^2$ in the MSE bound. In the following, we decompose it into several terms which can be simplified separately. Our goal is to show that the proposed vrSG-MCMC algorithm induces a smaller $\mathbb{E}\|\Delta V_l\|^2$ term, thus leading to a faster convergence rate.
Note we can rewrite $\Delta V_l$ in terms of $\{\alphab_{li}, \betab_{li}, z_i, b_i\}$ as:
\begin{align*}
\Delta V_l = \sum_{i=1}^N\alphab_{li}\left(1 - \frac{N}{n_2}z_i\right) + \sum_{i=1}^N\betab_{li}\left(\frac{N}{n_2}z_i - \frac{N}{n_1}b_i\right)~.
\end{align*}

Consequently, we have
\begin{align}\label{eq:expDelVl}
\mathbb{E}&\|\Delta V_l\|^2 = \underbrace{\sum_{i,j}\mathbb{E}\alphab_{li}^T\alphab_{lj}\left(1 - \frac{N}{n_2}z_i\right)\left(1 - \frac{N}{n_2}z_j\right)}_{A_l} \nonumber \\
&+ \underbrace{\sum_{i,j}\mathbb{E}\betab_{li}^T\betab_{lj}\left(\frac{N}{n_2}z_i - \frac{N}{n_1}b_i\right)\left(\frac{N}{n_2}z_j - \frac{N}{n_1}b_j\right)}_{B_l} \nonumber \\
&+ \underbrace{2\sum_{i,j}\mathbb{E}\alphab_{li}^T\betab_{lj}\left(1 - \frac{N}{n_2}z_i\right)\left(\frac{N}{n_2}z_j - \frac{N}{n_1}b_j\right)}_{C_l}~.
\end{align}

Now \eqref{eq:expDelVl} can be further simplified by summing over all the binary random variables $\{z_i\}$ and $\{b_i\}$. After summing out the binary random variables $\{z_i, b_i\}$, we arrive formula summarized in the following proposition:

\begin{proposition}\label{prop:MSE_vr}
	The terms $A_l$, $B_l$ and $C_l$ in \eqref{eq:expDelVl} can be simplified as:
	\begin{align*}
	A_l &= \left(\frac{N}{n_2} - 1\right)\sum_{ij}\mathbb{E}\alphab_{li}^T\alphab_{lj} - 2\frac{N(N-n_2)}{n_2(N-1)}\sum_{i<j}\mathbb{E}\alphab_{li}^T\alphab_{lj} \\
	B_l &= \left(\frac{N}{n_2} + \frac{N}{n_1} - 2\right)\sum_{ij}\mathbb{E}\betab_{li}^T\betab_{lj} \\
	&~~- 2\left(\frac{N(N-n_2)}{n_2(N-1)} + \frac{N(N-n_1)}{n_1(N-1)}\right)\sum_{i<j}\mathbb{E}\betab_{li}^T\betab_{lj} \\
	C_l &= 2\left(1 - \frac{N}{n_2}\right)\sum_{ij}\mathbb{E}\alphab_{li}^T\betab_{lj} + 4\frac{N(N-n_2)}{n_2(N-1)}\sum_{i<j}\mathbb{E}\alphab_{li}^T\betab_{lj}~.
	\end{align*}
\end{proposition}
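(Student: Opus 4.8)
The plan is to evaluate $\mathbb{E}\|\Delta V_l\|^2$ term by term from the decomposition \eqref{eq:expDelVl}, using the fact that, conditioned on $\thetab_l$ and $\tilde{\thetab}_l$, the two minibatch samplings in Algorithm~\ref{alg:pvrsgmcmc} are drawn independently of one another and independently of the chain's randomness up to iteration $l$. Consequently every expectation appearing in $A_l$, $B_l$, $C_l$ factors into a ``gradient part'' ($\mathbb{E}\alphab_{li}^T\alphab_{lj}$, $\mathbb{E}\betab_{li}^T\betab_{lj}$, or $\mathbb{E}\alphab_{li}^T\betab_{lj}$) times an ``indicator part'' depending only on $\{z_i\}$ (the size-$n_2$ current minibatch) and $\{b_i\}$ (the size-$n_1$ old-gradient minibatch). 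The indicator moments needed are the elementary ones: $\mathbb{E}z_i = n_2/N$, $\mathbb{E}b_i = n_1/N$, $\mathbb{E}z_i^2=\mathbb{E}z_i$, $\mathbb{E}b_i^2=\mathbb{E}b_i$, $\mathbb{E}[z_iz_j]=\frac{n_2(n_2-1)}{N(N-1)}$ and $\mathbb{E}[b_ib_j]=\frac{n_1(n_1-1)}{N(N-1)}$ for $i\neq j$, and, by independence of the two samplings, $\mathbb{E}[z_ib_j]=\frac{n_1n_2}{N^2}$ for all $i,j$ --- the same identities already used in the proof of Theorem~\ref{theo:mse_batch}.

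First I would dispatch $A_l$. Its indicator factor is $\mathbb{E}[(1-\tfrac{N}{n_2}z_i)(1-\tfrac{N}{n_2}z_j)]$, which is exactly the quantity computed in \eqref{eq:deltavl} with $n$ replaced by $n_2$: it equals $\tfrac{N}{n_2}-1$ when $i=j$ and $\tfrac{N}{n_2}\tfrac{n_2-1}{N-1}-1$ when $i\neq j$. Substituting and regrouping via $\sum_{ij}=\sum_{i=j}+2\sum_{i<j}$ reproduces the stated form of $A_l$ verbatim (with $n\mapsto n_2$). For $B_l$ I would expand $(\tfrac{N}{n_2}z_i-\tfrac{N}{n_1}b_i)(\tfrac{N}{n_2}z_j-\tfrac{N}{n_1}b_j)$ into four pieces; the two cross pieces each have expectation $\tfrac{N^2}{n_1n_2}\mathbb{E}[z_ib_j]=1$ by independence, contributing $-2$, while the remaining two contribute $\tfrac{N}{n_2}+\tfrac{N}{n_1}-2$ when $i=j$ and $\tfrac{N(n_2-1)}{n_2(N-1)}+\tfrac{N(n_1-1)}{n_1(N-1)}-2$ when $i\neq j$. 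Using the algebraic simplification $\tfrac{N(n-1)}{n(N-1)}-\tfrac{N}{n}=-\tfrac{N(N-n)}{n(N-1)}$, the difference between the $i\neq j$ and $i=j$ coefficients equals $-\tfrac{N(N-n_2)}{n_2(N-1)}-\tfrac{N(N-n_1)}{n_1(N-1)}$, and the same $\sum_{ij}=\sum_{i=j}+2\sum_{i<j}$ regrouping yields the claimed $B_l$.

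Finally, for $C_l$ I would expand $(1-\tfrac{N}{n_2}z_i)(\tfrac{N}{n_2}z_j-\tfrac{N}{n_1}b_j)$; since $\mathbb{E}[\tfrac{N}{n_2}z_j]=\mathbb{E}[\tfrac{N}{n_1}b_j]=\mathbb{E}[\tfrac{N^2}{n_1n_2}z_ib_j]=1$, the expectation collapses to $1-\tfrac{N^2}{n_2^2}\mathbb{E}[z_iz_j]$, which is $1-\tfrac{N}{n_2}$ for $i=j$ and $1-\tfrac{N(n_2-1)}{n_2(N-1)}$ for $i\neq j$, with difference $\tfrac{N(N-n_2)}{n_2(N-1)}$. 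Pulling out the leading factor $2$ and regrouping --- here reading $\sum_{i<j}\mathbb{E}\alphab_{li}^T\betab_{lj}$ as the symmetrized sum over unordered pairs, i.e. $\tfrac12\sum_{i\neq j}\mathbb{E}\alphab_{li}^T\betab_{lj}$, since the $\alphab$--$\betab$ cross term need not be symmetric in $i,j$ --- gives $C_l = 2(1-\tfrac{N}{n_2})\sum_{ij}\mathbb{E}\alphab_{li}^T\betab_{lj} + 4\tfrac{N(N-n_2)}{n_2(N-1)}\sum_{i<j}\mathbb{E}\alphab_{li}^T\betab_{lj}$, as stated. The whole argument is essentially the bookkeeping of the Theorem~\ref{theo:mse_batch} proof carried out for the richer estimator; the only genuinely non-routine points are justifying the factorization of the gradient and indicator parts (independence of the minibatch draws from the chain and from each other) and keeping the symmetrization convention for the $C_l$ term consistent.
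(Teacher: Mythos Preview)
Your proposal is correct and follows essentially the same approach as the paper: both arguments reduce $A_l$ to the computation already done for Theorem~\ref{theo:mse_batch} with $n\mapsto n_2$, handle $B_l$ and $C_l$ by expanding the indicator products, invoking independence of $\{z_i\}$ and $\{b_i\}$ so that all $\mathbb{E}[z_ib_j]$ cross moments equal $n_1n_2/N^2$, and then regrouping via the diagonal/off-diagonal split. Your explicit remark that $\sum_{i<j}\mathbb{E}\alphab_{li}^T\betab_{lj}$ must be read as $\tfrac12\sum_{i\neq j}$ (since the cross term is not symmetric in $i,j$) is a useful clarification that the paper leaves implicit.
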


\begin{proof}[Proof of Proposition~\ref{prop:MSE_vr}]
	
	First, for the $A_l$ term, from the proof of Theorem~\ref{theo:mse_batch}, we know that
	\begin{align*}
	A_l &= \left(\frac{N}{n_2} - 1\right)\sum_{ij}\mathbb{E}\alphab_{li}\alphab_{lj} - 2\frac{N(N-n_2)}{n_2(N-1)}\sum_{i<j}\mathbb{E}\alphab_{li}\alphab_{lj}~,
	\end{align*}
	which is the value of $\mathbb{E}\|\Delta V_l\|^2$ for standard SG-MCMC. 
	
	The derivations for $B_l$ and $C_l$ go as follows. For $B_l$, we have
	\begin{align*}
	&\mathbb{E}\left(\frac{N}{n_2}z_i - \frac{N}{n_1}b_i\right)\left(\frac{N}{n_2}z_j - \frac{N}{n_1}b_j\right) \\
	=& \mathbb{E}\left(\frac{N^2}{n_2^2}z_iz_j + \frac{N^2}{n_1^2}b_ib_j - \frac{N^2}{n_1n_2}z_ib_j - \frac{N^2}{n_1n_2}b_iz_j\right) \\
	=& \mathbb{E}\left(\frac{N^2}{n_2^2}z_iz_j + \frac{N^2}{n_1^2}b_ib_j - 2\right)~.
	\end{align*}
	
	If $i = j$,
	\begin{align*}
	&\mathbb{E}\left(\frac{N^2}{n_2^2}z_iz_j + \frac{N^2}{n_1^2}b_ib_j - 2\right) 
	= \mathbb{E}\left(\frac{N^2}{n_2^2}z_i + \frac{N^2}{n_1^2}b_i - 2\right) \\
	&= \frac{N}{n_2} + \frac{N}{n_1} - 2~.
	\end{align*}
	
	If $i \neq j$,
	\begin{align*}
	&\mathbb{E}\left(\frac{N^2}{n_2^2}z_iz_j + \frac{N^2}{n_1^2}b_ib_j - 2\right) \\
	=&\frac{N^2}{n_2^2} \frac{n_2}{N} \frac{n_2-1}{N-1} + \frac{N^2}{n_1^2} \frac{n_1}{N}\frac{n_1-1}{N-1} - 2 \\
	=& \frac{N}{n_2}\frac{n_2-1}{N-1} + \frac{N}{n_1}\frac{n_1-1}{N-1} - 2~.
	\end{align*}
	
	\begin{align*}
	B_l &= \left(\frac{N}{n_2} + \frac{N}{n_1} - 2\right)\left(\sum_i \mathbb{E}\betab_i^2\right) \\
	&~~+ 2\left(\frac{N}{n_2}\frac{n_2-1}{N-1} + \frac{N}{n_1}\frac{n_1-1}{N-1} - 2\right)\left(\sum_{i<j}\mathbb{E}\betab_{li}\betab_{lj}\right) \\
	&= \left(\frac{N}{n_2} + \frac{N}{n_1} - 2\right)\sum_{ij} \mathbb{E}\betab_{li}\betab_{lj} \\
	&~~+ 2\left(\frac{N}{n_2}\frac{n_2-1}{N-1} + \frac{N}{n_1}\frac{n_1-1}{N-1} - \frac{N}{n_2} - \frac{N}{n_1}\right)\sum_{i<j}\mathbb{E}\betab_{li}\betab_{lj} \\
	&= \left(\frac{N}{n_2} + \frac{N}{n_1} - 2\right)\sum_{ij} \mathbb{E}\betab_{li}\betab_{lj} \\
	&~~- 2\left(\frac{N(N-n_2)}{n_2(N-1)} + \frac{N(N-n_1)}{n_1(N-1)}\right)\sum_{i<j}\mathbb{E}\betab_{li}\betab_{lj}
	\end{align*}
	
	Similarly, for $C_l$, we have
	\begin{align*}
	C_l &= 2\mathbb{E}\sum_i\sum_j\alphab_{li}\betab_{lj}\left(\frac{N}{n_2}z_j - \frac{N^2}{n_2^2}z_iz_j \right.\\
	&~~\left.- \frac{N}{n_1}b_j + \frac{N^2}{n_1n_2}z_ib_j\right) \\
	&=2\mathbb{E}\sum_i \alphab_{li} \betab_{lj}\left(1 - \frac{N}{n_2}\right) \\
	&~~+ 2\mathbb{E}\sum_{i\neq j}\alphab_{li}\betab_{lj}\left(1 - \frac{N}{n_2}\frac{n_2-1}{N-1}\right) \\
	&= 2\sum_{ij}\mathbb{E}\alphab_{li}\betab_{lj}\left(1 - \frac{N}{n_2}\right) \\
	&~~+ 4\frac{N(N-n_2)}{n_2(N-1)}\sum_{i \leq j}\mathbb{E}\alphab_{li}\betab_{lj}~.
	\end{align*}
	This completes the proof.
	
\end{proof}

The following derivations verify an intuition: with larger minibatch size $n_1$, we can get smaller MSEs. This is not directly relevant to the proof of Theorem~\ref{theo:main}, readers can choose to skip this part without affecting the flow of the proof.

To show that, let's first look at the term $B_l + C_l$ defined above. We have that

\begin{align*}
B_l &+ C_l = \left(\frac{N}{n_1} + \frac{N}{n_2} - 2\right)\sum_{ij}\mathbb{E}\betab_{li}\betab_{lj} \\
&- 2\sum_{i < j}\mathbb{E}\betab_{li}\betab_{lj}\left(\frac{N(N-n_2)}{n_2(N-1)} + \frac{N(N-n_1)}{n_1(N-1)}\right) \\
&+ 2\left(1 - \frac{N}{n_2}\right)\sum_{ij}\mathbb{E}\alphab_{li}\betab_{lj} + 4\frac{N(N-n_2)}{n_2(N-1)}\sum_{i<j}\mathbb{E}\alphab_{li}\betab_{lj}~.
\end{align*}

When $n_1 = N$, the case of using the whole data to calculate {\em old} gradient $\tilde{g}$ \cite{DubeyRPSX:nips16}, we have
\begin{align*}
&B_l + C_l \\
=& \left(\frac{N}{n_2} - 1\right)\sum_{ij}\mathbb{E}\betab_{li}\betab_{lj} - 2\frac{N(N-n_2)}{n_2(N-1)}\sum_{i < j}\mathbb{E}\betab_{li}\betab_{lj} \\
&+ 2\left(1 - \frac{N}{n_2}\right)\sum_{ij}\mathbb{E}\alphab_{li}\betab_{lj} + 4\frac{N(N-n_2)}{n_2(N-1)}\sum_{i<j}\mathbb{E}\alphab_{li}\betab_{lj} \\
=&\left(\frac{N}{n_2} - 1\right)\sum_{ij}\left(\mathbb{E}\betab_{li}\betab_{lj} - 2\mathbb{E}\alphab_{li}\betab_{lj}\right) \\
&~~+ 2\frac{N(N-n_2)}{n_2(N-1)}\sum_{i < j}\left(2\mathbb{E}\alphab_{li}\betab_{lj} - \mathbb{E}\betab_{li}\betab_{lj}\right) \\
=& \frac{(N-n_2)N^2}{n_2}\left[\frac{1}{N^2}\sum_{ij}\left(\mathbb{E}\betab_{li}\betab_{lj} - 2\mathbb{E}\alphab_{li}\betab_{lj}\right)\right. \\
&\left.+ \frac{2}{N(N-1)}\sum_{i<j}\left(2\mathbb{E}\alphab_{li}\betab_{lj} - \mathbb{E}\betab_{li}\betab_{lj}\right)\right] \triangleq M_{BC}
\end{align*}

When $n_1 \neq N$, we have
\begin{align*}
&B_l + C_l \\
=& M_{BC} + \frac{N-n_1}{n_1}\sum_{ij}\mathbb{E}\betab_{li}\betab_{lj} - 2\frac{N(N-n_1)}{n_1(N-1)}\sum_{i<j}\mathbb{E}\betab_{li}\betab_{lj} \\
=& M_{BC} + \frac{(N-n_1)N^2}{n_1}\left[\frac{1}{N^2}\sum_{ij}\mathbb{E}\betab_{li}\betab_{lj}\right.\\
&~~\left.- \frac{2}{N(N-1)}\sum_{i<j}\mathbb{E}\betab_{li}\betab_{lj}\right]~.
\end{align*}

According to Lemma~\ref{lem:exp_alpha}, we have that $\left[\frac{1}{N^2}\sum_{ij}\mathbb{E}\betab_{li}\betab_{lj} - \frac{2}{N(N-1)}\sum_{i<j}\mathbb{E}\betab_{li}\betab_{lj}\right] \geq 0$. As a result, the value of $B_l + C_l$ in the case of $n_1 \neq N$ is larger than that in the case of $n_1 = N$, resulting in a larger MSE bound. 

Now it is ready to prove Theorem~\ref{theo:main}.

\begin{proof}[Proof of Theorem~\ref{theo:main}]
	
	Note that term $A_l$ corresponds to the $\mathbb{E}\Delta V_l$ term in standard SG-MCMC, where no variance reduction is performed. As a result, in order to prove that vrSG-MCMC induces a lower MSE bound, what remains to be shown is to prove $B_l + C_l \leq 0$.
	
	First, let us simplify term $C_l$, which results in:
	{\small\begin{align*}
		&C_l = 2\left(1 - \frac{N}{n_2}\right)\sum_{ij}\mathbb{E}\alphab_{li}\betab_{lj} + 4\frac{N(N-n_2)}{n_2(N-1)}\sum_{i<j}\mathbb{E}\alphab_{li}\betab_{lj} \\
		=& 2\left(1 - \frac{N}{n_2}\right)\sum_{ij}\mathbb{E}\betab_{li}\betab_{lj} + 4\frac{N(N-n_2)}{n_2(N-1)}\sum_{i<j}\mathbb{E}\betab_{li}\betab_{lj} \\
		&+ 2\left(1 - \frac{N}{n_2}\right)N\cdot O(mh) + 4\frac{N(N-n_2)}{n_2(N-1)}\frac{N(N-1)}{2}\cdot O(mh) \\
		=& 2\left(1 - \frac{N}{n_2}\right)\sum_{ij}\mathbb{E}\betab_{li}\betab_{lj} + 4\frac{N(N-n_2)}{n_2(N-1)}\sum_{i<j}\mathbb{E}\betab_{li}\betab_{lj} + O(mh)~,
		\end{align*}}
	where the second equality is obtained by applying the independence property of $\alphab_{li}$ and $\betab_{lj}$, as well as the result from Lemma~\ref{lem:alpha_beta}.
	Consequently, $B_l + C_l$ can be simplified as
	
	\begin{align*}
	B_l + C_l &= \left(\frac{N}{n_1} - \frac{N}{n_2}\right)\sum_{ij}\mathbb{E}\betab_{li}\betab_{lj} - 2\left(\frac{N(N-n_2)}{n_2(N-1)} \right.\\
	&~~\left.- \frac{N(N-n_1)}{n_1(N-1)}\right)\sum_{i<j}\mathbb{E}\betab_{li}\betab_{lj} + O(mh)
	\end{align*}
	
	By substituting the above formula in to the MSE bound in Lemma~\ref{lem:biasmse}, we have that:
	\begin{align*}
	\mathbb{E}\left(\hat{\phi}_L - \bar{\phi}\right)^2 = O \left(\frac{A_M}{L} + \frac{1}{Lh} + h^{2K} + \frac{mh}{L} - \frac{\lambda_M}{L}\right)~.
	\end{align*}
	
	To further simplify the $B_l+C_l$ term, we have
	\begin{align*}
	&B_l + C_l = O(mh) + \frac{N^3(n_2 - n_1)}{n_1n_2}\frac{1}{N^2}\sum_{ij}\mathbb{E}\betab_{li}\betab_{lj} \\
	-& 2\frac{N^2(N-n_2)n_1 - N^2(N - n_1)n_2}{n_1n_2} \frac{1}{N(N - 1)}\sum_{i<j}\mathbb{E}\betab_{li}\betab_{lj} \\
	&= \frac{N^3(n_2 - n_1)}{n_1n_2}\left(\frac{1}{N^2}\sum_{ij}\mathbb{E}\betab_{li}\betab_{lj} \right.\\
	&~~\left.- 2\frac{1}{N(N - 1)}\sum_{i<j}\mathbb{E}\betab_{li}\betab_{lj} \right) + O(mh)
	\end{align*}
	
	According to Lemma~\ref{lem:exp_alpha}, $\left[\frac{1}{N^2}\sum_{ij}\mathbb{E}\betab_{li}\betab_{lj} - 2\frac{1}{N(N - 1)}\sum_{i<j}\mathbb{E}\betab_{li}\betab_{lj} \right] \geq 0$. Consequently, we have $B_l + C_l \leq 0$ up to an order of $O(mh)$. This completes the proof of $\lambda_M \geq 0$.
	
\end{proof}

\section{Discussion of the Theoretical Results of Dubey {\it et al.}\! 2016}
\label{app:discuss}

\cite{DubeyRPSX:nips16} proved the following MSE bound for SVRG-LD, by extending results of the standard SG-MCMC \cite{ChenDC:NIPS15}:

\begin{align}\label{eq:svrgld}
&\mathbb{E}\left(\hat{\phi}_L - \bar{\phi}\right)^2 = O\left(\frac{N^2\min\{2\sigma^2, m^2(D^2h^2\sigma^2+hd)\}}{nL} \right.\nonumber\\
&\left.~~+ \frac{1}{Lh} + h^2\right)~,
\end{align}
where $(d, D, \sigma)$ are constants related to the data and the true posterior. Using similar techniques (shown in the paper), the MSE bound for SGLD is given by
\begin{align}\label{eq:sgld}
\mathbb{E}\left(\hat{\phi}_L - \bar{\phi}\right)^2 = O\left(\frac{N^2\sigma^2}{nL} + \frac{1}{Lh} + h^2\right)~.
\end{align}

From the proof of their theorem (eq.~13 in their appendix), we note that the constant ``2'' inside the ``min'' in \eqref{eq:svrgld} is not negligible when comparing to the bound for SGLD. As a result, the bound associated with this term is strictly larger than the bound for SGLD. This means that to compared with SGLD, the MSE bound for SVRG-LD should be written in the form of 
\begin{align}\label{eq:svrgld1}
\mathbb{E}\left(\hat{\phi}_L - \bar{\phi}\right)^2
= O\left(\frac{N^2m^2(D^2h^2\sigma^2+hd)}{nL} + \frac{1}{Lh} + h^2\right)~.
\end{align}

As a result, the comparison between \eqref{eq:svrgld1} and \eqref{eq:sgld} becomes more complicated, because it now depends on other parameters such as the stepsize. It is thus not clear if SVRG-LD would improve the MSE bound of SGLD.

In contrast, our theoretical results (Theorem~\ref{theo:main}) guarantee an improvement of vrSG-MCMC over the correspond SG-MCMC, which is a stronger result than that in \cite{DubeyRPSX:nips16}.

\section{Additional Experimental Results}\label{app:exp}

%
%

\subsection{Supplemental results on logistic regression and deep learning}

We plot the corresponding results in terms of number of passes through data versus training error/loss in Figure~\ref{fig:fnn_cnn_train} and Figure~\ref{fig:n1_train}.


\begin{figure*}[ht]
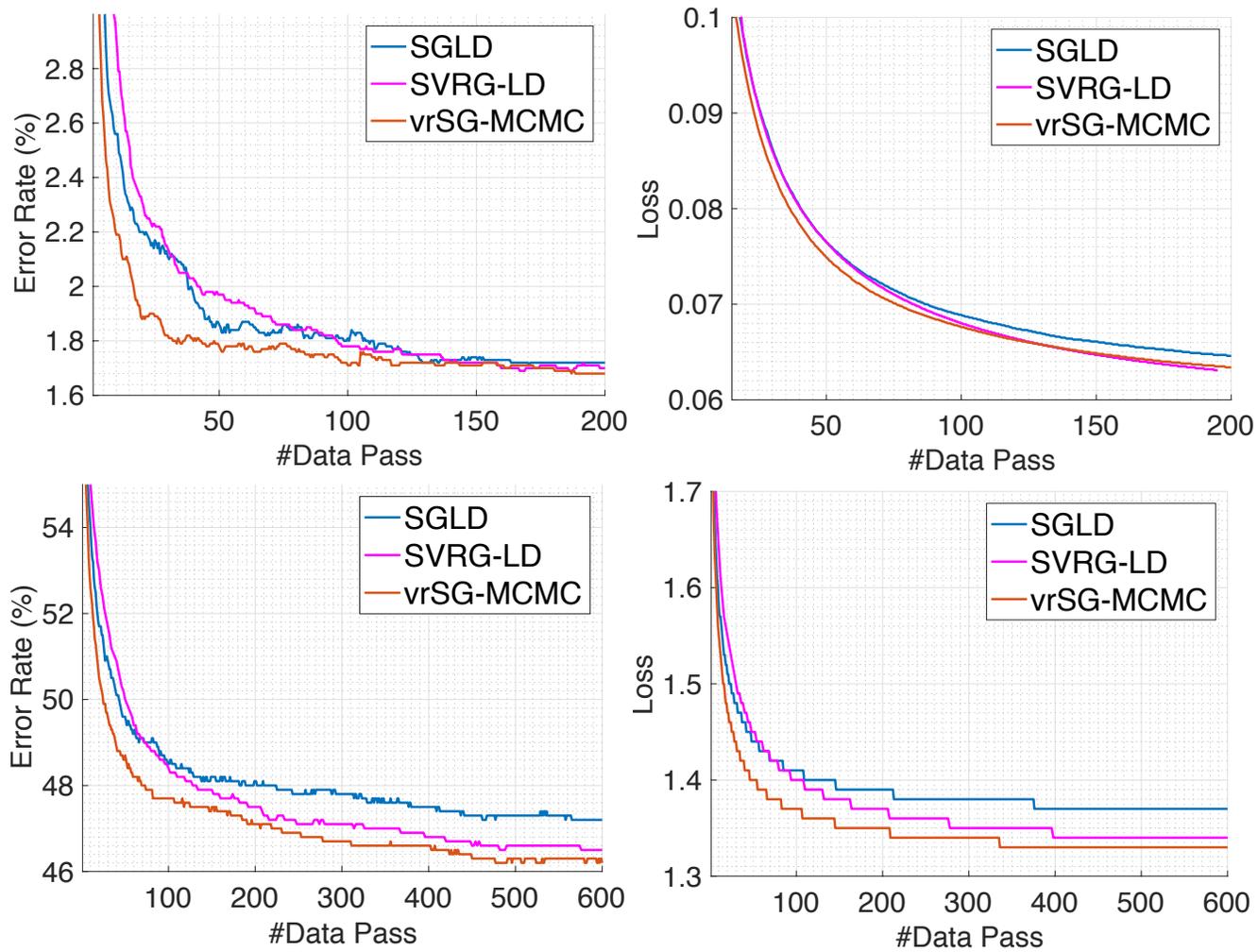

	\begin{center}
		\begin{minipage}{0.49\linewidth}
			\includegraphics[width=\columnwidth]{MNIST_FFNN_Test_Error}
		\end{minipage}
		\begin{minipage}{0.49\linewidth}
			\includegraphics[width=\columnwidth]{MNIST_FFNN_Test_Loss}
		\end{minipage}
		\begin{minipage}{0.49\linewidth}
			\includegraphics[width=\columnwidth]{CIFAR10_FFNN_Test_Error}
		\end{minipage}
		\begin{minipage}{0.49\linewidth}
			\includegraphics[width=\columnwidth]{CIFAR10_FFNN_Test_Loss}
		\end{minipage}
		\caption{Number of passes through data vs. testing error (left) / loss (right) on MNIST (top) and CIFAR-10 (bottom) datasets.}
		\label{fig:fnn1}
	\end{center}
	\vskip -0.2in
\end{figure*} 

\begin{figure*}[ht]
	\begin{center}
		\begin{minipage}{0.49\linewidth}
			\includegraphics[width=\columnwidth]{CIFAR10_CNN_L4_Test_Error}
		\end{minipage}
		\begin{minipage}{0.49\linewidth}
			\includegraphics[width=\columnwidth]{CIFAR10_CNN_L4_Test_Loss}
		\end{minipage}
		\begin{minipage}{0.49\linewidth}
			\includegraphics[width=\columnwidth]{CIFAR10_ResNet20_Test_Error}
		\end{minipage}
		\begin{minipage}{0.49\linewidth}
			\includegraphics[width=\columnwidth]{CIFAR10_ResNet20_Test_Loss}
		\end{minipage}
		\caption{Number of passes through data vs. testing error (left) / loss (right) with CNN-4 (top) and ResNet (bottom) on CIFAR-10.}
		\label{fig:cnn1}
	\end{center}
	\vskip -0.2in
\end{figure*}

\begin{figure*}[!h]
	\centering
	\subfigure[MNIST-FFNN-Train-Error]{\label{fig:a}\includegraphics[width=.49\linewidth]{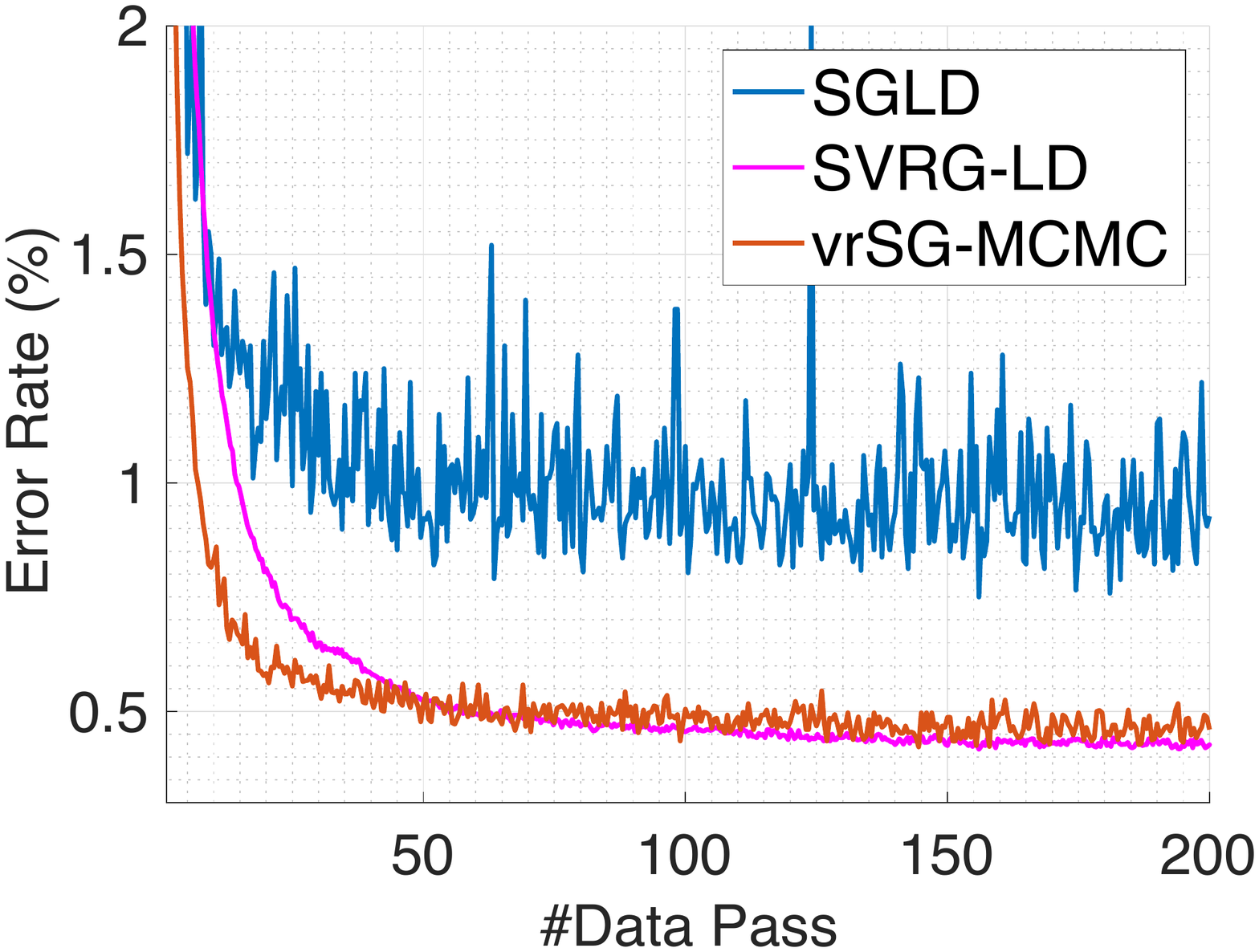}}
	\subfigure[MNIST-FFNN-Train-Loss]{\label{fig:b}\includegraphics[width=.49\linewidth]{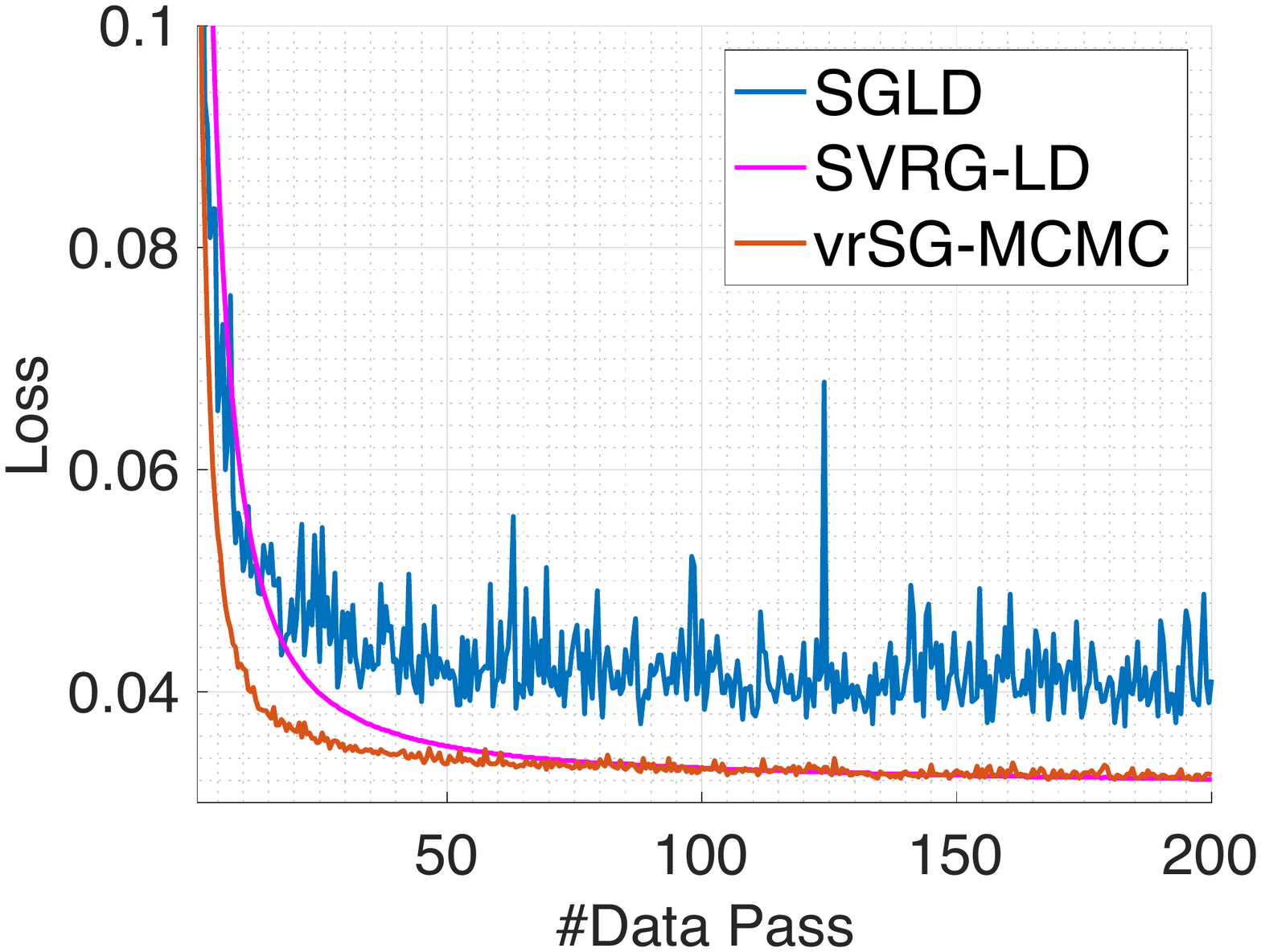}}
	\subfigure[CIFAR10-FFNN-Train-Error]{\label{fig:c}\includegraphics[width=.49\linewidth]{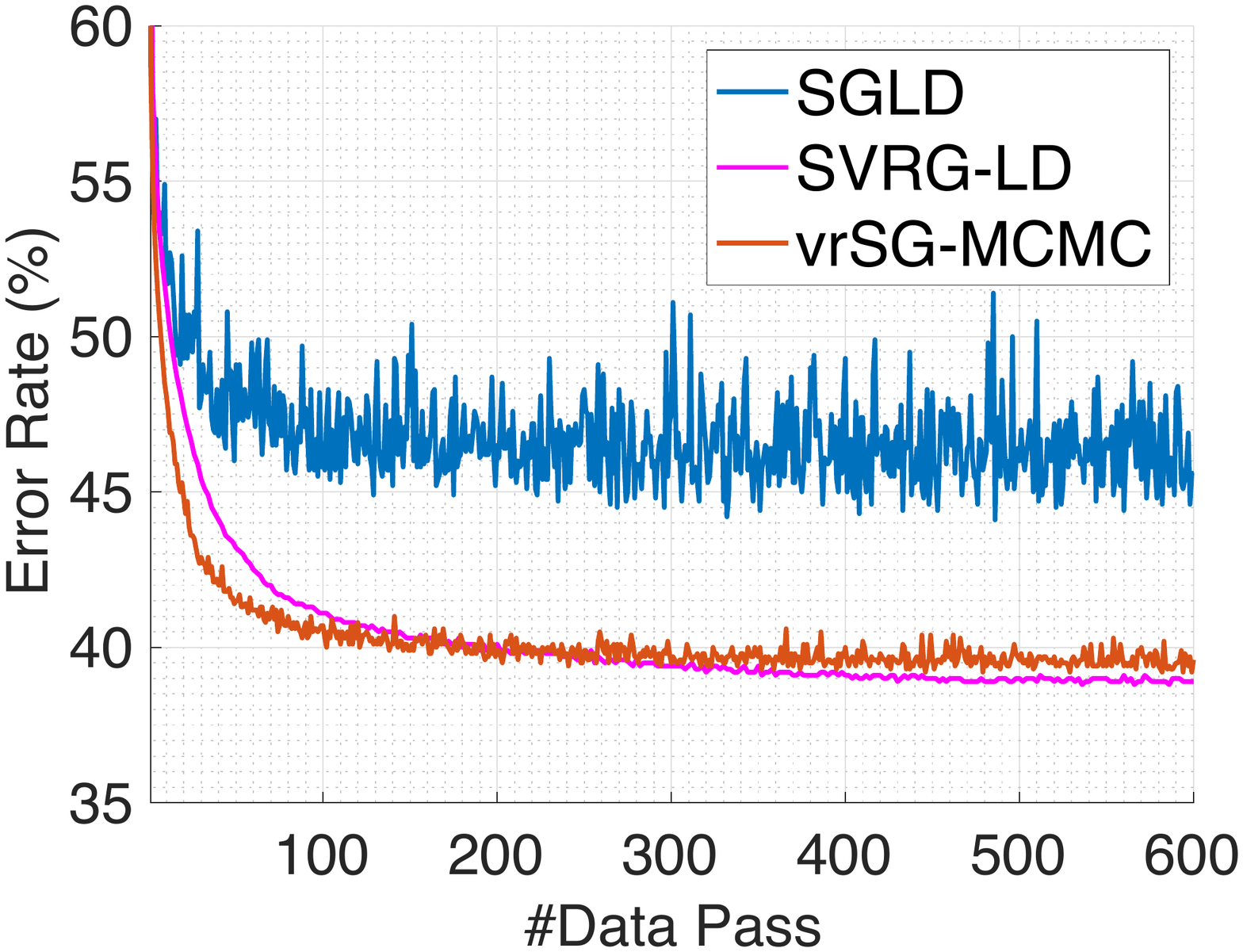}}
	\subfigure[CIFAR10-FFNN-Train-Loss]{\label{fig:d}\includegraphics[width=.49\linewidth]{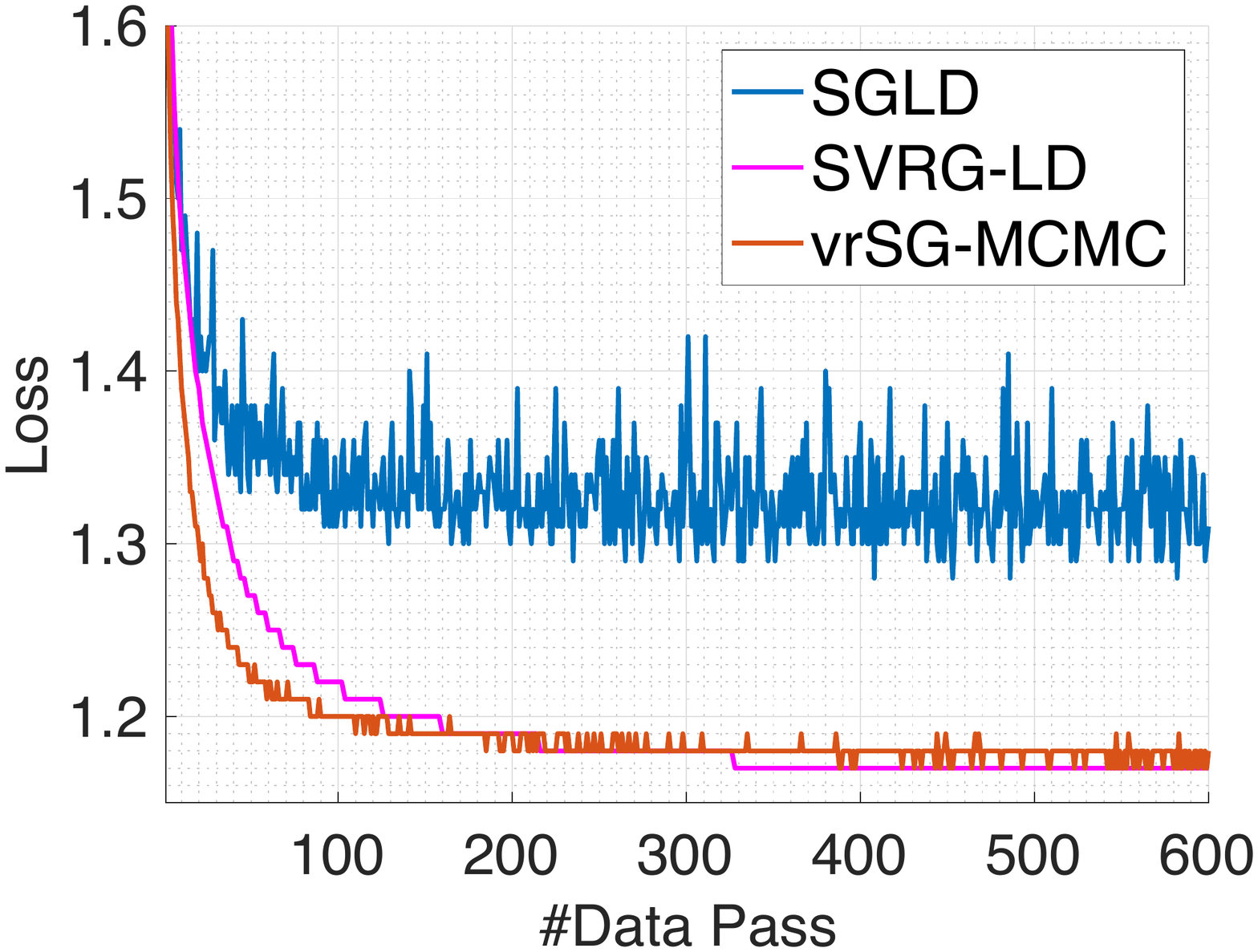}}
	\subfigure[CIFAR10-CNN-L4-Test-Error]{\label{fig:e}\includegraphics[width=.49\linewidth]{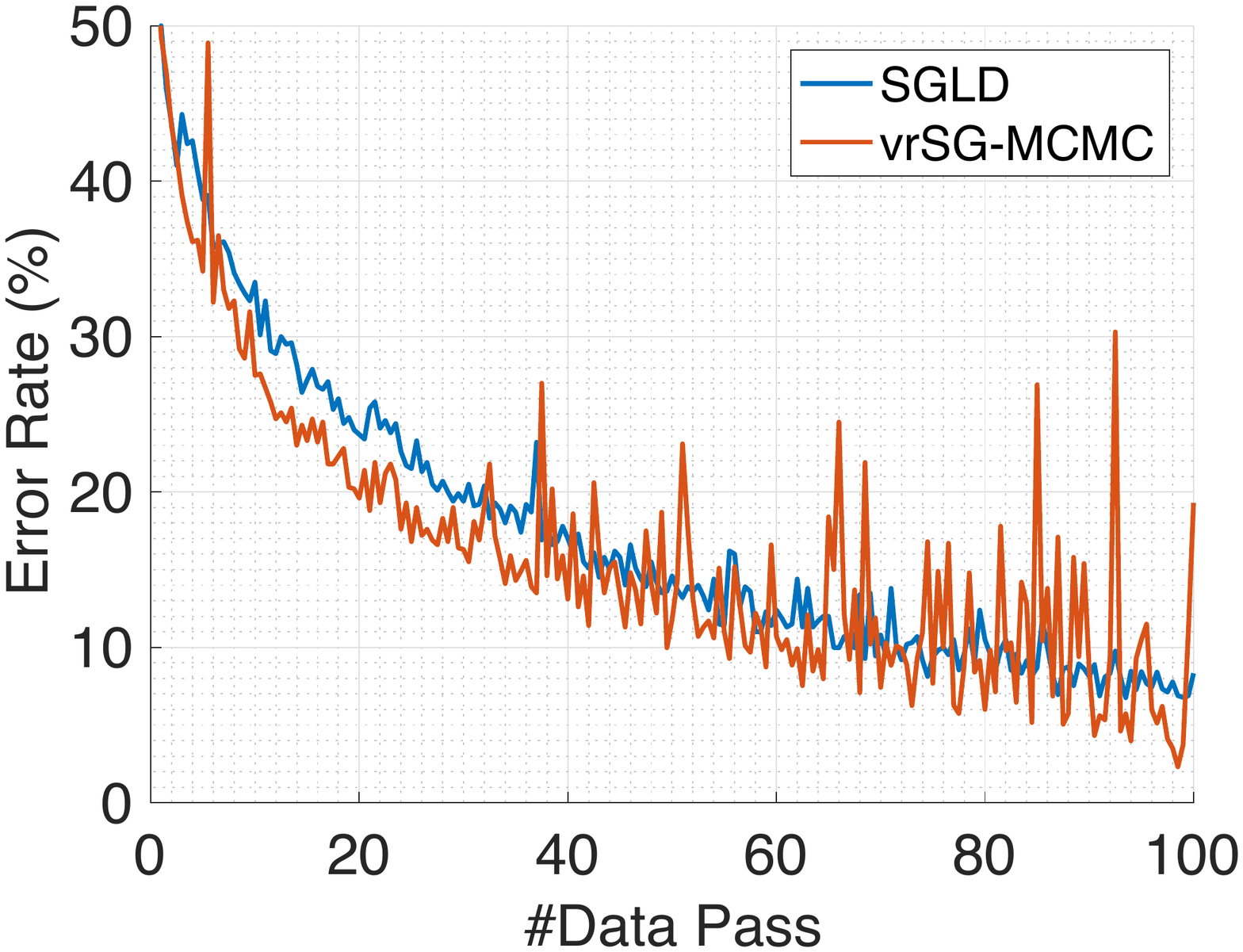}}
	\subfigure[CIFAR10-CNN-L4-Test-Loss]{\label{fig:f}\includegraphics[width=.49\linewidth]{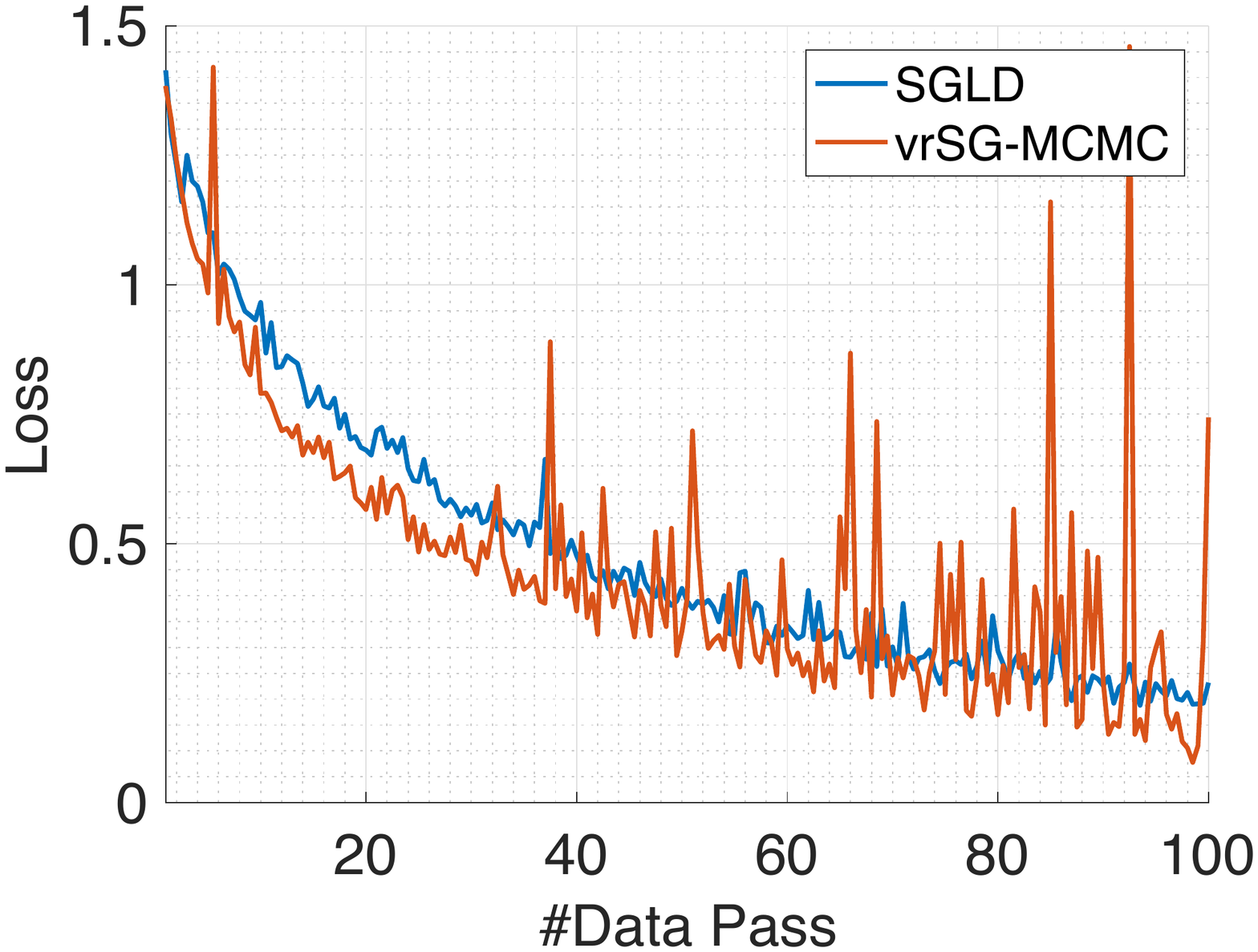}}
	\caption{Number of passes through data vs. training error / loss on MNIST and Cifar-10 datasets.}
	\label{fig:fnn_cnn_train}
\end{figure*}

\begin{figure*}[ht]
	\begin{center}
		\begin{minipage}{0.48\linewidth}
			\includegraphics[width=\columnwidth]{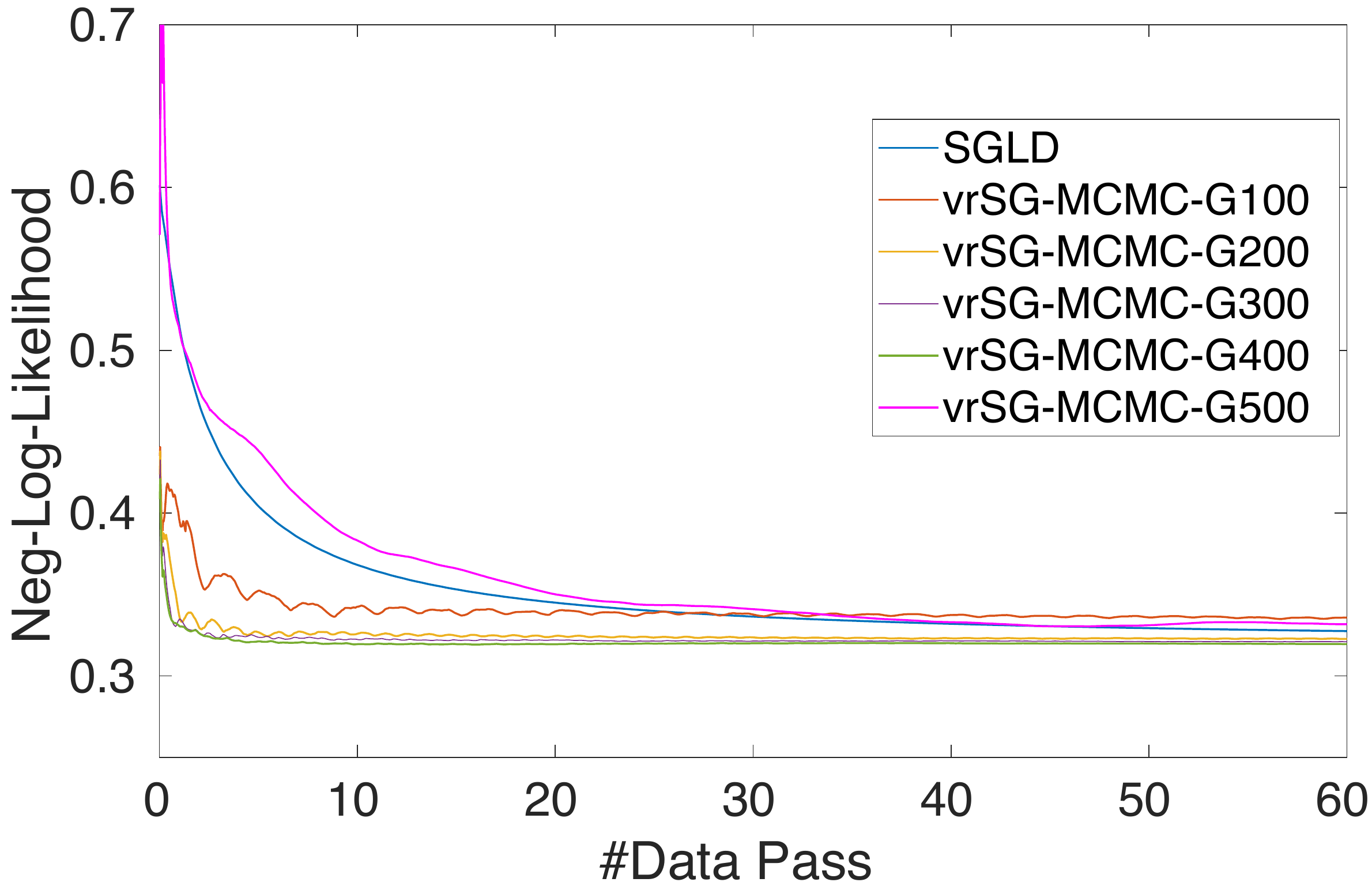}
		\end{minipage} \\\vspace{1cm}
		\begin{minipage}{0.48\linewidth}
			\includegraphics[width=\columnwidth]{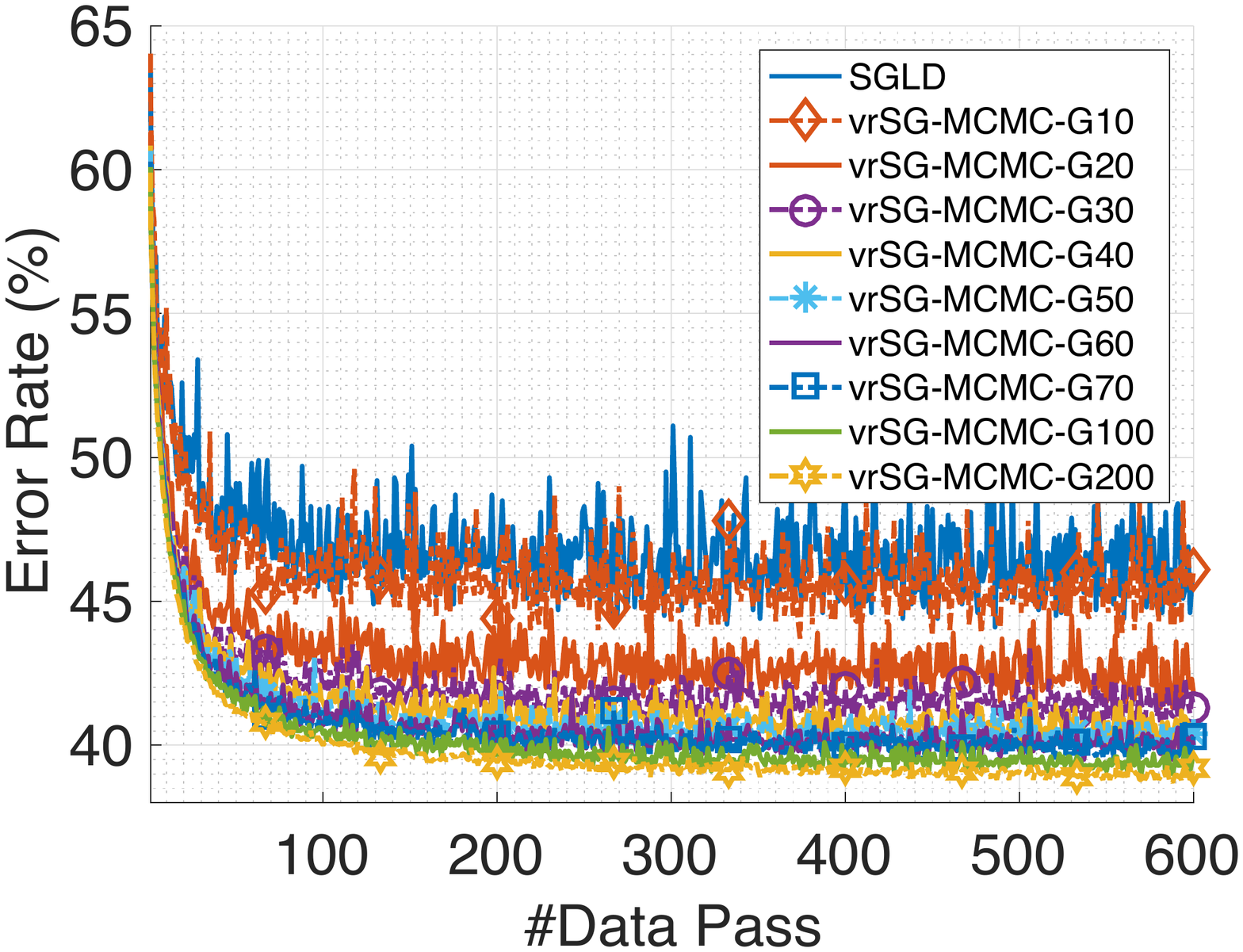}
		\end{minipage}
		\begin{minipage}{0.48\linewidth}
			\includegraphics[width=\columnwidth]{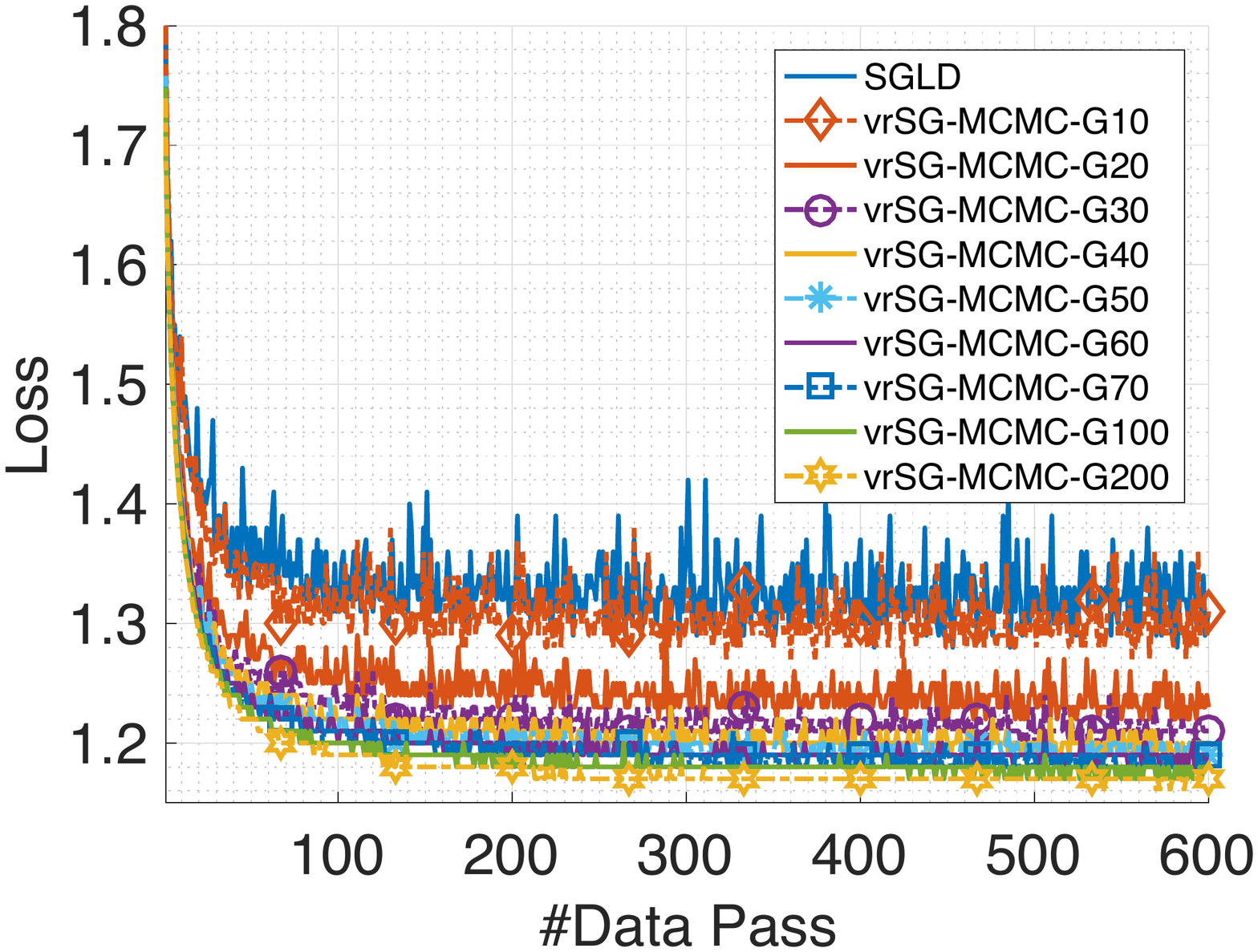}
		\end{minipage}\vskip -0.1in
		\caption{Top: Number of passes through data vs. testing negative log-likelihood on the Pima dataset for Bayesian logistic regression. Bottom: Number of passes through data vs. training errors (left) / loss (right) on the CIFAR-10 dataset. All are with varying $n_1$ values.}
		\label{fig:n1_train}
	\end{center}
\end{figure*}

\end{document}